%%%%%%%% ICML 2026 EXAMPLE LATEX SUBMISSION FILE %%%%%%%%%%%%%%%%%

\documentclass{article}
% \PassOptionsToPackage{pagebackref=true}{hyperref}
% Recommended, but optional, packages for figures and better typesetting:
\usepackage{microtype}
\usepackage{graphicx}
\usepackage{subfigure}
\usepackage{booktabs} % for professional tables
\usepackage{xspace}
% hyperref makes hyperlinks in the resulting PDF.
% If your build breaks (sometimes temporarily if a hyperlink spans a page)
% please comment out the following usepackage line and replace
% \usepackage{icml2026} with \usepackage[nohyperref]{icml2026} above.
\usepackage[pagebackref=true]{hyperref}
\usepackage{algorithm}
\usepackage{algorithmic}
\usepackage{amsmath}
\usepackage{amssymb}
\usepackage{enumitem}
\usepackage{makecell}
\usepackage{multirow}

% Attempt to make hyperref and algorithmic work together better:
% \newcommand{\theHalgorithm}{\arabic{algorithm}}

% Use the following line for the initial blind version submitted for review:
% \usepackage{icml2026}

% For preprint, use
% \usepackage[preprint]{icml2026}

% If accepted, instead use the following line for the camera-ready submission:
\usepackage[accepted]{icml2026}

\newcommand{\pt}[1]{\left(#1\right)}
\newcommand{\bk}[1]{\left[#1\right]}
\newcommand{\cl}[1]{\left\{#1\right\}}
\newcommand{\ag}[1]{\left\langle#1\right\rangle}
\newcommand{\fl}[1]{\left\lfloor#1\right\rfloor}

\newcommand*{\norm}[1]{\left\lVert #1 \right\rVert}
\usepackage{mleftright}

\newcommand*{\NTK}{\mathtt{NTK}}

\newcommand*{\emprisk}{\hat{\mathcal{L}}_{n}}
\newcommand{\poly}{\mr{poly}}

\usepackage{amsmath}
\usepackage{amssymb}
\usepackage{mathtools}
\usepackage{amsthm}
\usepackage[table]{xcolor}

% if you use cleveref..
% \usepackage[capitalize,noabbrev]{cleveref}
% \crefname{Corollary}{Corollary}{Cors.}

\newcommand*{\bm}{\boldsymbol}
\newcommand{\mb}{\mathbb}
\newcommand{\mbk}[1]{\mleft[#1\mright]}
\newcommand{\mpt}[1]{\mleft(#1\mright)}

\newcommand{\wh}{\widehat}
\newcommand{\mr}{\mathrm}

\newcommand{\widesim}{\mathrel{\scalebox{2.2}[1]{\hbox{$\sim$}}}}

\newcommand{\iid}{\mathrel{\stackrel{\mr{i.i.d.}}{\widesim}}}

\newcommand{\fRN}{\hat{f}^{\mathrm{ResNet}}_t}

\newcommand*{\excrisk}{\mathcal{E}}

\def\R{\mathbb R}
\def\E{\mathbb E}

\def\X{\boldsymbol{X}}
\def\l{\left(}
\def\r{\right)}

\def\x{\boldsymbol{x}}

\def\S{\mathbb S}

\def\vX{\boldsymbol{X}}
\def\vx{\boldsymbol{x}}
\def\btheta{\boldsymbol{\theta}}
\def\vW{\boldsymbol{W}}
\def\vv{\boldsymbol{v}}
\def\vV{\boldsymbol{V}}
\def\vA{\boldsymbol{A}}
\def\bbeta{\boldsymbol{\beta}}

%% For commenting in draft

\usepackage[normalem]{ulem}

\newcommand{\fRNTK}{\hat{f}^{\mathrm{RNTK}}_t}

\newcommand*{\abs}[1]{\left\lvert #1 \right\rvert}
\newcommand{\bs}{\boldsymbol}
\def\me{\mathrm{e}}
% if you use cleveref..
\usepackage{aliascnt}
\usepackage[capitalize,noabbrev]{cleveref}
\crefname{theorem}{Theorem}{Theorems}
\crefname{corollary}{Corollary}{Corollaries}
\crefname{lemma}{Lemma}{Lemmas}
\crefname{equation}{Equation}{Equations}
\usepackage{mleftright}
\usepackage{wrapfig}
%%%%%%%%%%%%%%%%%%%%%%%%%%%%%%%%
% THEOREMS
%%%%%%%%%%%%%%%%%%%%%%%%%%%%%%%%
\theoremstyle{plain}
\newtheorem{theorem}{Theorem}[section]

% --- 修改 Proposition ---
\newaliascnt{proposition}{theorem} % 新建影子计数器
\newtheorem{proposition}[proposition]{Proposition} % 使用影子
\aliascntresetthe{proposition} % 重置显示格式
\crefname{proposition}{Proposition}{Propositions} % 命名

% --- 修改 Lemma ---
\newaliascnt{lemma}{theorem}
\newtheorem{lemma}[lemma]{Lemma}
\aliascntresetthe{lemma}
\crefname{lemma}{Lemma}{Lemmas}

\theoremstyle{definition}

% --- 修改 Corollary (你最关心这个) ---
\newaliascnt{corollary}{theorem} % 新建名为 corollary 的影子计数器，跟随 theorem
\newtheorem{corollary}[corollary]{Corollary} % 使用 corollary 计数器定义环境
\aliascntresetthe{corollary} % 让它显示的编号格式跟 theorem 一样
\crefname{corollary}{Corollary}{Corollaries} % 告诉 cleveref 它的名字

% --- 修改 Definition ---
\newaliascnt{definition}{theorem}
\newtheorem{definition}[definition]{Definition}
\aliascntresetthe{definition}
\crefname{definition}{Definition}{Definitions}

% --- 修改 Assumption ---
\newaliascnt{assumption}{theorem}
\newtheorem{assumption}[assumption]{Assumption}
\aliascntresetthe{assumption}
\crefname{assumption}{Assumption}{Assumptions}

\theoremstyle{remark}

% --- 修改 Remark ---
\newaliascnt{remark}{theorem}
\newtheorem{remark}[remark]{Remark}
\aliascntresetthe{remark}
\crefname{remark}{Remark}{Remarks}

% 单独补充 theorem 的命名 (因为它没有用 alias)
\crefname{theorem}{Theorem}{Theorems}
\crefname{equation}{Equation}{Equations}

\newcommand{\EqualEmail}{\textsuperscript{*}Equal Contribution. Co-first Author Email: Zixiong Yu (Work begun during Ph.D. at Tsinghua University; completed during postdoc at Huawei) \texttt{<yuzx19@tsinghua.org.cn>}; Guhan Chen \texttt{<chen-gh23@mails.tsinghua.edu.cn>}.}

% \input{math_symbol}

% Todonotes is useful during development; simply uncomment the next line
%    and comment out the line below the next line to turn off comments
%\usepackage[disable,textsize=tiny]{todonotes}
\usepackage[textsize=tiny]{todonotes}

% The \icmltitle you define below is probably too long as a header.
% Therefore, a short form for the running title is supplied here:
\icmltitlerunning{Branch Scaling Manifests as Implicit Architectural Regularization}

\begin{document}

\twocolumn[
  \icmltitle{Branch Scaling Manifests as Implicit Architectural Regularization \\for Improving Generalization in Overparameterized ResNets}

  % It is OKAY to include author information, even for blind submissions: the
  % style file will automatically remove it for you unless you've provided
  % the [accepted] option to the icml2026 package.

  % List of affiliations: The first argument should be a (short) identifier you
  % will use later to specify author affiliations Academic affiliations
  % should list Department, University, City, Region, Country Industry
  % affiliations should list Company, City, Region, Country

  % You can specify symbols, otherwise they are numbered in order. Ideally, you
  % should not use this facility. Affiliations will be numbered in order of
  % appearance and this is the preferred way.

\icmlsetsymbol{equal}{*}
\icmlsetsymbol{correspondence}{\dag}
% \icmlsetsymbol{first}{*}
% \icmlsetsymbol{co-first}{\dag}
\icmlsetsymbol{correspondence}{\ddag}
 
\begin{icmlauthorlist}
\icmlauthor{Zixiong Yu}{equal,hua,thu}
\icmlauthor{Guhan Chen}{equal,thu}
\icmlauthor{Jianfa Lai}{thu}
\icmlauthor{Bohan Li}{thu,ku}
\icmlauthor{Songtao Tian}{correspondence,thu}
%\icmlauthor{}{sch}
%\icmlauthor{}{sch}
%\icmlauthor{}{sch}
\end{icmlauthorlist}

\icmlaffiliation{hua}{Huawei Large Model Data Technology Lab, Shenzhen}%, China}
\icmlaffiliation{thu}{Tsinghua University, Beijing}%, China}
\icmlaffiliation{ku}{Kyoto University, Kyoto}%, Japan}
% \icmlcorrespondingauthor{Zixiong Yu}{\texttt{yuzx19@tsinghua.org.cn}}
\icmlcorrespondingauthor{Songtao Tian}{\texttt{tiansongtao.2020@tsinghua.org.cn}}
  \icmlkeywords{ResNet, Uniform convergence, Neural tangent kernel}
  
  % \icmlsetsymbol{equal}{*}

  % \begin{icmlauthorlist}
  %   \icmlauthor{Firstname1 Lastname1}{equal,yyy}
  %   \icmlauthor{Firstname2 Lastname2}{equal,yyy,comp}
  %   \icmlauthor{Firstname3 Lastname3}{comp}
  %   \icmlauthor{Firstname4 Lastname4}{sch}
  %   \icmlauthor{Firstname5 Lastname5}{yyy}
  %   \icmlauthor{Firstname6 Lastname6}{sch,yyy,comp}
  %   \icmlauthor{Firstname7 Lastname7}{comp}
  %   %\icmlauthor{}{sch}
  %   \icmlauthor{Firstname8 Lastname8}{sch}
  %   \icmlauthor{Firstname8 Lastname8}{yyy,comp}
  %   %\icmlauthor{}{sch}
  %   %\icmlauthor{}{sch}
  % \end{icmlauthorlist}

  % \icmlaffiliation{yyy}{Department of XXX, University of YYY, Location, Country}
  % \icmlaffiliation{comp}{Company Name, Location, Country}
  % \icmlaffiliation{sch}{School of ZZZ, Institute of WWW, Location, Country}

  % \icmlcorrespondingauthor{Firstname1 Lastname1}{first1.last1@xxx.edu}
  % \icmlcorrespondingauthor{Firstname2 Lastname2}{first2.last2@www.uk}

  % You may provide any keywords that you find helpful for describing your
  % paper; these are used to populate the "keywords" metadata in the PDF but
  % will not be shown in the document
  % \icmlkeywords{Machine Learning, ICML}

  \vskip 0.3in

]

% this must go after the closing bracket ] following \twocolumn[ ...

% This command actually creates the footnote in the first column listing the
% affiliations and the copyright notice. The command takes one argument, which
% is text to display at the start of the footnote. The \icmlEqualContribution
% command is standard text for equal contribution. Remove it (just {}) if you
% do not need this facility.

% Use ONE of the following lines. DO NOT remove the command.
% If you have no special notice, KEEP empty braces:
% \printAffiliationsAndNotice{}  % no special notice (required even if empty)
% Or, if applicable, use the standard equal contribution text:
% \printAffiliationsAndNotice{\first\\\cofirst}
\printAffiliationsAndNotice{\EqualEmail}
\begin{abstract}
Scaling factors in residual branches have emerged as a prevalent method for boosting neural network performance, especially in normalization-free architectures. While prior work has primarily examined scaling effects from an optimization perspective, this paper investigates their role in residual architectures through the lens of generalization theory. Specifically, we establish that wide residual networks (ResNets) with constant scaling factors become asymptotically unlearnable as depth increases. In contrast, when the scaling factor exhibits rapid depth-wise decay combined with early stopping, over-parameterized ResNets achieve minimax-optimal generalization rates.
% To establish this, we demonstrate that the generalization capability of wide ResNets can be approximated by the kernel regression associated with a specific kernel.
To establish this, we demonstrate that the generalization capability of wide ResNets can be approximated by kernel regression associated with the Neural Tangent Kernel (NTK).
Our theoretical findings are validated through experiments on synthetic data and real-world classification tasks, including MNIST and CIFAR-100.
\end{abstract}

\section{Introduction}
% In recent years, deep neural networks have become indispensable in various real-world domains, including computer vision \citep{he2016deep,wang2017residual,dosovitskiy2021an}, natural language processing \citep{devlin2019bertpretrainingdeepbidirectional,brown2020language}, and multimodal learning \citep{pmlr-v139-radford21a}. The empirical success of deep learning is driven by multiple advancements, including algorithmic innovations \citep{vaswani2017attention,schulman2017proximalpolicyoptimizationalgorithms}, increased computational power \citep{kaplan2020scalinglawsneurallanguage,hoffmann2022trainingcomputeoptimallargelanguage}, and the curation of high-quality datasets \citep{ye2025limo,rao2025data}. Among these, an important line of progress has come from architectural and technical components \citep{srivastava2015trainingdeepnetworks,kingma2017adammethodstochasticoptimization} that improve optimization stability and facilitate the effective training of deep models.

Recently, deep neural networks have become indispensable in various real-world domains, including computer vision \citep{dosovitskiy2021an}, natural language processing \citep{brown2020language}, and multimodal learning \citep{pmlr-v139-radford21a}. %, and autonomous agent systems \citep{schick2023toolformer,xu2026learningusetoolsjust}.
% and generative models \citep{Karras_2019_CVPR, rombach2022high}. 
The empirical success of deep learning largely stems from architectural innovations. A crucial aspect of these innovations lies in incorporating technical components designed to stabilize and accelerate the learning dynamics.

% The empirical success of deep learning stems from architectural innovations that incorporate crucial technical components to stabilize and accelerate learning dynamics.
% The empirical success of deep learning stems not only from architectural innovations but also from crucial technical components that stabilize and accelerate learning dynamics.

Among these technical components, 
% Within this line of progress, 
the introduction of residual blocks and skip connections has become a key mechanism for enabling deep architectures \citep{he2016deep}. However, the original residual networks (ResNets) still exhibit certain limitations, which become particularly pronounced in ultra-deep architectures \citep{szegedy2017inception}. Due to their strong dependency on residual branches, these networks tend to amplify minor parameter perturbations, leading to significant fluctuations in model outputs and consequently resulting in training instability \citep{liu2020understanding}.
Therefore, contemporary residual architectures are commonly integrated with normalization methods \citep{ioffe2015batch, ba2016layer} to address the aforementioned issues encountered during signal propagation.

While normalization is effective, its structural complexity and computational overhead have motivated the search for simpler alternatives. One representative direction is to replace or weaken normalization through residual branch scaling. For example, \citet{de2020batch} pointed out that certain normalization mechanisms in ResNets have an effect similar to downweighting the residual branch, since both improve signal propagation by adjusting the relative scale of additive paths. Based on this insight, they further proposed SkipInit \citep{de2020batch}. Following this line of thought, studies such as ReZero \citep{bachlechner2021rezero} and DeepNet \citep{wang2024deepnet} regard scaling factors as a more efficient or simplified design strategy. Meanwhile, works including Fixup \citep{zhang2019residual} and \citet{zhu2025transformers} adopt distinct approaches to replace or simplify normalization, while still reflecting the same underlying principle.

While the capability of residual branch scaling to enhance network performance has gained increasing recognition, the understanding of its underlying mechanisms remains fragmented. Existing research, including its original design intention, has predominantly focused on its advantages for optimization stability \citep{hayou2021stable}. Although its positive effects on generalization have been empirically validated \citep{brock2021high,touvron2021going}, theoretical comprehension in this domain remains notably underdeveloped. This study shifts the perspective by directly investigating the impact of scaling from the dimension of generalization capability. We theoretically demonstrate that reducing residual branch weights plays a critical role in preserving the learnability of over-parameterized ResNets, revealing a fundamental regularization mechanism previously overlooked in original architectural designs.

\subsection{Main Content and Contributions}
In this paper, we investigate how residual branch scaling factors affect the generalization capability of ResNets. To this end, we draw on insights from the theory of over-parameterized neural networks, particularly the Neural Tangent Kernel (NTK) framework, which has proven highly effective in analyzing neural network generalization. Our key contributions are summarized as follows:
% \vspace{-1mm}
\begin{enumerate}[nosep,itemsep=6pt]
 \item {\it Uniform Convergence of ResNet to Kernel Regression.} We first demonstrate that the training dynamics of wide ResNets converge uniformly over the entire compact input domain to those of kernel regression utilizing the Residual Neural Tangent Kernel (RNTK). As a direct and practically significant corollary, the generalization capability of wide ResNets can be effectively approximated by RNTK-based kernel regression. 
% \vspace{-1mm}
 \item {\it Influence of Scaling Factor on ResNets.} Building upon the uniform convergence, our analysis further reveals that wide ResNets exhibit poor generalization performance as network depth increases when the scaling factor $\alpha$ remains constant. In striking contrast, when $\alpha$ decays sufficiently rapidly with depth, i.e., $\alpha=L^{-\gamma}$, $\gamma\in(1/2,1]$, ResNets optimized via gradient descent with early stopping can achieve the minimax rate.
 % \vspace{-1mm}

\item {\it Residual Branch Scaling Mechanisms in Generalization.} 
Building upon these theoretical findings and experimental validation, we provide deeper insights into the role of residual branch scaling: Beyond its well-documented optimization benefits during training, it can directly contribute to enhanced generalization capability, particularly in deep architectures.
\end{enumerate}
% \vspace{-1mm}

% Our work distinguishes itself from existing literature (see \cref{subsec:Related works}) in the following aspect: While previous studies have explored the properties of over-parameterized networks and established preliminary links between wide ResNets and RNTK, to the best of our knowledge, this is the first work to directly establish their asymptotic equivalence in terms of generalization error bounds.
% This discovery opens a new research avenue: moving beyond the prevailing focus on training optimization, we provide a theoretical framework to analyze residual branch scaling directly through the lens of generalization theory.

Our work distinguishes itself from existing literature (see \cref{subsec:Related works}) in the following aspect: Although previous studies have investigated the properties of over-parameterized neural networks and preliminarily established connections between wide ResNets and RNTK, to the best of our knowledge, this work is the first to directly demonstrate their asymptotic equivalence in generalization error bounds. 
This discovery opens up a new research perspective: moving beyond the prevailing focus on training optimization, we provide a theoretical framework to analyze residual branch scaling directly through the lens of generalization theory.

\subsection{Organization of the Paper}
The rest of this paper is organized as follows. 
In \cref{subsec:Related works}, we review related works, while \cref{subsec:Notations and model settings} introduces necessary notations and settings. \cref{sec:Uniform convergence of RNTK} provides a brief review of RNTK properties and shows the uniform convergence for wide ResNets. \cref{sec:alpha should decrease} presents the network performance under different scaling factors. Experiments are conducted in \cref{sec:Simulation studies}. Finally, \cref{sec: Discussion} summarizes our findings and examines the limitations of the study.

\subsection{Related works}\label{subsec:Related works}
% \paragraph{Residual Branch Scaling Technique}
% Soon after the introduction of ResNets \citep{he2016deep}, \citet{szegedy2017inception} discovered that large-scale ResNets exhibit training instability and demonstrated that scaling residual branches could effectively mitigate this issue. Subsequently, studies including \citet{hayou2020robust} and \citet{hayou2021stable} provided theoretical validation for the critical role of residual branch scaling in stabilizing training dynamics. In addition, many recent works that eliminate or optimize normalization, such as ReZero \citep{bachlechner2021rezero} and DeepNet \citep{wang2024deepnet}, have predominantly adopted residual branch scaling techniques or their variants \citep{zhang2019residual,zhu2025transformers} to enhance architectural design. While these studies effectively highlight the role of branch scaling in training optimization, its direct contribution to model generalization from a theoretical standpoint has not been thoroughly investigated. This paper aims to address this specific gap.

\paragraph{Residual Branch Scaling Technique}
Soon after the introduction of ResNets \citep{he2016deep}, \citet{szegedy2017inception} observed that large-scale residual networks may suffer from training instability and showed that scaling the residual branch can effectively alleviate this issue. Subsequent studies, including \citet{hayou2020robust} and \citet{hayou2021stable}, provided theoretical support for the critical role of residual branch scaling in stabilizing training dynamics. Related ideas also appear in the literature on normalization-free residual networks. For example, \citet{de2020batch} showed that batch normalization in deep ResNets effectively downscales the residual branch relative to the skip connection at initialization and proposed SkipInit based on this insight, while \citet{zhang2019residual} introduced Fixup initialization to enable stable training without normalization. More recent architectures such as ReZero \citep{bachlechner2021rezero}, DeepNet \citep{wang2024deepnet}, and \citet{zhu2025transformers} further adopt scaling-based designs or closely related mechanisms to improve stability and efficiency. While these studies highlight the importance of branch scaling for optimization and trainability, its direct contribution to model generalization from a theoretical perspective remains underexplored. This paper aims to address this specific gap.

\paragraph{Signal Propagation in Deep Networks}
A related line of work explains the effectiveness of deep architectures, including ResNets, through the lens of signal propagation and trainability. \citet{poole2016expressivity} showed that deep random networks can exhibit exponentially growing expressivity through transient chaos, while \citet{schoenholz2017deep} developed the Deep Information Propagation framework and identified the edge of chaos as a critical regime for training very deep architectures. Extending these ideas to residual networks, \citet{yang2017meanfield} showed that skip connections fundamentally alter propagation dynamics, replacing the exponential signal and gradient behavior of standard feedforward networks with sub-exponential, often polynomial, dynamics, thereby improving the stability of deep ResNets. Closely related to residual scaling, \citet{de2020batch} showed that batch normalization biases residual blocks toward the identity function by effectively downscaling the residual branch at initialization, thereby improving trainability and motivating SkipInit. Relatedly, \citet{fischer2023field} provided a field-theoretic analysis of signal propagation in ResNets and its dependence on residual branch scaling. More recently, \citet{zhu2025transformers} extended this perspective to Transformers, showing that propagation geometry can also predict their trainability. These works mainly explain the benefits of residual design from the perspective of propagation and optimization, rather than from that of generalization, which is the focus of our work.

\paragraph{Generalization Ability of Neural Networks}
The generalization behavior of neural networks has been studied from multiple theoretical perspectives. Classical approaches such as Rademacher complexity \citep{bartlett2002rademacher} often fail to explain the empirical success of over-parameterized models, while more recent studies emphasize implicit regularization \citep{neyshabur2015path} and spectral bias \citep{rahaman2019spectral}. Particularly relevant to our work is NTK theory \citep{jacot2018_NeuralTangent}, which shows that infinitely wide neural networks in regression can be approximated by kernel regression induced by the NTK, whose generalization performance can be characterized through the decay of kernel eigenvalues \citep{zhang2023optimality}. This perspective has been further developed for fully connected networks in asymptotic regimes \citep{arora2019_ExactComputation,lai2023generalization,li2024eigenvalue}, and subsequent works have attempted to extend these results to classification problems \citep{Ji2020Polylogarithmic,yu2025divergence}. In contrast, the corresponding theory for ResNets remains less developed: \citet{huang2020deep} mainly analyzed the connection between wide ResNets and RNTK at initialization, while \citet{belfer2024spectral} studied the spectral properties of RNTK for deep ResNets. 
Our work goes beyond these results by relating the training dynamics of wide ResNets to the kernel regression induced by the RNTK and establishing a correspondence between their generalization performances, thereby moving beyond prior work that is largely confined to initialization analysis or heuristic explanations of generalization. 
% Our work goes beyond these results by relating wide ResNets to kernel regression throughout training and establishing a connection between their generalization performances, thereby moving beyond prior work that is largely confined to initialization analysis or heuristic explanations of generalization. 

% \subsection{Organization of the Paper}

% \newpage
\subsection{Notations and Settings}\label{subsec:Notations and model settings}
\paragraph{Fundamental Settings} 
Let $f_{*}$ be a continuous function defined on a compact set $\mathcal{X} \subseteq \mathbb{R}^{d}$.
% with upper bounds on the Euclidean norm ``$\norm{~\cdot~}$'' of its elements.
% Let $f_{*}$ be a continuous function defined on $\mathcal{X} \subseteq \mathbb{R}^{d}$, where $\mathcal{X}$ is a compact set with positive lower and upper bounds on the Euclidean norm of its elements, i.e., $c_{\mathcal{X}}\leq \|\bm{x}\|_2\leq C_{\mathcal{X}}$ for some positive constants $c_{\mathcal{X}}$ and $C_{\mathcal{X}}$. The data on $\mathbb{S}^{d-1}$ or the data with bias, $\bm{x}=(\bm{\hat{x}},1)$, where $\bm{\hat{x}}$ belongs to some compact set, both satisfy this condition. 
Let $\mu_{\mathcal{X}}$ be a uniform measure supported on $\mathcal{X}$. Suppose that we have observed $n$ independent and identically distributed (i.i.d.) samples $\mathcal{D}_n=\{(\bm{x}_{i},y_{i}), i \in [n]\}$ drawn from the model: 
% \qquad\qquad\qquad~$y_i=f_{*}(\boldsymbol{x}_i)+\varepsilon_{i}, \quad i=1,\dots,n,$
% \vspace{-3mm}
\begin{align*}
% \label{equation:true_model}
y_i=f_{*}(\boldsymbol{x}_i)+\varepsilon_{i}, \quad i=1,\dots,n,
\end{align*}
where inputs $\boldsymbol{x}_{i} \iid \mu_{\mathcal{X}}$, the noise terms $\varepsilon_{i} \iid \mathcal{N}(0,\sigma^{2})$ (the centered normal distribution with variance $\sigma^2$) for some fixed $\sigma>0$, and $[n]$ denotes the index set $\{1,2,...,n\}$. We collect $n$ i.i.d. samples into matrix $\bm X:=(\bm x_1,\dots,\bm x_n)^\top\in\mb R^{n\times d}$ and vector $\bm y:=(y_1,\dots,y_n)^\top\in\mb R^n$.

Throughout this paper, we focus exclusively on the regression setting. 
Our goal is to construct a regression estimator $\hat{f}_{n}$ based on these $n$ samples to minimize the excess risk, defined as the difference between the expected risk of the estimator and that of the true function: $\mathcal{L}(\hat{f}_{n})-\mathcal{L}(f_{*})$, where $\mathcal{L}(f)=\mathbb{E}_{(\bm{x},y)}[{(f(\bm{x})-y)^2}]$.
A direct calculation yields the following expression for the excess risk:
\begin{equation*}
% \label{eq:mathcal{E}}
\mathcal{E}(\hat{f}_{n})=\mathcal{L}(\hat{f}_{n})-\mathcal{L}(f_{*})=\int_{\mathcal{X}}\big[{\hat{f}_{n}(\bm{x})-f_{*}(\bm{x})}\big]^{2} \mathrm{d} \mu_{\mathcal{X}}(\bm{x}).
\end{equation*}
Evidently, the excess risk serves as an equivalent metric for evaluating the generalization performance of $\hat{f}_{n}$. 

% Our goal is to construct an estimator $\hat{f}_{n}$ based on these $n$ samples, which can minimize the excess risk, i.e., the difference between $\mathcal{L}(\hat{f}_{n})=\mb{E}_{(\bm{x},y)}\big[{(\hat{f}_{n}(\bm{x})-y)^2}\big]$ and $\mathcal{L}(f_{*})=\mb{E}_{(\bm{x},y)}\big[{(f_{*}(\bm{x})-y)^2}\big]$.
% A direct calculation shows the following formula about the excess risk: 
% \begin{equation*}
% % \label{eq:mathcal{E}}
% \mathcal{E}(\hat{f}_{n})=\mathcal{L}(\hat{f}_{n})-\mathcal{L}(f_{*})=\int_{\mathcal{X}}\big[{\hat{f}_{n}(\bm{x})-f_{*}(\bm{x})}\big]^{2} \dd \mu_{\mathcal{X}}(\bm{x}).
% \end{equation*}
% It is clear that the excess risk is an equivalent evaluation of the generalization performance of $\hat{f}_{n}$. 

\paragraph{Interpolation Space} Denote $L^2(\mathcal{X})$ as the $L^2$ space on $\mathcal{X}$. Throughout the paper, we denote by $\mathcal{H}$ a separable RKHS on $\mathcal{X}$ with respect to a continuous kernel function $K$. We also assume that $\sup_{\x\in\mathcal{X}}K(\x,\x)\leq C$ for some constant $C$.
The celebrated Mercer's theorem states that there exist non-negative eigenvalues $\lambda_{1} \geq \lambda_{2} \geq \cdots$ and eigenfunctions $e_{1}, e_{2}, \cdots \in L^{2}(\mathcal{X})$ such that $\left<e_{i},e_{j}\right>_{L^{2}(\mathcal{X})}=\delta_{ij}$ and 
\vspace{-2mm} 
\begin{equation}\label{eq:Mercer_theorem}
% \vspace{-1mm}
K(\x,\x')=\textstyle\sum\limits_{j=1}^{\infty}\lambda_{j}e_{j}(\x)e_{j}(\x'),
\vspace{-1mm}
 \end{equation}
 % \vspace{-1mm}
 where the series 
 % on the right hand side 
 converges in $L^{2}(\mathcal{X})$. 
 With these eigenvalues and eigenfunctions, the interpolation space for $s\geq 0$ is defined as 
 % $ [\mathcal{H}]^{s} = \big\{ \sum_{j=1}^{\infty} f_j e_j \big|\sum_{j=1}^{\infty}f_j^2/\lambda_j^s <\infty \}$ 
\citep{steinwart2012_MercerTheorem,fischer2020_SobolevNorm,zhang2023optimality}:
% \vspace{-1mm}
\begin{align*}
\vspace{-1mm}
[\mathcal{H}]^{s} := \Big\{ \textstyle\sum\limits_{j=1}^{\infty} f_j e_j\in L^{2}(\mathcal{X})\ \Big|\ \textstyle\sum\limits_{j=1}^{\infty}{f_j^2}/{\lambda_j^s} <\infty \Big\}
\vspace{-1mm}
\end{align*}
% \vspace{-1mm}
equipped with the norm $\|\sum_{j=1}^{\infty} f_j e_j\|_{[\mathcal{H}]^s}^2 := \sum_{j=1}^{\infty} f_j^2/\lambda_j^s$.
% which is equipped with the norm $\big\|\sum\limits_{i=1}^\infty f_ie_i\big\|_{[\mc{H}]^s}^2=\sum\limits_{i=1}^\infty a_i^2$.
In particular, we have $[\mathcal{H}]^0 \subseteq L^{2}(\mathcal{X}) $ and $[\mathcal{H}]^1 = \mathcal{H}$. For $s_1 > s_2 \geq 0$, we have the inclusion $[\mathcal{H}]^{s_1} \subset [\mathcal{H}]^{s_2}$. %If the readers are familiar with the real interpolation theory, the interpolation space can also be defined through $\mathcal{H}^{s}(\mathcal{X}) := \left(L^{2}(\mathcal{X}), \mathcal{H}^{m}(\mathcal{X})\right)_{s/m,2}$, where $m=\min \{k \in \mathbb{N}: k > s\}$ (see \citet{sawano2018theory} for more details).

% \begin{align*}
% [\mathcal{H}]^{s} := \Big\{ f=\textstyle\sum\limits_{j=1}^{\infty} f_j e_j\ \Big|\ \|f\|_{[\mathcal{H}]^s}^2 := \textstyle\sum\limits_{j=1}^{\infty}{f_j^2}/{\lambda_j^s} <\infty \Big\}.
% \end{align*}

\paragraph{Other Notations} 
For a function $h:\mathcal{X} \to \mathbb{R}$, we denote $h(\X) = (h(\x_1), \dots, h(\x_n))^\top \in \mathbb{R}^{n \times 1}$. For a symmetric kernel $k: \mathcal{X} \times \mathcal{X} \to \mathbb{R}$, we write $k(\x, \X) = (k(\x, \x_1), \dots, k(\x, \x_n)) \in \mathbb{R}^{1 \times n}$ for the kernel evaluation vector and $k(\X, \X) = (k(\x_i, \x_j))_{i,j=1}^n \in \mathbb{R}^{n \times n}$ for the Gram matrix.
For two sequences $a_{n}$ and $b_{n}$, we write $a_{n}=\mathcal{O}(b_{n})$ or $b_{n}=\Omega(a_{n})$ if there exists a constant $C>0$ such that $a_{n} \leqslant Cb_{n}$. 
We also denote $a_n = \Theta(b_n)$ or $a_n \asymp b_n$ if $a_n = \mathcal{O}(b_n)$ and $a_n = \Omega(b_n)$ both hold.
We write $a_n=o(b_n)$ if $a_n/b_n\to0$ as $n\to\infty$. We will use $\text{poly}(x,y,\dots)$ to represent a polynomial of $x,y,\dots$ whose coefficients are absolute positive constants. 
% 保证训练数量相同

\section{Uniform Convergence of ResNet to Kernel Regression}\label{sec:Uniform convergence of RNTK}
In this section, we show that the training behavior of wide ResNets can be characterized by kernel regression, with the corresponding kernel given by the RNTK. We adopt the over-parameterized framework for both theoretical and practical reasons: it helps overcome technical obstacles in analysis, and modern neural networks are typically over-parameterized. Moreover, in large-scale regimes, architectural techniques such as residual connections and residual branch scaling become increasingly important, further motivating the study of over-parameterization for understanding their roles and underlying mechanisms.

To facilitate theoretical analysis, we focus on ResNets with a branch scaling factor $\alpha$. As previously discussed, scaling factor $\alpha$ plays a pivotal role in modulating inter-layer signal propagation and ensuring training stability. More significantly, we will demonstrate that this regulatory mechanism extends far beyond training stability, exerting substantial influence on model generalization performance. 

\subsection{Review of ResNet and RNTK}\label{sec:Review of FCNTK and RNTK}
\paragraph{Network Architecture and Initialization}
Residual connections have become a fundamental component of modern deep neural network architectures \citep{he2016deep,vaswani2017attention}. To clearly isolate the effect of residual connections, we focus on fully connected networks with residual connections. Specifically, we adopt the multi-hidden-layer ResNet architecture studied in \citet{huang2020deep,belfer2024spectral,tirer2022kernel}, which has width $m$, depth $L$, and includes a bias term in the input layer, as follows\footnote{For technical tractability, the analysis in the Appendix uses a modified architecture and initialization, following standard practices in the literature. These modifications do not alter our main findings. A detailed discussion is provided near Equation \eqref{eq:2_NTK_GF} and in Appendix \ref{subsec:Restatement}. The standard architecture is presented here to help readers grasp the main ideas without technical distractions.}:
% In the following, we work with the definition of a multiple hidden layer ResNet defined in \citet{huang2020deep,belfer2024spectral,tirer2022kernel}: the network has width $m$, depth $L$, and incorporates a bias term in the input layer, as follows\footnote{For technical tractability, the analysis in the Appendix uses a modified architecture and initialization. This modification does not alter our main findings. A detailed discussion is provided near Eq.\eqref{eq:2_NTK_GF} and in Appendix \ref{subsec:Restatement}. The standard architecture is presented here to help readers grasp the core content without technical distractions.}:
% In the following, we work with the definition of $L$-hidden-layer ResNet implemented in \citet{huang2020deep,belfer2024spectral,tirer2022kernel} as follows:
\begin{align*}
f(\x,\btheta)&=\vv^\top \vx^{(L)};\\
\vx^{(\ell)}&=\vx^{(\ell-1)}+\alpha\sqrt{\tfrac1m}\bm{V}^{(\ell)}\,\sigma\mpt{\sqrt{\tfrac2m}\vW^{(\ell)} \vx^{(\ell-1)}};\\
\vx^{(0)}&=\sqrt{\tfrac1m}(\vA\vx+\bm{b}), 
\end{align*}
% \begin{align*}
% \vx^{(\ell)}&=\vx^{(\ell-1)}+\alpha\sqrt{\tfrac1m}\bm{V}^{(\ell)}\,\sigma\mpt{\sqrt{\tfrac2m}\vW^{(\ell)} \vx^{(\ell-1)}};\\
% \vx^{(0)}&=\sqrt{\tfrac1m}(\vA\vx+\bm{b});\quad f(\x,\btheta)=\vv^\top \vx^{(L)},
% \end{align*}
where $\ell\in [L]$ with parameters $\vv\in \mathbb R^{m}$, $\vV^{(\ell)},\vW^{(\ell)}\in \mathbb R^{m\times m}$, $\vA\in\mathbb R^{m\times d}$ and $\bm{b}\in \mathbb R^{m}$. In addition, $\sigma(x):=\max\{x,0\}$ denotes the ReLU activation function. All parameters are initialized as i.i.d. random variables drawn from the standard normal distribution. 
\begin{align*}\mr{i.e.,~}\bm{v}_i,~\bm{V}^{(\ell)}_{i,j},~\bm{W}^{(\ell)}_{i,j},~\vA_{i,k},~\bm{b}_i\iid \mathcal{N}(0,1)\end{align*}
for $i,j\in[m]$, $k\in[d]$ and $\ell\in[L]$. As in \citet{huang2020deep}, we assume that $\vv$, $\vA$ and $\bm{b}$ are all fixed at their initialization, while $\vV^{(\ell)}$ and $\vW^{(\ell)}$ are trainable. Thus, $\bm{\theta}=\text{vec}(\{\bm{W}^{(\ell)}, \bm{V}^{(\ell)}\}_{\ell=1}^{L})$ represents the trainable parameters. 

% \begin{remark}
% Following standard practices, our theoretical proofs in the Appendix utilize a modified architectural variant for technical tractability. This modification does not alter our main findings. A comprehensive discussion on this distinction is provided near Eq. \eqref{eq:2_NTK_GF} and in Appendix \ref{sec:uniform_proof}. The standard architecture is presented here to prioritize clarity of exposition, allowing readers to grasp the core content without being distracted by technical minutiae.
% \end{remark}

The parameter $\alpha$, serving as the scaling factor for residual branches, has undergone significant evolution in deep network research. In the seminal ResNet study \citep{he2016deep}, this parameter was simply set as a constant $\alpha=1$, under which configuration normalization layers were typically required for optimal performance. 
With advancing research, \citet{zhang2019residual} innovatively adopted a power-law decay formulation when proposing the Fixup initialization method, thereby replacing traditional normalization operations. Building upon these research foundations, 
this paper employs the following unified expression: $\alpha=C\cdot L^{-\gamma}$ for $C>0$ and $0\leq\gamma\leq1$. A comparative analysis of network performance between constant $\alpha$ and depth-decaying $\alpha$ configurations can effectively demonstrate the effect of residual branch scaling on network behavior. 

% The scaling factor $\alpha$ on the residual branch is a hyperparameter.In this paper, we consider a general form of $\alpha$, defined as $\alpha=C\cdot L^{-\gamma}$ for $0\leq\gamma\leq1$ and $C>0$.
% % , where both $\gamma$ and $C$ are absolute constants. 
% This includes various suggestions for the choice of $\alpha$ in \citet{he2016deep,du2019gradient,zhang2019residual,huang2020deep}.

\paragraph{Training}
Training methods vary substantially across tasks, ranging from classical supervised learning to the increasingly prominent reinforcement learning paradigm \citep{schulman2017proximalpolicyoptimizationalgorithms,zhangonline}. For technical tractability, however, this paper focuses on the basic regression setting, where the squared loss is typically used: 
% Neural networks are often trained by gradient descent (or its variants) to minimize empirical loss functions. For regression problems, the squared loss is typically employed as follows:
\begin{align*}
% \label{eq:2_Loss}
\wh{\mathcal{L}}(\bm{\theta}) = \frac{1}{2n}\textstyle\sum\limits_{i=1}^n (f(\x_i,\bm{\theta}) - y_i)^2.
\end{align*}
% For simplicity, we consider the continuous version of gradient descent, namely gradient flow, for the training process.
Neural networks are typically trained by gradient descent or its variants to minimize the empirical loss. For simplicity, we consider gradient flow, i.e., the continuous-time limit of gradient descent as the learning rate tends to zero \citep{li2024eigenvalue}. Although practical training procedures are more involved, this simplification allows us to isolate the intrinsic effect of residual branch scaling.
% Neural networks are often trained by gradient descent (or its variants) to minimize empirical loss functions. For simplicity, we consider the continuous-time limit of gradient descent, known as gradient flow, which arises when the learning rate tends to zero. Although real-world training methods are indispensable, we simplify the training process to gradient flow to investigate the most intrinsic impact of residual branch scaling.

For the parameters $\bm{\theta}$ at time $t \geq 0$ (denoted by $\bm{\theta}_t$), the gradient flow is given by the following differential equation:
\begin{align}
\label{eq:2_GD}
\dot{\bm{\theta}}_t = - \nabla_{\bm{\theta}} \wh{\mathcal{L}}(\bm{\theta})= - \tfrac{1}{n}\nabla_{\bm{\theta}} f(\X,\bm{\theta}_t) (f(\X,\bm{\theta}_t) - \bm{y})
\end{align}
where $\nabla_{\bm{\theta}} f(\X,\bm{\theta}_t)$ is a ${2Lm^2 \times n}$ matrix. From the gradient flow equation of the parameters, we can directly derive the evolution equation for the ResNet regression function:% $f(\x,\bm{\theta}_t)$:
% \begin{align*}
% % \label{eq:Kt_Formula}
% r_t^m(\x,\x') = \nabla_{\bm{\theta}} f(\x,\bm{\theta}_t)^\top\nabla_{\bm{\theta}} f(\x',\bm{\theta}_t),%\ang{\nabla_{\bm{\theta}} f(\x,\bm{\theta}_t),\nabla_{\bm{\theta}} f(\x',\bm{\theta}_t)},
% \end{align*}
% which we call the ResNet Kernel (RNK) in this paper,
% we know that 
\begin{align}
\label{eq:2_NN_GF}
\dot{f}(\x,\bm{\theta}_t) 
% = \nabla_{\bm{\theta}} f(\x,\bm{\theta}_t)^\top \dot{\bm{\theta}}_t 
= - \tfrac{1}{n} r_t^m(\x,\X)\left( f(\X,\bm{\theta}_t) - \bm{y} \right),
\end{align}
where $r_t^m(\x,\x') = \nabla_{\bm{\theta}} f(\x,\bm{\theta}_t)^\top\nabla_{\bm{\theta}} f(\x',\bm{\theta}_t)$, which is called Empirical Residual Neural Tangent Kerne (Empirical RNTK) in this paper. 

% Finally, let $\fRN(\x):= f(\x,\bm{\theta}_t)$ denote the ResNet regressor.

\paragraph{RNTK and Kernel Regression}
The gradient flow equations \eqref{eq:2_GD} and \eqref{eq:2_NN_GF} imply highly nonlinear dynamics that are generally intractable. %After introducing a time-varying kernel function
% \begin{align*}
% % \label{eq:Kt_Formula}
% r_t^m(\x,\x') = \nabla_{\bm{\theta}} f(\x,\bm{\theta}_t)^\top\nabla_{\bm{\theta}} f(\x',\bm{\theta}_t),%\ang{\nabla_{\bm{\theta}} f(\x,\bm{\theta}_t),\nabla_{\bm{\theta}} f(\x',\bm{\theta}_t)},
% \end{align*}
% which we call the ResNet Kernel (RNK) in this paper,
% we know that 
% \begin{align}
% \label{eq:2_NN_GF}
% \dot{f}(\x,\bm{\theta}_t) = - \tfrac{1}{n} r_t^m(\x,\X)\left( f(\X,\bm{\theta}_t) - \bm{y} \right)~\text{for}~r_t^m(\x,\x') = \nabla_{\bm{\theta}} f(\x,\bm{\theta}_t)^\top\nabla_{\bm{\theta}} f(\x',\bm{\theta}_t),
% \end{align}
% %It might be the \eqref{eq:2_NN_GF} motivates them to introduce the neural tangent kernel. 
However, when $r_t^m(\x,\X)$ is time-invariant, \cref{eq:2_NN_GF} reduces to the gradient flow dynamics of standard kernel regression. Although this condition does not generally hold for neural networks, it has been empirically observed or preliminarily characterized \citep{jacot2018_NeuralTangent,huang2020deep,tirer2022kernel} that, as the width $m$ approaches infinity, the Empirical RNTK $r_t^m(\x,\x')$ concentrates to a time-invariant kernel $r(\x,\x')$, referred to as the Residual Neural
Tangent Kernel (RNTK), i.e.,
% \begin{align*}
% \label{eq:NTK_con}
$r_{t}^m(\bm{x},\bm{x}') \overset{p}{\to} r(\bm{x},\bm{x}')$ as $m\to\infty$.
% \end{align*}
% \[\text{i.e.,}~r_{t}^m(\bm{x},\bm{x}') \overset{p}{\to} r(\bm{x},\bm{x}')~\text{as}~m\to\infty.\]

% \newpage
Furthermore, in \cref{sec:NNK uniformly converges to NTK}, we establish stronger uniform convergence results that hold throughout the entire training process under certain conditions.
% Luckily, \citet{jacot2018_NeuralTangent,huang2020deep,tirer2022kernel} showed that, with the random Gaussian initialization,
% $r_t^m(\x,\x')$ concentrates to a time-invariant kernel $r(\x,\x')$ called the RNTK, i.e.,
% % $r_{t}^m(\bm{x},\bm{x}') \overset{p}{\to} r(\x,\x')$ as $m\to\infty$, 
% \begin{equation*}
% % \label{eq:NTK_Def}
% r(\x,\x') \overset{p}{=} \lim_{m \to \infty}r_{t}^m(\bm{x},\bm{x}')
% \end{equation*}
% where ``$\overset{p}{=}$'' denotes convergence in probability.
% Moreover, thanks to the overparametrization, $r^m_t(\x,\x')$ is also stable during the training.
Therefore, we consider the RNTK-based 
kernel regressor $\fRNTK(\x)$, governed by the following gradient flow equation:
\begin{equation}
\label{eq:2_NTK_GF}
\tfrac{\partial}{\partial t} \fRNTK(\x) = - \tfrac{1}{n} r(\x,\X) (\fRNTK(\X) - \bm{y}).
\end{equation}
Moreover, if both \cref{eq:2_NN_GF} and \eqref{eq:2_NTK_GF} are initialized at zero, the ResNet regressor $\hat f^{\mathrm{ResNet}}_{t}(\bm{x}):=f(\x,\bm{\theta}_t)$ can be well approximated by $\hat f^{\mathrm{RNTK}}_{t}(\bm{x})$. Crucially, based on the uniform convergence established in our analysis, this approximation extends to the generalization error (see \cref{coro:risk:approx}). In addition, \cref{eq:2_NTK_GF} admits the following closed-form solution ($\bm{I}$ denotes the identity matrix):
\begin{equation*}
% \label{ntk:solution}
\fRNTK(\x)=r(\bm{x},\bm{X})r(\bm{X},\bm{X})^{-1}\bk{\bm{I}-\me^{-\frac{1}{n}r(\bm{X},\bm{X})t}}\bm{y}.
\end{equation*}
% where $r(\bm X,\bm X):=\{r(\bm x_i,\bm x_j)\}_{i,j=1}^n\in\mb R^{n\times n}$.
% To simplify the setting and maintain focus, we adopt a commonly used initialization method from the existing literature \citep{hu2019simple, chizat2019lazy,lai2023generalization,li2024eigenvalue}, which ensures that $\hat f^{\mathrm{ResNet}}_{0}(\bm{x})$ is initialized to zero (see \cref{subsec:Restatement} in the Supplementary Material for details). For further discussion on the effect of zero initialization, we refer the reader to \citet{chen2024impacts}. 
% Unfortunately, under the network architecture and initialization scheme adopted in this section, $\hat f^{\mathrm{ResNet}}_{t}$ is not initialized as the zero function. 
To simplify the setting and maintain focus, we adopt in the following a commonly used mirrored network architecture together with its corresponding initialization method from the existing literature \citep{hu2019simple,chizat2019lazy,lai2023generalization,li2024eigenvalue}; the sole purpose of this adjustment is to ensure that $\hat f^{\mathrm{ResNet}}_{0}$ is initialized as the zero function. 
Since this adjustment is purely technical, we do not elaborate on it in the main text and instead defer the details to \cref{subsec:Restatement} in the Supplementary Material for ease of reading. For further discussion on the effect of zero initialization, we refer the reader to \citet{chen2024impacts}.

\paragraph{Explicit Expression of the RNTK} We now present the explicit expression of the RNTK, which also serves as its formal definition in this paper. First, we introduce the following two functions:
\begin{align*}
\kappa_0(u)&=\tfrac{1}{\pi}\mpt{\pi-\arccos u};\\\kappa_1(u)&=\tfrac1\pi\mpt{u\pt{\pi-\arccos u}+\sqrt{1-u^2}}
\end{align*}
and let $\x,\vx'\in\mathcal{X}$. 
The NTK of an $L$-hidden-layer ResNet, denoted as $ r(\x,\vx')$, 
is given by \citet{huang2020deep}:
\begin{align}\label{eq: NTK of ResNet}
\begin{split}
r(\bm{x},\bm{x}') &= \alpha^2\big\|\tbinom{\x}{1}\big\|\big\|\tbinom{\x'}{1}\big\| \cdot r_0(\tilde{\x},\tilde{\x}');\\
r_0(\tilde{\x},\tilde{\x}') &= \textstyle\sum\limits_{\ell=1}^L B_{\ell+1} \Big[(1+\alpha^2)^{\ell-1}\kappa_1\mpt{\tfrac{K_{\ell-1}}{(1+\alpha^2)^{\ell-1}}}\\
&\qquad\qquad\qquad~+K_{\ell-1}\cdot\kappa_0\mpt{\tfrac{K_{\ell-1}}{(1+\alpha^2)^{\ell-1}}}\Big]
\end{split}
\end{align}
where $\tilde{\x}=\binom{\x}{1}/\big\|\binom{\x}{1}\big\|$, $\tilde{\x}'=\binom{\x'}{1}/\big\|\binom{\x'}{1}\big\|$ and $K_0=\tilde{\x}^\top \tilde{\x}'$, $B_{L+1}=1$, 
\begin{align*}
K_\ell&=K_{\ell-1}+ \alpha^2 (1+\alpha^2)^{\ell-1}\kappa_1\mpt{\tfrac{K_{\ell-1}}{(1+\alpha^2)^{\ell-1}}};\\
B_\ell&=B_{\ell+1}\bk{1+\alpha^2\kappa_0\mpt{\tfrac{K_{\ell-1}}{(1+\alpha^2)^{\ell-1}}}}
% K_0&=\tilde{\x}^\top \tilde{\x}'%=({{\x}^\top {\x}'+1})/\big[{\big\|\tbinom{\x}{1}\big\|\big\|\tbinom{\x'}{1}\big\|}\big]
% ;\quad B_{L+1}=1%;\quad C_L=\bk{2L(1+\alpha^2)^{L-1}}^{-1}
\end{align*}
for $\ell\in[L]$. In the above equations, $\norm{~\cdot~}$ denotes the Euclidean norm, $K_\ell$ and $B_\ell$ are abbreviations for $K_\ell({\tilde{\bm{x}}},\tilde{\bm{x}}')$ and $B_\ell(\tilde{\bm{x}},\tilde{\bm{x}}')$, respectively.

\subsection{Empirical RNTK Uniformly Converges to RNTK}\label{sec:NNK uniformly converges to NTK} 
Previous studies have demonstrated that wide neural networks can be approximated by kernel regressors \citep{jacot2018_NeuralTangent}. However, the approximations provided by most prior works are incomplete, as they fail to characterize the generalization capability, which is precisely a crucial component. 

% Previous studies have demonstrated that wide Fully Connected Networks can be approximated by Fully Connected NTK regressors, and most of these findings were established pointwise. However, \citet{lai2023generalization} pointed out that pointwise convergence is insufficient to guarantee that the generalization error of a network can be well approximated by that of kernel regression with respect to the NTK.

One of the main technical contributions of this paper is to address the aforementioned issues, and a simplified version of the relevant results is stated in the following theorem (here we simplify the description of the requirements on $m$, see \cref{thm:kernel:kernel_new} for the complete statement): 

\begin{theorem}%[uniform convergence for RNK]
\label{thm:kernel:kernel}
For any given training data $\{(\bm{x}_{i},y_{i}),i\in[n]\}$ and any $\delta\in(0,1)$, 
\begin{align*}
\sup_{t\geq 0} \sup_{\bm{x},\bm{x}'\in \mathcal{X}}|r_t^{m}(\bm{x},\bm{x}')-r(\bm{x},\bm{x}')|\leq \mathcal{O}\mpt{m^{-\frac{1}{12}}\sqrt{\log m}}
\end{align*}
holds with probability at least $1-\delta$ for sufficiently large $m$.
\end{theorem}
% The proof is divided into three parts: $i)$ the Empirical RNTK $r_t^{m}$ converges pointwise to the RNTK $r$; $ii)$ both $r_t^{m}$ and $r$ are H\"{o}lder continuous; $iii)$ we use the $\epsilon$-net argument to establish the uniform convergence of $r_t^{m}$ to $r$. 
% The complete proof is presented in \cref{sec:uniform_proof} of the Supplementary Material.

% Based on the uniform convergence of Empirical RNTK to RNTK, we can obtain the following result 
This result may not be surprising intuitively, but it has long lacked rigorous justification. For instance, the closely related work of \citet{huang2020deep} only established pointwise convergence at initialization, whereas our analysis goes beyond the static kernel properties at initialization and provides uniform control over the entire training dynamics. 

Moreover, based on the uniform convergence of the empirical RNTK, we obtain the following result 
(we also simplify the statement of this corollary, see \cref{coro:risk:approx_new} for the complete statement), which quantifies the proximity between the generalization error of ResNets $\mathcal{E}(\hat f^{\mathrm{ResNet}}_{t})$ and those of RNTK-regressors $\mathcal{E}(\fRNTK)$, demonstrating that their excess risks are asymptotically equivalent.
\begin{corollary}\label{coro:risk:approx}
For any given training data $\{(\bm{x}_{i},y_{i}),i\in[n]\}$, any $\epsilon>0$ and any $\delta\in(0,1)$,
\begin{equation*}
\sup_{t\geq 0}\abs{\mathcal{E}(\hat f^{\mathrm{ResNet}}_{t})-\mathcal{E}(\fRNTK)}\leq \epsilon 
\end{equation*}
holds with probability at least $1-\delta$ for sufficiently large $m$.
% with respect to the random initialization.
\end{corollary}

% \begin{remark}
% Applying the proof strategy in Proposition 3.2 and Theorem 3.1 of \citet{lai2023generalization}, we can utilize \cref{thm:kernel:kernel} to complete the proof of \cref{coro:risk:approx}.
% % We emphasize that since the generalization ability involves integration, only the uniform convergence of RNK to RNTK can guarantee the closeness of their generalization abilities. In contrast, pointwise convergence of RNK is insufficient to ensure such proximity.
% % We have to emphasize that because the generalization ability appears in the form of integration, only the uniform convergence from RNK to RNTK can ensure the closeness of the generalization ability. In other words, the pointwise convergence of RNK is insufficient for the closeness of the generalization ability.
% \end{remark}

We highlight the critical importance of establishing uniform convergence in this context. Achieving such convergence is not only technically more demanding, but also provides a necessary theoretical guarantee for approximating the generalization error. Since the excess risk is defined as an expectation (integral) over the entire input distribution, prior analyses are generally insufficient to bound the difference between the integrals. Uniform convergence over the entire domain ensures that the proximity of the training dynamics translates validly to that of the generalization errors.

Building on \cref{coro:risk:approx}, we next investigate the impact of residual branch scaling on generalization performance, which forms the focus of the next section. For results on the generalization of finite-depth ResNets, we refer to the related earlier work \citep{lai2023generalizationabilitywideresidual}.

\section{Effects of Residual Branch Scaling on Generalization}\label{sec:choose alpha}
\label{sec:alpha should decrease}
% Normalization and residual connections share a deeper structural role in modern deep networks. While explicit normalization layers (e.g., LayerNorm) standardize feature statistics, residual architectures provide a similar stabilizing effect through implicit scaling. In particular, the residual scaling factor $\alpha$ acts as a regulator of update strength, analogous to normalization’s re-centering effect. By controlling the ratio between residual and identity paths, $\alpha$ implicitly normalizes the flow of information across layers—similar to how BatchNorm regulates activation magnitudes. This perspective explains why properly scaled residual networks ($\alpha\ll 1$) can remain stable even without explicit normalization. In fact, both mechanisms can be seen as addressing the same underlying dynamical constraint. As we will show, this connection extends beyond training stability to generalization performance, with $\alpha$ playing a central role.

In this section, we investigate the impact of scaling factors on the generalization capability of over-parameterized ResNets. We first prove that a constant $\alpha$ leads to asymptotic unlearnability, as the generalization error is theoretically lower-bounded by a positive constant independent of the sample size. In striking contrast, we demonstrate that when $\alpha$ decays rapidly with depth, ResNets trained with appropriate early stopping can achieve the minimax-optimal generalization rate. Finally, we discuss how these findings characterize residual branch scaling as a critical form of implicit architectural regularization.

% \subsection{Generalization error of deep RNTK-regressor for constant scaling factor}
\subsection{Constant Scaling Factors Lead to Poor Generalization in Deep Architectures}\label{sec:small gamma}
% Building on \cref{coro:risk:approx}, which allows us to approximate the generalization error of ResNets via RNTK regression, we now focus on analyzing the properties of the RNTK in the infinite-depth limit. Since modern ResNets are often designed with extreme depth \citep{wang2024deepnet}, this regime provides critical insights into the asymptotic behavior of the architecture.

Building on \cref{coro:risk:approx}, which establishes that the generalization error of ResNets can be approximated by that of RNTK regression, we now turn our focus to analyzing the generalization properties of the RNTK regression. Motivated by the prevalence of ultra-deep architectures \citep{wang2024deepnet} and the need to highlight distinct asymptotic behaviors, this section specifically examines the regime of extremely large depths.

To explicitly characterize the role of depth, we add the superscript $(L)$ to the kernel notation. For analytical convenience, we restrict the inputs to the unit sphere, i.e., $\mathcal{X} = \mathbb{S}^{d-1}$. We observe that for $\x\in\mathbb{S}^{d-1}$, the diagonal entries of the kernel satisfy $r^{(L)}(\x,\x)=4L\alpha^2(1+\alpha^2)^{L-1}$, which diverges as $L\to\infty$. To ensure a well-defined limit, we consider the normalized RNTK (denoted as $\overline{r}^{(L)}$, and hereafter referred to simply as the RNTK), given that input-independent scaling factors do not affect the prediction or generalization performance of kernel regression:
\begin{equation}\label{eq:def normalized r}\overline{r}^{(L)}(\x,\vx'):=
% \frac{r^{(L)}(\x,\vx')}{r^{(L)}(\x,\x)}=
{r^{(L)}(\x,\vx')}/\bk{4L\alpha^2(1+\alpha^2)^{L-1}}.\end{equation}
We first derive the limiting behavior of $\overline{r}^{(L)}(\x,\vx')$ as $L$ approaches infinity when $\alpha$ is a fixed positive constant.

% To explicitly underscore the role of depth, we add the superscript $(L)$ for clarity when necessary. For convenience, we restrict the inputs to the unit sphere in this section, i.e., $\mathcal{X} = \mathbb{S}^{d-1}$. Additionally, for $\x\in\mb{S}^{d-1}$, we can verify that $r^{(L)}(\x,\x)=4L\alpha^2(1+\alpha^2)^{L-1}$, which diverges as $L\to\infty$. To avoid this divergence, we consider the following normalized RNTK (which we will also refer to as RNTK for simplicity), as input-independent factors do not affect the performance of the kernel function:
% \begin{equation}\label{eq:def normalized r}\overline{r}^{(L)}(\x,\vx'):=r^{(L)}(\x,\vx')/\bk{4L\alpha^2(1+\alpha^2)^{L-1}}.\end{equation}
% We first give the large $L$ limit of the RNTK $\overline{r}^{(L)}(\x,\vx')$ when $\alpha$ is an arbitrary positive constant.

\begin{theorem}\label{thm:convergence rate of r}Let $\alpha$ be a fixed positive constant. For any given $\vx,\vx'\in\mathbb{S}^{d-1}$, the normalized RNTK satisfies: %$\overline{r}^{(L)}(\x,\vx')=1$ for $\x=\x'$ and $\overline{r}^{(L)}(\x,\vx')=1/4+\mathcal{O}\mpt{{\mathrm{polylog}\,L}/{L}}$ for $\x\not=\x'$. 
\begin{align*}\overline{r}^{(L)}(\x,\vx')=\begin{cases}
\frac14+\mathcal{O}\mpt{\frac{\mathrm{polylog}\,L}{L}},&\mathrm{if}~ \x\not=\vx';\\
1,&\mathrm{if}~\x=\vx'.
\end{cases}
\end{align*}
As a result, for any $t\geq0$, we have $\mathcal{E}(f_t^{\overline{r}^{\infty}})=\Theta(1)$, where $f_t^{\overline{r}^{\infty}}$ is the kernel regression predictor associated with the limiting kernel $\overline{r}^\infty = \lim_{L\to \infty} \overline{r}^{(L)}$.
\end{theorem}
\cref{thm:convergence rate of r} reveals that with a constant scaling factor, the RNTK degenerates into a ``spike" kernel (a constant background value plus a Dirac delta at the diagonal) as depth increases. This implies that the network loses its discriminative power, effectively treating all distinct inputs as equally correlated. Consequently, the model fails to capture the underlying geometric structure of the data distribution, resulting in trivial generalization performance that does not improve with sample size.

\subsection{Rapidly-Decaying Scaling Factors Achieve the Minimax Optimal Rate}\label{sec:gamma large}
Having established the limitations of constant scaling, we now demonstrate that for optimal learnability, $\alpha$ should decay rapidly with increasing depth.
To rigorously characterize the generalization performance, we first specify the function class containing the target regression function $f_*$. We introduce the following standard source condition:
\begin{assumption}[Source Condition]\label{assump:f_star}
The regression target function satisfies $f_{*}\in [\mathcal{H}]^s$ with $\| f_{*}\|_{[\mathcal{H}]^s}\leq R$ for some positive constant $R$, where $s>0$ denotes the smoothness parameter and $\mathcal{H}$ is the RKHS associated with the one-hidden-layer RNTK $r^{(1)}$.
\end{assumption}
% Intuitively, the parameter $s$ governs the regularity of the target function relative to the kernel: a larger $s$ corresponds to a "smoother" function that is easier to approximate. 

The above condition is standard in the kernel regression literature \citep{caponnetto2007optimal, yao2007early, raskutti2014early, blanchard2018optimal, lin2020optimal,zhang2023optimality}. It is a relatively mild assumption, since $s$ can be arbitrarily small, and in the limit $s\to 0$, the space $[\mathcal{H}]^s$ approaches $L^2(\mathbb{S}^{d-1})$. Based on this setup, we can derive the following conclusion:
% Next, we indicate that to achieve good generalization ability, $\alpha$ should be set to decay rapidly with increasing depth. 
% To facilitate a comprehensive analysis of the generalization performance of ResNets, it is essential to define the class of functions to which $f_{\star}$ belongs. In this study, we introduce the following assumption:% on $f_{\star}$:
% \begin{assumption}[Source condition]\label{assump:f_star}The regression function $f_{*}\in [\mathcal{H}]^s$ and $\| f_{*}\|_{[\mathcal{H}]^s}\leq R$ for some constant $R$, where $s>0$ is the smoothness parameter and $\mathcal{H}$ is the RKHS associated to the kernel $r$.
% \end{assumption}
% The above assumption is actually a usual assumption appeared in the kernel regression literature (see e.g., \citet{caponnetto2007optimal, yao2007early, raskutti2014early, blanchard2018optimal, lin2020optimal,zhang2023optimality}). This assumption is relatively weak since $s$ can be arbitrarily small, and $\lim_{s\to 0}[\mathcal{H}]^s = L^2(\mathbb{S}^{d-1})$. Based on the above assumptions, we can draw the following conclusions:
\begin{theorem}\label{prop:early_stopping}Let $\alpha=L^{-\gamma}$ for $\gamma\in(1/2,1]$. Suppose Assumption \ref{assump:f_star} holds. For any given $\delta\in(0,1)$, if the ResNet is trained via gradient flow with early stopping at time $t_{*}\propto n^{d/[(s+1)d-1]}$, then for sufficiently large $m$ and $L$, there exists a constant $C$ independent of $\delta$ and $n$, such that 
\begin{equation*}
% \label{eq: RNTK_infinite_depth}
\mathcal{E}(\hat{f}_{t_{*}}^{\mathrm{ResNet}})\leq Cn^{-\frac{sd}{(s+1)d-1}}\log^{2}({6}/{\delta})
\end{equation*}
holds with probability at least $1-\delta$ for sufficiently large $n$.
\end{theorem}
In sharp contrast to the asymptotic unlearnability established in \cref{sec:small gamma}, ResNets with rapidly decaying scaling factors exhibit strong performance. Notably, under Assumption \ref{assump:f_star}, the derived generalization error bound achieves the minimax-optimal rate. This indicates that properly scaled residual branches allow the model to preserve its capacity to capture complex geometric structures as depth increases.

% In sharp contrast to the unlearnability results in Section \ref{sec:small gamma}, \cref{prop:early_stopping} shows that ResNets with rapidly decaying scaling factors achieve the minimax-optimal generalization rate under Assumption \ref{assump:f_star}. This suggests that properly scaled residual branches ($\gamma > 1/2$) preserve the model's capacity to learn complex geometries as depth increases.

% In contrast to the results obtained in  Section \ref{sec:small gamma}, ResNets exhibit strong performance in regression tasks. In addition, the above generalization error bound achieves the minimax optimal rate under Assumption \ref{assump:f_star}. This suggests that as the network depth increases, ResNets with large $\gamma$ maintain commendable performance. 

Crucially, the decay rate must be sufficiently fast. If the decay is too slow, the kernel may still degenerate, albeit at a slower rate. We illustrate this with the case where $\gamma=1/4$:
% It is important to note that when the decay rate of $\alpha$ is insufficiently rapid, we can still obtain degenerate results similar to those with constant scaling factors, for example: 
% \begin{theorem}\label{thm:convergence rate of r gamma 0.25}Let $\alpha=L^{-1/4}$. For any given $\x,\x'\in\mathbb{S}^{d-1}$, we have
% \begin{align*}\overline{r}^{(L)}(\x,\vx')=\begin{cases}
% \frac14+\mathcal{O}\mpt{\frac{1}{\mathrm{polylog}\,L}},&\mathrm{if}~ \x\not=\vx';\\
% 1,&\mathrm{if}~\x=\vx'.
% \end{cases}
% \end{align*}
% \end{theorem}
\begin{theorem}\label{thm:convergence rate of r gamma 0.25}Let $\alpha=L^{-1/4}$. For any given $\x,\x'\in\mathbb{S}^{d-1}$, the normalized RNTK satisfies:
 $\overline{r}^{(L)}(\x,\vx')=1$ for $\x=\x'$ and $\overline{r}^{(L)}(\x,\vx')=1/4+\mathcal{O}\mpt{{1}/{\mathrm{polylog}\,L}}$ for $\x\not=\x'$.
% \begin{align*}\overline{r}^{(L)}(\x,\vx')=\begin{cases}
% \frac14+\mathcal{O}\mpt{\frac{1}{\mathrm{polylog}\,L}},&\mathrm{if}~ \x\not=\vx';\\
% 1,&\mathrm{if}~\x=\vx'.
% \end{cases}
% \end{align*}
\end{theorem}
Theoretically, the early stopping mechanism in \cref{prop:early_stopping} is indispensable to mitigate overfitting to noise \citep{li2024kernel}. In practical, explicit regularization methods, such as weight decay ($L_2$ regularization) or cross-validation, serve a similar purpose and can achieve comparable effects.

\subsection{Further Discussion}
We have presented the core theoretical findings of this study: as network depth increases, which is a prevailing trend in modern architectures, proper scaling of residual branches plays a decisive role in maintaining generalization performance. This is because in extremely deep networks, the absence of such scaling leads to asymptotic unlearnability. Consequently, this scaling mechanism functions as an implicit architectural regularizer. Although originally designed to stabilize the optimization process, it implicitly governs generalization, playing a critical role in practical networks that has remained largely unrecognized.

% Crucially, successful training does not invariably translate to enhanced network performance. For instance, while wide networks possess the capacity to interpolate training data \citep{hornik1989multilayer}, this fitting capability does not inherently ensure superior generalization. Consequently, understanding these mechanisms is indispensable for comprehending the scaling of residual branches.

% Crucially, successful training does not necessarily translate into improved model performance. For instance, although wide networks possess the capacity to interpolate the training data \citep{hornik1989multilayer}, such fitting capability does not inherently guarantee better generalization. More broadly, related phenomena in modern large-scale models also suggest that stronger optimization or increased test-time computation does not always lead to more desirable model behavior; overthinking in large reasoning models is a representative manifestation of this mismatch \citep{yang2026towards,jiang2025drpdistilledreasoningpruning}. Therefore, it is necessary to clearly distinguish the roles of residual-branch scaling in optimization stability and generalization capability.

Crucially, successful training does not necessarily translate into improved model performance. 
Although wide networks can interpolate the training data \citep{hornik1989multilayer}, such fitting ability does not guarantee better generalization: it may reflect memorization of individual training samples or reliance on spurious but predictive correlations in the training distribution \citep{zhou2026dynamic,zhou2026boosting}. 
% Similarly, in modern large-scale models, stronger optimization or increased test-time computation does not always yield more desirable behavior, as illustrated by overthinking in large reasoning models \citep{yang2026towards,jiang2025drpdistilledreasoningpruning}. 
These observations motivate a careful distinction between the roles of residual-branch scaling in optimization stability and generalization.

It is also worth noting that in practical architectures, residual branches are typically coupled with normalization layers. Since extensive literature suggests that normalization induces an implicit scaling effect \citep{brock2021high}, our theoretical framework offers valuable insights into how normalization influences generalization capability. Furthermore, while our analytical results are derived within a regression setting, the fundamental phenomena observed, particularly the criteria for kernel degeneration, are structural in nature and thus likely to hold in broader contexts.

\section{Experiments}\label{sec:Simulation studies}

In this section, we present comprehensive numerical experiments to corroborate the theoretical findings of this paper.
First, we visualize the asymptotic behavior of the RNTK when $\alpha$ is constant or decays slowly. We observe that the kernel value for distinct inputs converges to $1/4$ as the depth $L$ increases, providing direct empirical validation for \cref{thm:convergence rate of r} and \cref{thm:convergence rate of r gamma 0.25}.
Second, we demonstrate that a sufficiently rapid decay of $\alpha$ is critical for the generalization performance of both RNTK-based kernel regression and finite-width Convolutional ResNets (ConvResNets) on synthetic and real-world datasets. This aligns with our theoretical analysis in \cref{sec:alpha should decrease}.
% Finally, we benchmark the scaling strategy against standard normalization techniques to preliminarily explore the underlying connections between residual branch scaling and normalization mechanisms. 

While the theory relies on standard simplifying assumptions (e.g., fully connected architectures), we have extended the experimental evaluation to broader scenarios (e.g., ConvResNets). These experiments demonstrate that the conclusions of this paper remain robust beyond the strict theoretical settings discussed in the main text. In addition, we compare the proposed scaling strategy with standard normalization techniques to preliminarily explore the connections between residual branch scaling and normalization mechanisms.

% In this section, we present several numerical experiments to illustrate the theoretical results of this paper. First, we demonstrate the output of RNTK when $\alpha$ is a constant or decays slowly. The output for random input approaches $1/4$ as the number of hidden layers $L$ increases, which is consistent with the theoretical result in \cref{thm:convergence rate of r} and \ref{thm:convergence rate of r gamma 0.25}. Second, we show that a sufficiently rapid decay rate of $\alpha$ with respect to $L$ is beneficial for the generalization performance of both kernel regression based on RNTK and finite-width convolutional residual neural networks (ConvResNets) on synthetic and real datasets. This finding aligns with the theoretical results provided in \cref{sec:alpha should decrease} regarding the influence of $\alpha$ on generalization. Furthermore, we also compare our experimental results with those of models using normalization techniques to conduct a preliminary investigation into the potential connections between branch scaling and normalization.

% Due to theoretical and technical limitations, some scenarios were not discussed in the main text (e.g., more general $\alpha$ settings, ConvResNets, real-world data that may not satisfy the assumptions, etc.). However, as a reasonable extrapolation of the theory, we have conducted relevant experiments to support our claims.

\subsection{Fixed Kernel}
% \begin{wrapfigure}{r}{0.48\linewidth}
% \vspace{-9mm}
%  \centering
%  \includegraphics[width=0.48\textwidth]{fix_kernel_trial_num100,minL100,maxL200100,mingamma0,maxgamma0.25.pdf} % 替换为您的图片文件名
%  \vspace{-8mm}
%  \caption{Average output of RNTK for random input $\x,\x'\in \mathrm{Uniform}(\mb S^2)$ with increasing $L$.}
%  \vspace{-3mm}
% \label{fig: L increase different alpha}
% \end{wrapfigure}
% \vspace{-6mm}
\begin{figure}[ht]
% \vspace{-2mm}
    \centering
    \includegraphics[width=0.48\textwidth]{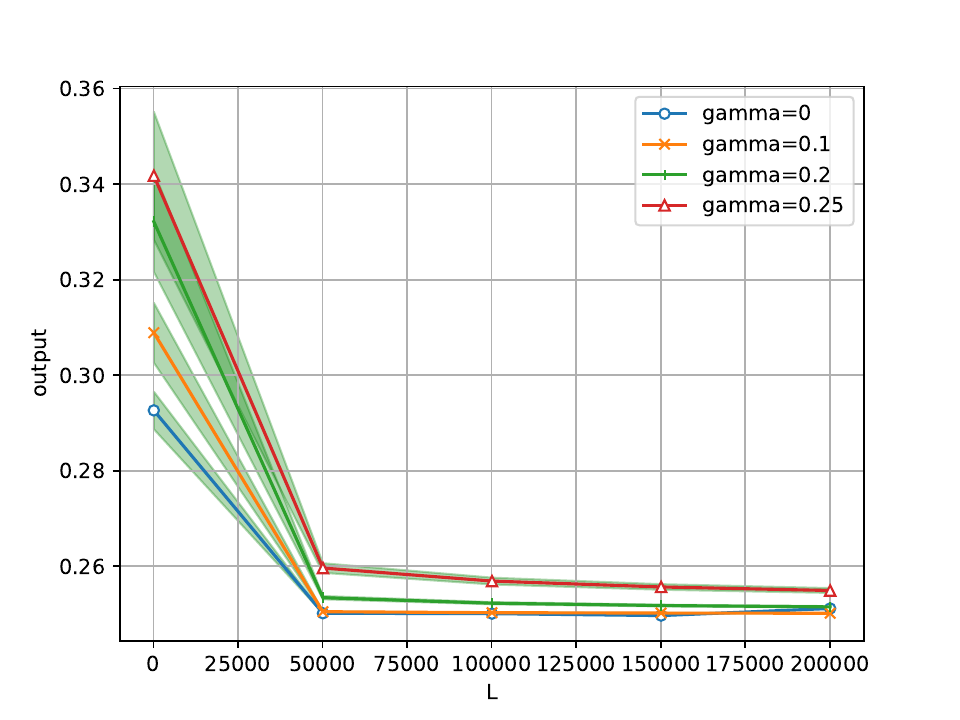} % 请替换为您的图片路径
    \vspace{-7mm}
    % \caption{Average output of RNTK for random input $\x,\x'\in \mathrm{Uniform}(\mb S^2)$ with increasing $L$.}
    \caption{Average RNTK values for random input pairs $\x, \x' \sim \mathrm{Uniform}(\mb S^2)$ as a function of depth $L$.}
    \label{fig: L increase different alpha}
\end{figure}
% \begin{figure*}[htbp]
% \vspace{-3mm}
% \centering
% \subfigure[$L=5000$]{
% \includegraphics[width=0.47\textwidth]{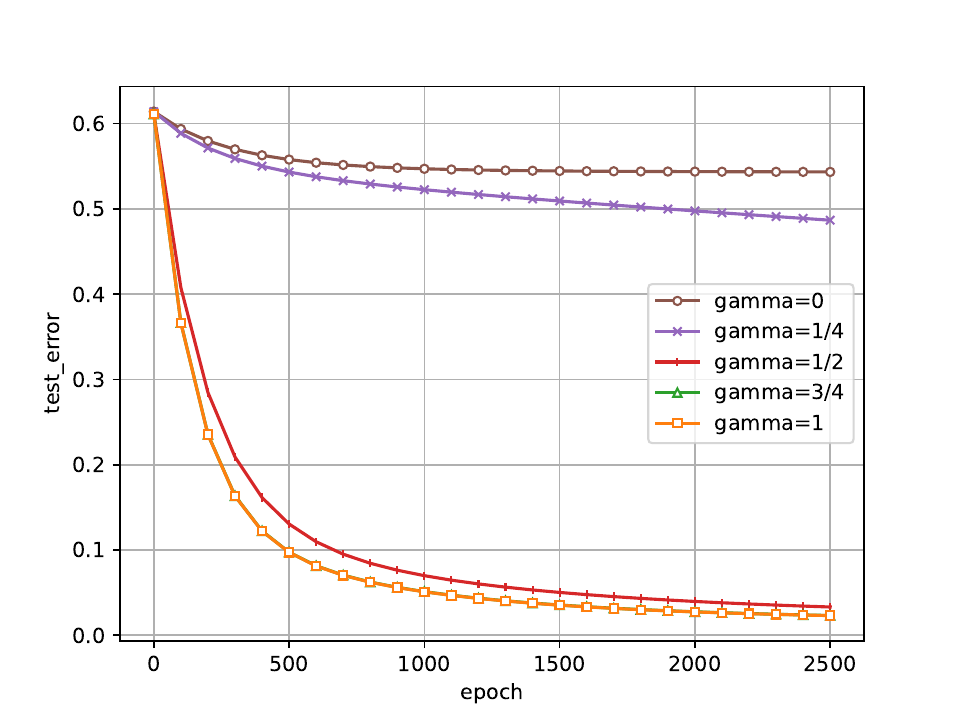}
% %\caption{fig1}
% }
% \quad
% \subfigure[$\text{epoch}=4500$]{
% \includegraphics[width=0.47\textwidth]{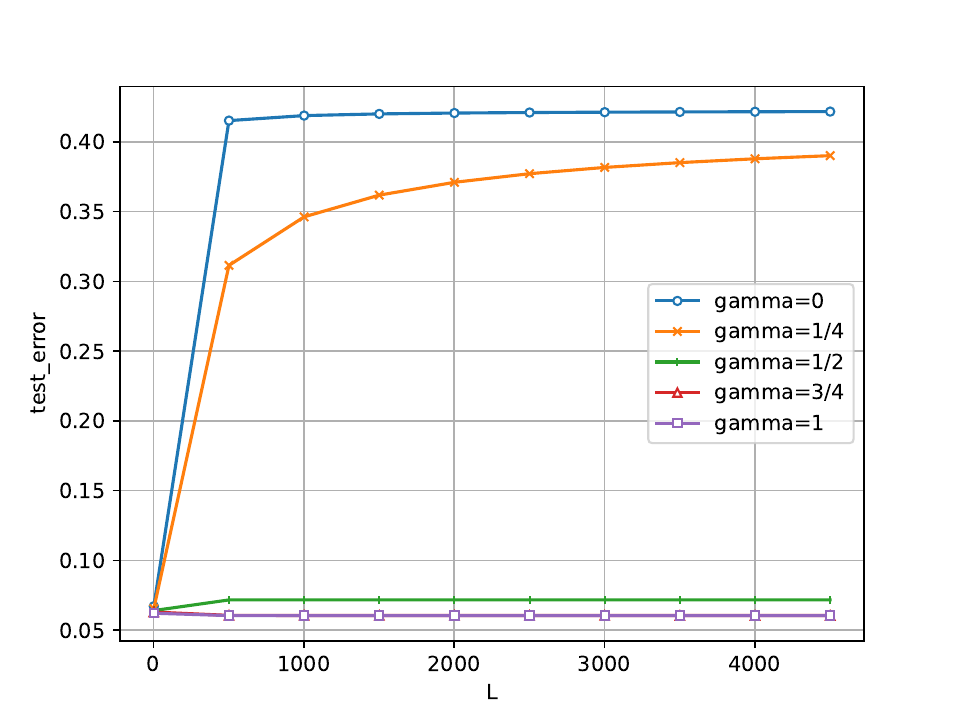}
% }
% \vspace{-3mm}
% \caption{Test error for synthetic data sampled from $\mr{Uniform}(\mb S^2)$ with different values of $\gamma$ (The curves for $\gamma=3/4$ and $\gamma=1$ almost coincide).}\label{figure: test error synthetic data}
% \end{figure*}
This subsection empirically verifies the asymptotic behavior of the RNTK in the large depth limit, as established in \cref{thm:convergence rate of r} and \cref{thm:convergence rate of r gamma 0.25}.
To achieve this, we compute the average normalized RNTK value $\overline{r}^{(L)}(\x,\x')$ over 100 randomly sampled pairs of distinct inputs drawn from $\mathrm{Uniform}(\mathbb{S}^2)$ (the uniform distribution over unit sphere $\mathbb{S}^2$), for increasing values of $L$.
The results are illustrated in \cref{fig: L increase different alpha}, where $\gamma$ takes values in $\{0,0.1,0.2,0.25\}$, $L$ is selected from $\{100,50000,100000,150000,200000\}$. The shaded regions denote the standard error over trials.
We observe that as $L$ increases, the kernel value for distinct inputs progressively converges to $1/4$. Notably, at $L=200,000$, the value closely approximates this theoretical limit, providing strong empirical support for our asymptotic analysis.

% \begin{figure*}[htbp]
% \vspace{-3mm}
% \centering
% \subfigure[$L=5000$]{
% \includegraphics[width=0.47\textwidth]{res_error_GD_L5000.pdf}
% %\caption{fig1}
% }
% \quad
% \subfigure[$\text{epoch}=4500$]{
% \includegraphics[width=0.47\textwidth]{res_error_min_L2,max_L4502.pdf}
% }
% \vspace{-3mm}
% \caption{Test error for synthetic data sampled from $\mr{Uniform}(\mb S^2)$ with different values of $\gamma$ (The curves for $\gamma=3/4$ and $\gamma=1$ almost coincide).}\label{figure: test error synthetic data}
% \end{figure*}
% \newpage

\begin{figure*}[t]
\vspace{-4.8mm}
\centering
\subfigure[$L=5000$]{
\includegraphics[width=0.47\textwidth]{res_error_GD_L5000.pdf}
%\caption{fig1}
}
\quad
\subfigure[$\text{epoch}=4500$]{
\includegraphics[width=0.47\textwidth]{res_error_min_L2,max_L4502.pdf}
}
\vspace{-3mm}
% \caption{Test error for synthetic data sampled from $\mr{Uniform}(\mb S^2)$ with different values of $\gamma$ (The curves for $\gamma=3/4$ and $\gamma=1$ almost coincide).}
\caption{Test error on synthetic data drawn from $\mr{Uniform}(\mb S^2)$ with different values of $\gamma$. Note that the curves for $\gamma=3/4$ and $\gamma=1$ almost coincide.}
\label{figure: test error synthetic data}
\end{figure*}

% \begin{figure*}[t]
%  \centering
%  \vspace{-6mm}
%  \begin{minipage}[b]{0.48\textwidth}
%  \includegraphics[width=\textwidth]{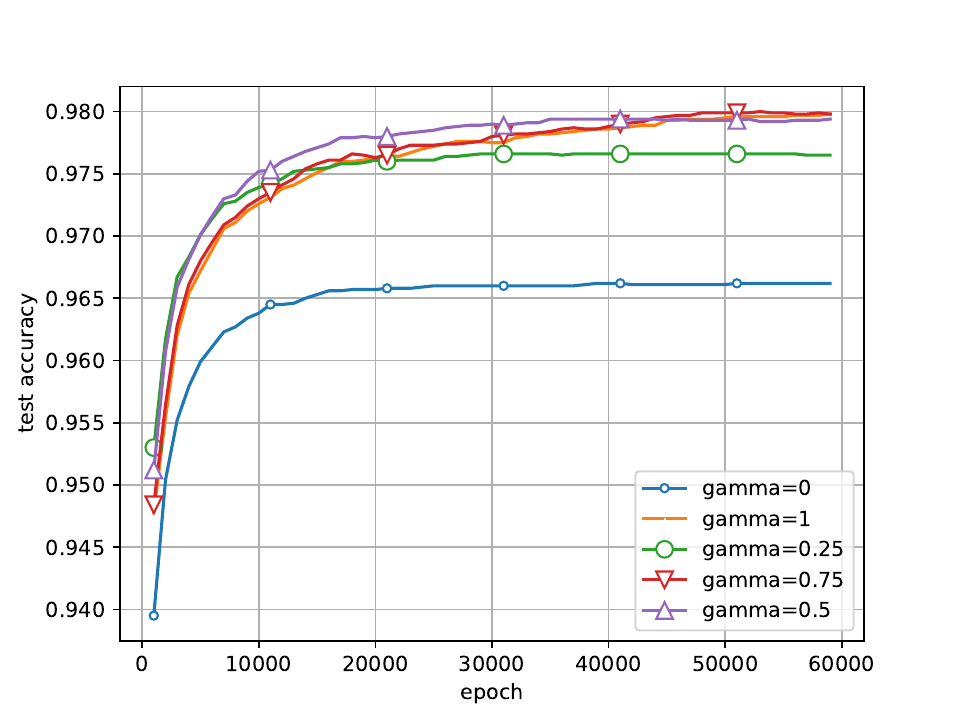}
%  \vspace{-6mm}
%  \caption{Test accuracy of the RNTK-regressor on the MNIST dataset for different values of residual scaling exponent $\gamma$.}
%  \label{figure:Test error for MNIST data}
%  \end{minipage}
%  \hfill
%  \begin{minipage}[b]{0.48\textwidth}
%  \includegraphics[width=\textwidth]{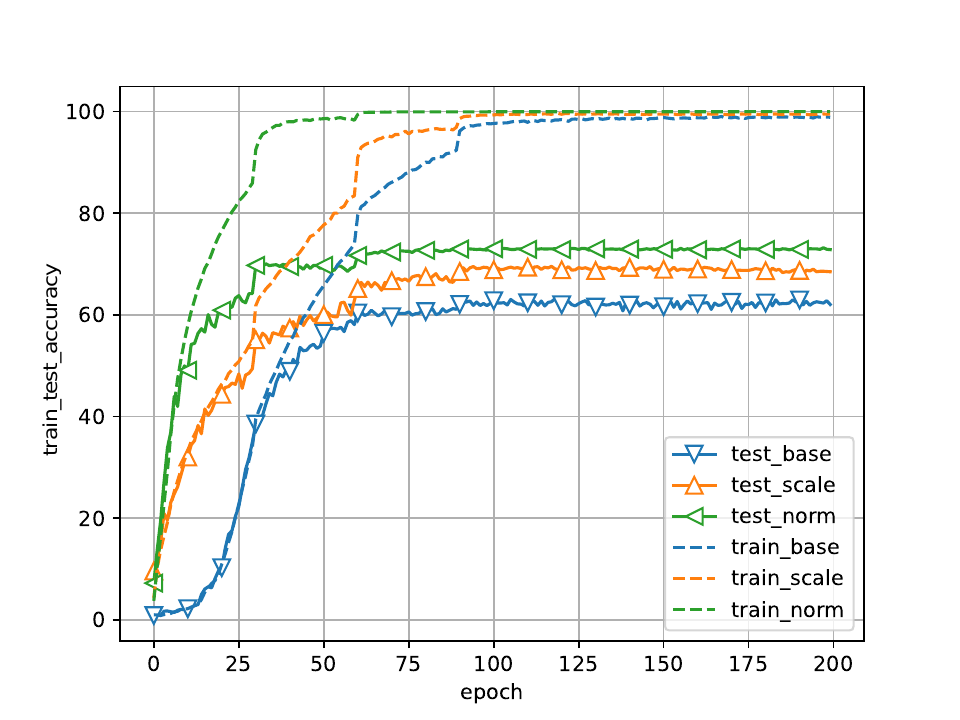}
%  \vspace{-6mm}
% \caption{Training and test accuracy of ResNet-34 on the CIFAR-100 dataset with different strategies.}
%  \label{figure:different normalization strategy}
%  \end{minipage}
% \end{figure*}

\subsection{Improve Generalization Ability by Rapidly-Decaying Residual Branch Scaling}
In this subsection, we empirically demonstrate that the residual branch scaling strategy proposed in this paper significantly enhances the generalization capability of ResNets. 
To this end, we conduct a systematic investigation using two distinct experimental frameworks: (1) kernel regression via gradient descent using the RNTK, and (2) finite-width ConvResNets. Specifically, in \cref{sec:Synthetic and Real Data on RNTK}, we explore the scaling parameter $\alpha=L^{-\gamma}$ across a range of values for $\gamma\in\{0,1/4,1/2,3/4,1\}$. Our empirical analysis encompasses both synthetic and real-world datasets. 

The results consistently show that networks with sufficiently large $\gamma$ values (particularly $\gamma> 1/2$) achieve significantly lower test errors compared to their counterparts with slow or constant decay ($\gamma=0, 1/4$). Particularly noteworthy are the ConvResNet experiments in Section \ref{sec:Real data on ConvResNet}, which demonstrate that appropriate residual scaling can deliver comparable effects to batch normalization. By acting as an implicit regularizer, it effectively mitigates overfitting. These findings further underscore the critical role of depth-dependent scaling mechanisms in preserving the learnability of deep architectures.

\subsubsection{RNTK-based Kernel Methods}% on Synthetic and Real Datasets}
\label{sec:Synthetic and Real Data on RNTK}
% \begin{figure*}[t]
% \vspace{-4.8mm}
% \centering
% \subfigure[$L=5000$]{
% \includegraphics[width=0.47\textwidth]{res_error_GD_L5000.pdf}
% %\caption{fig1}
% }
% \quad
% \subfigure[$\text{epoch}=4500$]{
% \includegraphics[width=0.47\textwidth]{res_error_min_L2,max_L4502.pdf}
% }
% \vspace{-3mm}
% % \caption{Test error for synthetic data sampled from $\mr{Uniform}(\mb S^2)$ with different values of $\gamma$ (The curves for $\gamma=3/4$ and $\gamma=1$ almost coincide).}
% \caption{Test error on synthetic data drawn from $\mr{Uniform}(\mb S^2)$ with different values of $\gamma$. Note that the curves for $\gamma=3/4$ and $\gamma=1$ almost coincide.}
% \label{figure: test error synthetic data}
% \end{figure*}

% \vspace{-3mm}
\paragraph{Synthetic Data}We begin by analyzing a synthetic regression task where the data $(\vX, Y)$ are generated by:%via the following linear model:
\begin{align*}
Y&=\langle \vX,\bbeta\rangle+0.1\cdot\epsilon;
\quad~\vX\sim\mr{Uniform}(\mb S^2),
\end{align*}
where $\bbeta=(1,1,1)^\top$, $\epsilon\sim \mathcal{N}(0,1)$. We generate a total of 200 samples, which are randomly partitioned into a training set of 160 samples and a test set of 40 samples. 
We calculate the test error of RNTK-based kernel regression trained via gradient descent with various training epochs, $\gamma$ and $L$. The learning rate is fixed at $10^{-4}$.

The results are presented in \cref{figure: test error synthetic data}. In the left panel, we fix the depth at $L=5000$ and plot the test error evolution over training epochs for varying $\gamma$. We observe that the test error for small decay values ($\gamma < 1/2$) is significantly higher than that for larger $\gamma$.
In the right panel, we fix the training duration to $4500$ epochs and examine the test error as a function of depth $L$ for different values of $\gamma$. Beyond the general trends consistent with the left panel, we observe a critical divergence in asymptotic behavior: for rapid decay rates ($\gamma =3/4, 1$), the test error decreases as $L$ increases; conversely, for slow decay rates ($\gamma=0, 1/4, 1/2$), the error deteriorates as depth increases, particularly in the large-depth regime. This observation aligns perfectly with our theoretical predictions in \cref{sec:alpha should decrease}. 

% In the left subfigure of \cref{figure: test error synthetic data}, we fix $L=5000$ and plot the test error with increasing training epochs for different values of $\gamma$. The results show that the test error decreases with increasing $\gamma$. In the right subfigure  of \cref{figure: test error synthetic data}, we fix the training epoch to $4500$ and plot the test error with increasing $L$ for different values of $\gamma$. The results also show that the test error decreases as $\gamma$ increases. Furthermore, for $\gamma=3/4,1$, the final test error decreases as $L$ increases while for other values of $\gamma$, the final test error increases as $L$ increases. This observation is consistent with our theoretical results in \cref{sec:alpha should decrease}.

% \begin{figure}[t]
%     \centering
%     \includegraphics[width=0.48\textwidth]{mnist_test_accuracy_L=50,train_num=20000,test_num=10000_newsmall.pdf}
%  \vspace{-6mm}
%  \caption{Test accuracy of the RNTK-regressor on the MNIST dataset for different values of residual scaling exponent $\gamma$.}
%  \label{figure:Test error for MNIST data}
% \end{figure}

% \vspace{-2mm}
\paragraph{Real-World Data (MNIST)}
We extend our empirical validation to a real-world classification task using the MNIST dataset. We randomly sample a subset of 20000 images for training and 10000 images for testing.
The results are shown in \cref{figure:Test error for MNIST data}. We fix the depth at $L=50$ and evaluate the test error of RNTK-based kernel logistic regression with varying $\gamma$.
We observe that, consistently across training epochs, the test error for rapid decay rates ($\gamma \geq 1/2$) is significantly lower than that for slow decay rates ($\gamma < 1/2$). This observation aligns with our theoretical results in \cref{sec:alpha should decrease}.

% \paragraph{Real Data}
% Now we study a real classification task using the MNIST dataset.
% We randomly select $20000$ samples as the training set and $10000$ samples as the test set. The results are shown in \cref{figure:Test error for MNIST data}.
% We fix $L=50$ and plot the test error of RNTK-regression using gradient descent of various $\gamma$.
% The results show that regardless of how $t$ varies, the test accuracy for $\gamma\geq1/2$ is better than that for $\gamma<1/2$. This observation aligns with our theoretical results in \cref{sec:alpha should decrease}.

\begin{figure*}[t]
 \centering
 \vspace{-2mm}
 \begin{minipage}[b]{0.48\textwidth}
 \includegraphics[width=\textwidth]{mnist_test_accuracy_L=50,train_num=20000,test_num=10000_newsmall.pdf}
 \vspace{-5mm}
 \caption{Test accuracy of the RNTK-regressor on the MNIST dataset for different values of residual scaling exponent $\gamma$.}
 \label{figure:Test error for MNIST data}
 \end{minipage}
 \hfill
 \begin{minipage}[b]{0.48\textwidth}
 \includegraphics[width=\textwidth]{real_cifar100_1215_mod.pdf}
 \vspace{-5mm}
\caption{Training and test accuracy of ResNet-34 on the CIFAR-100 dataset with different strategies.}
 \label{figure:different normalization strategy}
 \end{minipage}
\end{figure*}

\subsubsection{ConvResNets on Real-World Datasets}\label{sec:Real data on ConvResNet}

% \begin{figure}[t]
%  \vspace{-5mm}
%     \centering
% \includegraphics[width=0.48\textwidth]{real_cifar100_1215_mod.pdf}
%  \vspace{-6mm}
% \caption{Training and test accuracy of ResNet-34 on the CIFAR-100 dataset with different strategies.}
%  \label{figure:different normalization strategy}
% \end{figure}

% We conduct comprehensive experiments on the CIFAR-100 dataset \cite{krizhevsky2009learning} using a 34-layer ConvResNet ResNet-34. 
We conduct experiments on CIFAR-100 \cite{krizhevsky2009learning} using the standard ResNet-34 model as the representative ConvResNet.
This architecture represents a moderately deep network with standard width, deliberately selected to relax the strict theoretical assumptions of infinite width and depth. 
This validates the applicability of our theory to practical architectures (Tiny-ImageNet and Transformer experiments are provided in Appendix \ref{sec: imagenet}). 
The network is optimized using Adam with an initial learning rate of $3 \times 10^{-4}$ and an exponential decay factor of $0.95$ per epoch.
% This architecture employs a moderately deep network with standard width, deliberately chosen to relax the strict theoretical assumptions of infinite width and depth. This enables us to validate the applicability of our theory to practical architectures %with finite depth and width.
% (further experiments are provided in Appendix \ref{sec: imagenet}). 
% The network is optimized using the Adam optimizer with an initial learning rate of $0.0003$ and an exponential decay factor of $0.95$ applied per epoch.

% We conduct comprehensive experiments on the CIFAR-100 dataset 
% %{\color{red}CIFAR100 and Imagenet} 
% using a 34-layer ConvResNet. The network is optimized via Adam with an initial learning rate of $0.0003$ and an exponential decay factor of $0.95$ per epoch.

% We conduct comprehensive experiments on CIFAR-10 using the 100-layer deep ConvResNet architecture \citep{he2016deep}. The network is trained using the Adam optimizer with an initial learning rate of 0.0003 and an exponential decay factor of 0.95 applied after each training epoch.
To systematically evaluate the effectiveness of different strategies, we compare three distinct approaches: (i) a vanilla baseline without any normalization or scaling, (ii) Residual Branch Scaling with $\gamma=1$, and (iii) standard Batch Normalization. The comparative results regarding training and test accuracy across these strategies (abbreviated as \textit{base}, \textit{scale}, and \textit{norm}, respectively) are presented in \cref{figure:different normalization strategy}.  
% To systematically evaluate the effectiveness of different strategies, we compare three distinct approaches: (i) a vanilla baseline without any normalization or scaling, (ii) %our proposed 
% residual branch scaling with $\gamma=1$, and (iii) standard Batch Normalization. The comparative results regarding training and test accuracy across these strategies (abbreviated as base, scale, and norm, respectively) are presented in \cref{figure:different normalization strategy}.
% % {\color{red}\cref{figure:different normalization strategy}-\ref{figure:different normalization strategy imagenet}}. 
Our experimental findings reveal two key insights: 

% To systematically evaluate the effectiveness of different strategies, we compare three distinct approaches: (i) baseline training without any normalization or scaling method, (ii) conventional batch normalization, and (iii) our proposed residual factor scaling method. The comparative results of training and test accuracy across these strategies are presented in \cref{figure:different normalization strategy}. Our experimental findings reveal two key insights: 

% \begin{enumerate}

(1) The scaling approach indeed improves the final performance of the model (as does the normalization scheme). It is important to note that the slightly inferior performance of the scaling method relative to normalization in this context is expected. To facilitate theoretical analysis, this paper intentionally simplifies certain settings. Notably, numerous studies proposing normalization-replacement schemes based on similar residual branch scaling techniques \citep{zhang2019residual,brock2021highperformancelargescaleimagerecognition} have employed more sophisticated and intricate configurations to achieve performance on par with normalization techniques. Our goal is to provide a theoretical explanation for these techniques, rather than to propose a new state-of-the-art method. %This highlights the potential for refining the scaling approach through more nuanced designs while underscoring its foundational efficacy as a viable alternative framework.} 

% (1) The residual branch scaling approach indeed improves the final performance of the model, a benefit shared by the normalization scheme. Note that while the scaling method performs slightly worse than normalization in this context, this is entirely acceptable. To facilitate theoretical analysis, this paper intentionally adopts a simplified setting. Notably, numerous studies proposing normalization-replacement schemes based on similar residual branch scaling techniques have employed more sophisticated and intricate configurations to achieve performance on par with normalization techniques.

(2) The performance improvement stems from the scaling method's ability to preserve the model's generalization adaptability to data distributions, rather than merely optimizing the training process:
% The enhancement in generalization capability stems directly from the scaling method's ability to mitigate overfitting, rather than facilitating training optimization: 
Although both the scaling approach and normalization significantly improve the training speed (this feature confirms previous conclusions regarding the role of these techniques in optimization), all methods achieve nearly 100\% training accuracy after sufficient training. On this basis, the scaling approach and normalization still enhance the accuracy on the test set, indicating that the performance improvement of scaling approach and normalization lies not only in making training easier but also in enhancing the generalization capability.

% In addition, since the training loss may continue to decrease even when the training accuracy is close to 100\%, we supplement our analysis with a matched training loss evaluation to further strengthen our claims. Due to space limitations, these results are presented in Appendix \ref{sec: Matched Training Loss Evaluation}. We also include additional baselines, including Fixup \citep{zhang2019residual}, ReZero \citep{bachlechner2021rezero}, and DeepNorm \citep{wang2024deepnet}.

In addition, even when training accuracy is close to 100\%, the training loss may continue to decrease. We therefore supplement our analysis with a matched-training-loss evaluation to further support our conclusions. Due to space limitations, we present these results in Appendix \ref{sec: Matched Training Loss Evaluation}. This experiment also includes additional baselines, including Fixup \citep{zhang2019residual}, ReZero \citep{bachlechner2021rezero}, and DeepNorm \citep{wang2024deepnet}, to demonstrate the broader applicability of our argument.
    
% \end{enumerate}
These observations are consistent with our theoretical findings in \cref{sec:alpha should decrease}, providing strong empirical evidence that residual branch scaling acts as an effective regularization technique. Additionally, experiments related to normalization appear to indicate a comparable operational mechanism.

\section{Discussion}\label{sec: Discussion}
\paragraph{Conclusion} This paper investigates how residual branch scaling influences generalization capabilities using NTK theory. By establishing the uniform convergence properties of wide ResNets, we demonstrate that over-parameterized ResNets achieve superior generalization when the residual branch scaling factor decays rapidly with depth, whereas the constant scaling factors lead to significant performance degradation. These theoretical findings provide crucial insights into the fundamental mechanisms underlying residual branch scaling techniques. Furthermore, our experimental results are highly consistent with the theoretical analysis, empirically validating the core conclusions of this paper.

% In this paper, we propose a simple criterion for selecting $\alpha$ based on the neural tangent kernel (NTK) tool. Our findings have raised several open questions. 
% % \tstrev{Firstly, in Theorem \ref{thm:convergence rate of r gamma 0.25}, we only prove the large $L$ limit of RNTK when $\alpha=L^{-\gamma}$ with $\gamma=1/4$, due to the complexity of the proof. However, we believe that the conclusion in Theorem \ref{thm:convergence rate of r gamma 0.25} holds for any $\gamma\in(0,1/2)$. Secondly, the large $L$ limit of RNTK in our paper is only pointwise. We aim to establish the simultaneous uniform convergence of ResNets with respect to both the width $m$ and depth $L$, since uniform convergence can ensure that the generalization error of infinite-depth RNTK can be well approximated by the generalization error of the large $L$ limit of RNTK.}
% {This work studies generalization in fully connected ResNets with large widths. Extending our analysis to convolutional ResNets and exploring networks with limited widths are important future directions. These steps will provide a more comprehensive understanding of residual architectures and their practical implications.}

% \vspace{-3mm}
\paragraph{Limitations} %It should be noted that this paper only analyzes fully connected networks with residual branches, and simplifies residual branch scaling to a scaling factor of a specific form, which cannot fully encompass the complexity of practical network designs. Furthermore, our analysis is currently limited to over-parameterized architectures and differs from practical implementations. Although RNTK eigenvalues may indirectly reflect classification performance, our theoretical verification is currently limited to regression problems, and establishing rigorous theoretical guarantees for classification tasks remains an important open research question.
Our theoretical analysis relies on standard simplifying assumptions prevalent in the NTK literature, such as the infinite-width limit, and focuses on fully connected networks with residual branches. These settings may not fully capture the complexity of architectures used in practice. While our results are derived within a regression framework, the spectral properties analyzed serve as a robust proxy for characterizing classification performance. However, extending these rigorous generalization guarantees to classification tasks and more complex scaling mechanisms remains an important direction for future research. Moreover, our experiments remain primarily limited to relatively small-scale models, datasets, and image classification tasks, and have not yet covered more modern, larger-scale, or more diverse application scenarios.

% Our theoretical analysis relies on standard simplifications common in the NTK literature, such as focusing on fully connected architectures and infinite-width limits, which may not fully capture the complexity of finite-width Convolutional ResNets used in practice. Furthermore, while our results are derived within a regression framework, the spectral properties analyzed often serve as a proxy for classification performance. Extending these rigorous generalization guarantees to classification tasks and more complex scaling schedules remains an important avenue for future research.

% Our study investigates how normalization affects generalization through residual branch scaling effects, but this serves merely as a simplified model of normalization and cannot fully capture its comprehensive mechanisms. Furthermore, our theoretical analysis is confined to over-parameterized neural networks, which still deviates from practical neural network implementations. Additionally, while we exclusively analyze regression problems - despite the fact that RNTK eigenvalue characteristics could indirectly reflect classification performance - rigorous theoretical verification for classification tasks remains an open question requiring future research.

% \newpage
\section*{Acknowledgements}
All authors of this paper are either current students in or have benefited from academic interactions with the Department of Statistics and Data Science at Tsinghua University. We would like to express our sincere gratitude to the department, our advisors, and fellow students for their guidance, training, and valuable exchanges. We also thank members of the Huawei Large Model Data Technology Lab for their kind suggestions and support. Furthermore, we are grateful to the anonymous reviewers and the area chair for their insightful and constructive comments, which have greatly improved the quality of this work.

% \newpage
\section*{Impact Statement}
% Authors are \textbf{required} to include a statement of the potential broader
% impact of their work, including its ethical aspects and future societal
% consequences. This statement should be in an unnumbered section at the end of
% the paper (co-located with Acknowledgements -- the two may appear in either
% order, but both must be before References), and does not count toward the paper
% page limit. In many cases, where the ethical impacts and expected societal
% implications are those that are well established when advancing the field of
% Machine Learning, substantial discussion is not required, and a simple
% statement such as the following will suffice:

This paper presents work whose goal is to advance the field of Machine
Learning. There are many potential societal consequences of our work, none
which we feel must be specifically highlighted here.

% The above statement can be used verbatim in such cases, but we encourage
% authors to think about whether there is content which does warrant further
% discussion, as this statement will be apparent if the paper is later flagged
% for ethics review.

% In the unusual situation where you want a paper to appear in the
% references without citing it in the main text, use \nocite
% \nocite{langley00}

\bibliography{example_paper}
\bibliographystyle{icml2026}

%%%%%%%%%%%%%%%%%%%%%%%%%%%%%%%%%%%%%%%%%%%%%%%%%%%%%%%%%%%%%%%%%%%%%%%%%%%%%%%
%%%%%%%%%%%%%%%%%%%%%%%%%%%%%%%%%%%%%%%%%%%%%%%%%%%%%%%%%%%%%%%%%%%%%%%%%%%%%%%
% APPENDIX
%%%%%%%%%%%%%%%%%%%%%%%%%%%%%%%%%%%%%%%%%%%%%%%%%%%%%%%%%%%%%%%%%%%%%%%%%%%%%%%
%%%%%%%%%%%%%%%%%%%%%%%%%%%%%%%%%%%%%%%%%%%%%%%%%%%%%%%%%%%%%%%%%%%%%%%%%%%%%%%
\newpage
\appendix
\onecolumn
% \section{Proofs of Theorems}
% Detailed math for Section 3.B. \section{Prompts} The specific prompts used for Generation and Correction.
% \section{Implementation Details} Hyperparameters, Compute resources used.D. Case Studies: Show examples where $p@16$ selected a ``lucky guess'' (hallucination) vs. PACE generating a solid correction.

\section{Additional Experiments}% on Different Datasets and Architectures}
% Modern deep learning models have achieved remarkable empirical success across a wide range of real-world applications, from complex reasoning \citep{jiang2025drpdistilledreasoningpruning,hu2026codes} and tool use \citep{schick2023toolformer,xu2026learningusetoolsjust,wu2026toolaugmentedpolicyoptimizationsynergizing} to human motion \citep{yu2024samWav2lip,Li_2025_CVPR_Human_Motion,Li_Jia_Hwang_2026,jia2026ramrecover3dhuman} and others.
Modern deep learning models have achieved remarkable empirical success across a wide range of real-world applications, from complex reasoning \citep{jiang2025drpdistilledreasoningpruning,hu2026codes} and tool use \citep{schick2023toolformer,xu2026learningusetoolsjust,wu2026toolaugmentedpolicyoptimizationsynergizing} to human motion \citep{yu2024samWav2lip,Li_2025_CVPR_Human_Motion,Li_Jia_Hwang_2026,jia2026ramrecover3dhuman}, embodied intelligence \citep{driess2023palm,zeng2025futuresightdrive,zeng2026janusvln}, and other real-world domains.
% Modern deep learning models have achieved remarkable empirical success across a wide range of real-world applications. 
Under the current large-scale training paradigm, models are often trained through multi-stage and increasingly complex pipelines on large, high-quality, and heterogeneous datasets \citep{zhou2023understandingdifficultybasedsampleweighting,rao2025dynamicsamplingadaptsiterative,rao2025data,yu2026mathagentadversarialevolutionconstraint}, and are evaluated using downstream tasks or standard benchmarks 
% that assess generalization, robustness, and cross-task transferability 
\citep{cai2026doestonechangeanswer,jiang-ferraro-2026-beyond}. For fundamental architectural mechanisms such as residual pathway design, their practical effects should ideally be examined under a broad range of conditions, including different application domains, data modalities, training scales, training procedures, and evaluation protocols. 

However, given the theoretical focus of this paper, together with space and computational constraints, a comprehensive evaluation of residual branch scaling in modern large-scale deep learning settings is beyond the scope of this work. 
% Nevertheless, we include additional experiments to provide further empirical support for our theoretical findings. 
Nevertheless, we include additional experiments as a supplement to the empirical results in \Cref{sec:Real data on ConvResNet}, thereby providing further empirical support for our theoretical findings. 
These supplementary experiments are still limited to image classification tasks and relatively moderate-scale architectures. Within this scope, we further examine the behavior of residual branch scaling on additional real-world datasets and across different network architectures.

\subsection{Additional Experiments on Different Datasets and Architectures}\label{sec: imagenet}

Due to space limitations, the experiments regarding real-world data and network architectures in the main text were restricted to ConvResNet on CIFAR-100, where the results were consistent with theoretical expectations (see \cref{sec:Real data on ConvResNet}). In this section, we further validate the practical applicability of our theory through supplementary experiments across a broader range of datasets and architectures. 

Following the conventions in \cref{sec:Real data on ConvResNet}, the labels \textit{base}, \textit{scale}, and \textit{norm} in subsequent figures denote three configurations: (i) the vanilla baseline without normalization or scaling, (ii) Residual Branch Scaling with $\gamma=1$, and (iii) standard normalization (Batch Normalization for ConvResNets; Layer Normalization for Transformers).

% Consistent with the conventions in \cref{sec:Real data on ConvResNet}, the labels \textit{base}, \textit{scale}, and \textit{norm} in the figures of the subsequent experiments correspond to the following three configurations: (i) baseline without normalization or scaling, (ii) Residual Branch Scaling with $\gamma=1$, and (iii) normalization (Batch Normalization for ConvResNets and Layer Normalization for Transformers).
% Due to space limitations, the main text only presents the experimental results of the ConvResNet on the CIFAR-100 dataset, which are consistent with the theoretical expectations (see \cref{sec:Real data on ConvResNet}). This section will further validate the practical applicability of our theory through additional experiments on real-world networks.

% \paragraph{ConvResNets on CIFAR-10 Dataset} xxx
% \begin{figure}[h!]
% \vspace{-5mm}
% \centering
% \subfigure[xxx]{
% \includegraphics[width=0.47\textwidth]{real_imagenet1218_short.pdf}
% %\caption{fig1}
% }
% \hspace{-3mm}
% \subfigure[xxx]{
% \includegraphics[width=0.47\textwidth]{real_imagenet1216.pdf}
% }
% \vspace{-3mm}
% \caption{xxx}\label{figure: imagenet}
% \end{figure}

\paragraph{ConvResNets on Tiny-ImageNet Dataset}
% \subsection{ConvResNets on Tiny-ImageNet Dataset}
To validate the scalability of our conclusions across varying data scales, we conducted supplementary experiments using ResNet-34 on the Tiny-ImageNet dataset \cite{le2015tiny}, with results illustrated in \cref{figure: imagenet}. The core observations are consistent with the findings in \cref{sec:Real data on ConvResNet}: residual scaling (and normalization) not only facilitates the training process but also significantly enhances generalization capability. Notably, these performance gains persist even when the model approaches saturation (with training accuracy near 100\%), indicating that improved training dynamics are not the sole source of benefit. This further corroborates our core conclusion. Given the higher complexity of Tiny-ImageNet compared to CIFAR, the training schedule was extended to ensure full convergence. 
\begin{figure}[ht!]
% \vspace{-3mm}
    \centering
    \includegraphics[width=0.94\textwidth]{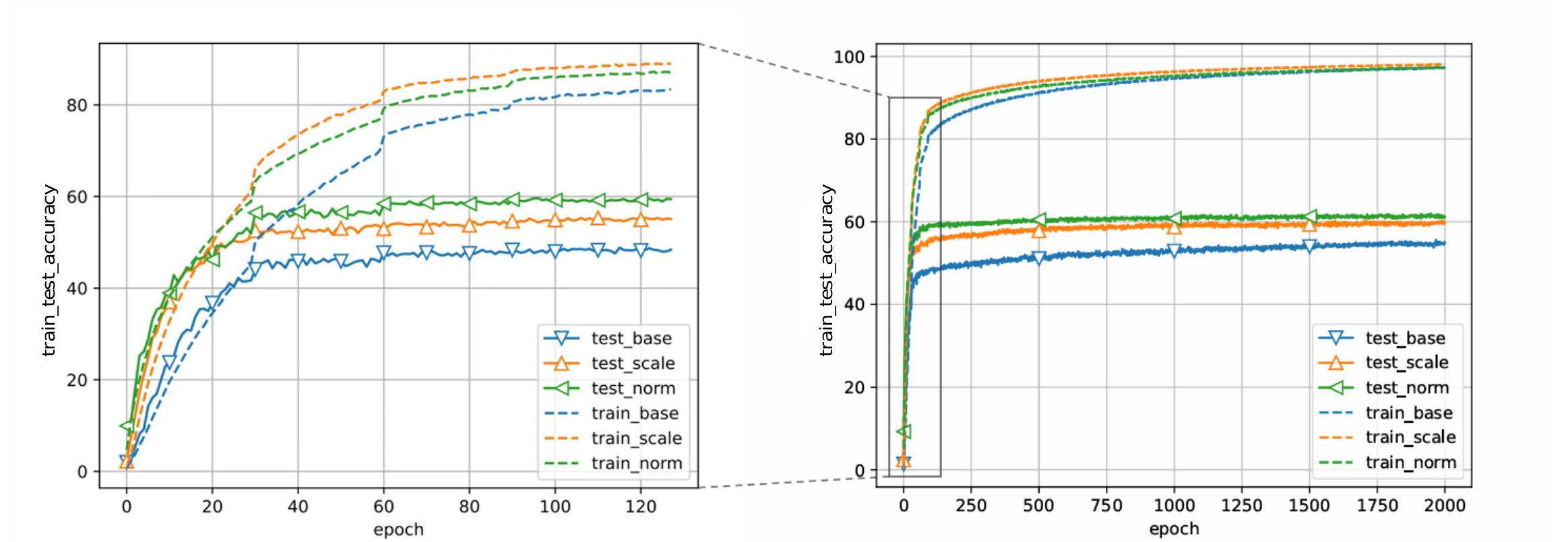} % 请替换为您的图片路径
    % \vspace{-3mm}
    \caption{Training and test accuracy of ResNet-34 on Tiny-ImageNet under different strategies. The right panel displays the complete 2000-epoch training trajectory, demonstrating that residual branch scaling and normalization improve generalization performance even when training accuracy saturates near 100\%. The left panel provides a zoomed-in view of the first 130 epochs, highlighting the differences in early convergence speeds.}
    \label{figure: imagenet}
\end{figure}

\paragraph{Transformers on CIFAR-100 Dataset} 
Our theoretical framework was primarily established on Fully Connected ResNets, and prior experiments have validated the effectiveness of these conclusions within ConvResNets. However, given the growing dominance of Transformer architectures \cite{vaswani2017attention}, it is crucial to investigate the performance of our proposed mechanism in such settings. To this end, we selected the Vision Transformer (ViT, \citealp{dosovitskiy2021an}) as a representative model and conducted supplementary experiments on the CIFAR-100 dataset. 

As shown in \cref{fig: Transformer}, despite the significant disparity in architectural design, the experimental results exhibit trends consistent with prior experiments shown in \cref{figure:different normalization strategy} and \cref{figure: imagenet} (detailed descriptions are hence omitted for brevity), strongly corroborating our core theoretical predictions regarding the residual scaling mechanism. This suggests that our proposed method captures intrinsic nature, and its applicability broadly extends to diverse architectural backbones. 
% As shown in \cref{fig: Transformer}, even with the significant shift in architectural design, the experimental trends corroborate our core theoretical predictions regarding residual scaling. This suggests that the proposed mechanism captures intrinsic  properties that are broadly applicable across diverse architectural backbones. %This section aims to verify whether the proposed mechanism can transcend architectural disparities and continue to play a critical role in optimization and generalization within Transformers—an architecture known for being sensitive to data scale and lacking local inductive bias.

% \begin{figure}[h!]
% \vspace{-5mm}
% \centering
% \subfigure[xxx]{
% \includegraphics[width=0.47\textwidth]{real_imagenet1218_short.pdf}
% %\caption{fig1}
% }
% \hspace{-3mm}
% \subfigure[xxx]{
% \includegraphics[width=0.47\textwidth]{real_imagenet1216.pdf}
% }
% \vspace{-3mm}
% \caption{xxx}\label{figure: imagenet}
% \end{figure}

\begin{figure}[ht!]
\centering
\vspace{2mm}
\begin{minipage}{0.48\linewidth}
    \centering
    \vspace{3mm}
    \includegraphics[width=\linewidth]{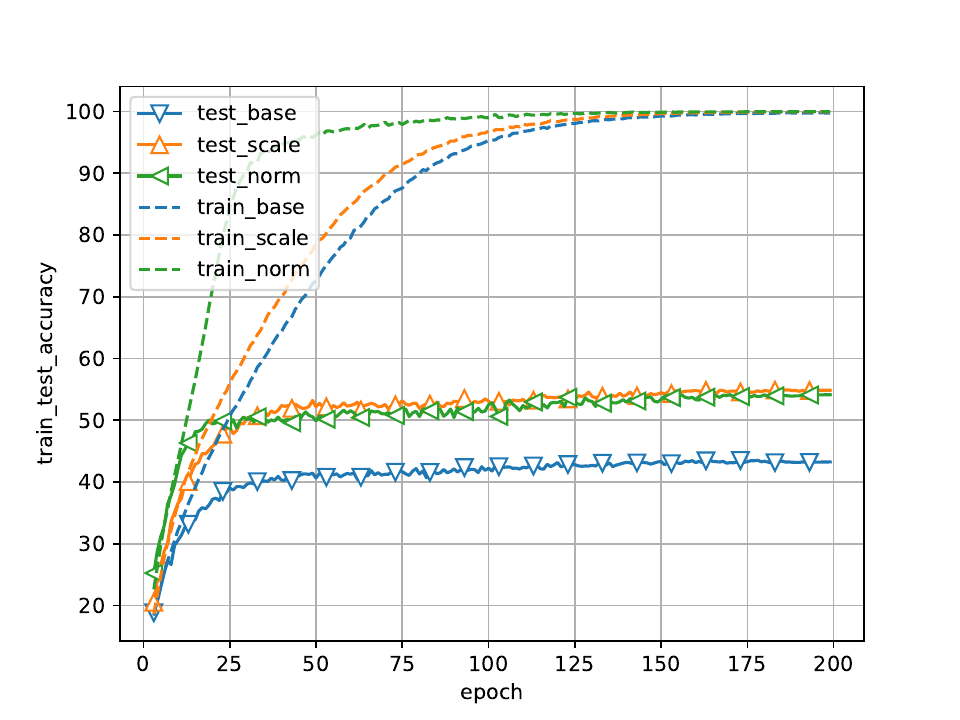}
    \vspace{-6mm}
    \caption{Training and test accuracy of Vision Transformers on the CIFAR-100 dataset with different strategies.}
    \label{fig: Transformer}
\end{minipage}
\hfill
\begin{minipage}{0.48\linewidth}
    \centering
    \captionof{table}{\textbf{Matched Training Loss Accuracy on MNIST.} Matched Accuracy (Matched Acc.) reports the test accuracy when each method reaches a comparable training loss ($\approx 0.05$), reflecting the generalization ability of different methods, while Best Training Loss corresponds to the minimum training loss achieved within the same number of epochs, reflecting their optimization performance.}
    \label{tab: matched_training_loss}
    \vspace{3mm}
    \begin{tabular}{lcc}
    \toprule
    \textbf{Method} & \makecell{\small{\textbf{Matched Acc. (\%)}} \\ \small{\textbf{(Generalization)}}} & \makecell{\small{\textbf{Best Train Loss ($\boldsymbol{\downarrow}$)}} \\ \small{\textbf{(Optimization)}}} \\
    \midrule \rowcolor[HTML]{F2F2F2}
     \textbf{Base} & \textbf{96.66} & \textbf{0.0315} \\ 
      \midrule
     Fixup & 97.84 & 0.0085 \\ 
     ReZero & 97.49 & 0.0046 \\ 
     DeepNorm & 97.30 & 0.0135 \\ 
     Proposed & 97.29 & 0.0047 \\
    \bottomrule
    \end{tabular}
\end{minipage}
\end{figure}

\subsection{Matched Training Loss Evaluation}\label{sec: Matched Training Loss Evaluation}
\paragraph{Experimental Setting}
Although we have compared test accuracy under similar training accuracy in the main experiments, the corresponding training losses may still differ across methods even when their training accuracies are close. To address this issue and provide a more controlled comparison of generalization performance, we supplement the evaluation with a matched-training-loss protocol. Specifically, we record the test accuracy of each method when it reaches approximately the same optimization level, i.e., training loss $\approx 0.05$. At the same time, to assess the optimization performance of different methods, we also report the best training loss achieved within the same number of training epochs.

In addition, to better demonstrate that the mechanism analyzed in this paper broadly applies to different residual-branch scaling methods, we compare our simplified scaling rule with representative baselines, including Fixup \citep{zhang2019residual}, ReZero \citep{bachlechner2021rezero}, and DeepNorm \citep{wang2024deepnet}. It is worth noting that DeepNorm was originally developed for deep Transformers in a Post-Layer-Normalization-style setting and includes additional design choices beyond a simple scaling factor, such as a dedicated initialization scheme. In our experiments, we adopt only the DeepNorm-style residual scaling rule in order to isolate the effect of scaling itself.

\paragraph{Experimental Results}
These experiments are conducted on the MNIST dataset, and the results are summarized in \Cref{tab: matched_training_loss}. As previously discussed, we observe that, when training losses are comparable across methods, various residual-branch scaling methods exhibit significantly higher test accuracy than the unscaled base model. This suggests that the improvement is not merely due to reaching a higher training accuracy, but is also reflected under a comparable optimization level. Furthermore, the inferior minimum training loss of the base model underscores the beneficial impact of branch scaling on the optimization process, providing additional empirical support for our main claims.

We emphasize that this paper does not aim to propose a new residual-scaling module or to compete with existing carefully engineered methods. Instead, our goal is to provide a partial mechanistic explanation for the generalization benefits observed in residual-branch scaling methods. Therefore, even if the simplified scaling rule studied in this paper, $\alpha=L^{-1}$, does not outperform strong baselines specifically designed for practical training, the similar behavior observed across these methods still provides meaningful evidence for the mechanism analyzed in this paper and is sufficient to support our main conclusion. 

\section{Proof of Theorem \ref{thm:kernel:kernel}}\label{sec:uniform_proof}
\subsection{Restatement of Settings and the Proposition}\label{subsec:Restatement}
\cref{thm:kernel:kernel} discusses the uniform convergence of Empirical RNTK to RNTK. However, as noted in \cref{sec:Review of FCNTK and RNTK}, we assumed that the initial output is zero. This assumption is not satisfied by the network structure and initialization method described in \cref{sec:Review of FCNTK and RNTK}. For brevity, we omitted these details in the main text and address them in the appendix. Following related work \citep{li2024eigenvalue}, we make minor adjustments to the network structure and initialization method. The reason we adopt zero initialization is to simplify the technical proof. \citet{chen2024impacts} has shown that, for FCNs, this simplification does not fundamentally affect the uniform convergence, and the same holds for ResNets.

Although these minor adjustments do not substantially affect the content we aim to present, we must restate some settings and notations here. Some of these differ from those agreed upon in the main text. These notations and conventions are limited to \cref{sec:uniform_proof}, which provides a self-contained proof of \cref{thm:kernel:kernel} (of course, due to the changes in notation and settings, the proposition will also be restated as \cref{thm:kernel:kernel_new}). Thus, this does not affect the readability of other sections. In the absence of additional statements, the conventions in \cref{subsec:Notations and model settings} still hold.

%Special Initialization icml2025/mirror.tikz
% \begin{figure}[htbp]
% \centering
% \includegraphics[width=0.6\textwidth]{Special_initialization.png}
% \caption{Special initialization}
% \label{fig:special_initialization}
% \end{figure}

\paragraph{Network Architecture and Initialization} We define a fully connected ResNet with $L$ hidden layers and width $m$ as follows (see \cref{fig:special_initialization} for the structural diagram):
\begin{align}
\begin{split}\label{eq:resnet_fuc_mirror}
f^{m}\mpt{\bm{x};\bm{\theta}} &= \tfrac{\sqrt{2}}{2}\bk{f^{(1),m}\mpt{\bm{x};\bm{\theta}^{(1)}}-f^{(2),m}\mpt{\bm{x};\bm{\theta}^{(2)}}};\\
f^{(p),m}\mpt{\bm{x};\bm{\theta}^{(p)}} &= \bm{v}^{(p)\top}\bm{\alpha}^{(p,L)}; \\
\bm{\alpha}^{(p,l)} &= \bm{\alpha}^{(p,l-1)} + a \sqrt{\tfrac{1}{m}}\bm{V}^{(p,l)}\sigma\mpt{\sqrt{\tfrac{2}{m}} \bm{W}^{(p,l)}\bm{\alpha}^{(p,l-1)}};\\
\bm{\alpha}^{(p,0)}&= \sqrt{\tfrac{1}{m}}\bm{A}^{(p)}\bm{x},\qquad\bm{x}\in\mathcal{D}\subseteq \R^{d+1},%\widehat{\bm{x}};\qquad~\widehat{\x}=(\x^\top,1)^\top
\end{split}
\end{align}
where $p=1,2$ and $l\in[L]$. The network parameters are given by $\vv^{(p)}\in \mathbb{R}^{m}$, $\vV^{(p,l)},\vW^{(p,l)}\in \mathbb{R}^{m\times m}$ and $\vA^{(p)}\in\mathbb{R}^{m\times (d+1)}$. Additionally, the activation function is defined as $\sigma(x):=\max\{x,0\}$, which corresponds to the ReLU function. The scaling factor $a$ (note that this notation differs from the main text) in the residual branch is a hyperparameter. It is set as $C\cdot L^{\gamma}$ for $\gamma\in[0,1]$ and $C>0$.

\begin{figure*}[htbp]%
\centering
\usetikzlibrary{positioning}

\newcommand{\myfontsize}{\fontsize{8pt}{12pt}\selectfont}

\begin{tikzpicture}[x=1.8cm, y=1.25cm, >=stealth,scale=0.85]
\centering
		% input layer nodes
		\foreach \i in {1, ..., 3} {
			\node[circle, draw=black, fill=white, minimum size=1cm] (x\i) at (0, -\i+2.75) {$x_{\i}$};}
		\node at (0, -1) {$\vdots$};
		\node[circle, draw=black, fill=white, minimum size=1cm] (x4) at (0, -1.75) {$x_d$};

		% hidden layer 1 nodes
		\foreach \i in {1, ..., 2} {
			\node[circle, draw=black, fill=white, minimum size=1cm] (a11\i) at (2, -\i+4) {};
			\node at (2, -\i+4) [font=\myfontsize] {$\boldsymbol{\alpha}^{(1,0)}_{\i}$};}
     	\node at (2, 1.25) {$\vdots$};
		\node[circle, draw=black, fill=white, minimum size=1cm] (a113) at (2, 0.5) {};
	    \node at (2, 0.5) [font=\myfontsize] {$\boldsymbol{\alpha}^{(1,0)}_{m}$};
	
	    \foreach \i in {1, ..., 2} {
		    \node[circle, draw=black, fill=white, minimum size=1cm] (a12\i) at (2, -\i+3.5-3){};
		    \node at (2, -\i+3.5-3) [font=\myfontsize] {$\boldsymbol{\alpha}^{(2,0)}_{\i}$};} 
         \node[circle, draw=black, fill=white, minimum size=1cm] (a123) at (2, -3){};
         \node at (2, -2.25) {$\vdots$};
         \node at (2, -3) [font=\myfontsize] {$\boldsymbol{\alpha}^{(2,0)}_{m}$};

		% other hidden layers
		\foreach \i in {1, ..., 2} {
			\node[circle, draw=black, fill=white, minimum size=1cm] (a31\i) at (4, -\i+4) {};
			%\node at (4.5, -\i+4) [font=\myfontsize] {$\boldsymbol{\alpha}^{(i,1)}_{\i}$};
		}
		\node at (4, 1.25) {$\vdots$};
		\node[circle, draw=black, fill=white, minimum size=1cm] (a313) at (4, 0.5) {};
		%\node at (4.5, 0.5) [font=\myfontsize] {$\boldsymbol{\alpha}^{(3,1)}_{m_i}$};
		
		\foreach \i in {1, ..., 2} {
			\node[circle, draw=black, fill=white, minimum size=1cm] (a32\i) at (4, -\i+3.5-3){};
			%\node at (4.5, -\i+3.5-3) [font=\myfontsize] {$\boldsymbol{\alpha}^{(i,2)}_{\i}$};
		} 
		\node[circle, draw=black, fill=white, minimum size=1cm] (a323) at (4, -3){};
		\node at (4, -2.25) {$\vdots$};
		%\node at (4.5, -3) [font=\myfontsize] {$\boldsymbol{\alpha}^{(i,2)}_{m_i}$};
	
		% hidden layer L nodes
	    \foreach \i in {1, ..., 2} {
	    	\node[circle, draw=black, fill=white, minimum size=1cm] (aL1\i) at (6, -\i+4) {};
	    	\node at (6, -\i+4) [font=\myfontsize] {$\boldsymbol{\alpha}^{(1,L)}_{\i}$};}
	    \node at (6, 1.25) {$\vdots$};
	    \node[circle, draw=black, fill=white, minimum size=1cm] (aL13) at (6, 0.5) {};
	    \node at (6, 0.5) [font=\myfontsize] {$\boldsymbol{\alpha}^{(1,L)}_{m}$};
	    
	    \foreach \i in {1, ..., 2} {
	    	\node[circle, draw=black, fill=white, minimum size=1cm] (aL2\i) at (6, -\i+3.5-3){};
	    	\node at (6, -\i+3.5-3) [font=\myfontsize] {$\boldsymbol{\alpha}^{(2,L)}_{\i}$};} 
	    \node[circle, draw=black, fill=white, minimum size=1cm] (aL23) at (6, -3){};
	    \node at (6, -2.25) {$\vdots$};
	    \node at (6, -3) [font=\myfontsize] {$\boldsymbol{\alpha}^{(2,L)}_{m}$};

		% hidden layer 5 nodes
		\node[circle, draw=black, fill=white, minimum size=1cm] (f1) at (7.5, 1.75){$f^{(1)}$};
		\node[circle, draw=black, fill=white, minimum size=1cm] (f2) at (7.5, -1.75){$f^{(2)}$};
		
		% output layer node
		\node[circle, draw=black, fill=white, minimum size=1cm] (o) at (8.5, 0) {$f$};
		
		% connections
		\foreach \i in {1, ..., 4} {
			\foreach \j in {1, ..., 3} {
				\draw (x\i) -- (a11\j);
				\draw (x\i) -- (a12\j);
			}
		}

	   	% new connection between hidden layer 1, 2 and 3
	    \foreach \i in {1,2,3} {
	   	    \foreach \j in {1,2,3} {
	   		    \draw (a11\i) -- (a31\j);
	   		    \draw (a12\i) -- (a32\j);
	   		    \draw (a31\i) -- (aL1\j);
	   		    \draw (a32\i) -- (aL2\j);
	   	    }
	   }

		% connections between hidden layer 3 and 4
		\foreach \j in {1, ..., 3}{
	           \draw (aL1\j) -- (f1);\draw (aL2\j) -- (f2);}
           
        \draw (f1) -- (o);
        \draw (f2) -- (o);

% labels for layers
\node[above=0.2cm of a111] {Layer $0$};
\node[above=0.2cm of a311] {$\cdots$};
\node[above=0.2cm of aL11] {Layer $L$};
%\node[above=0.5cm of f1] {Hidden layer $L+1$};
\node[above=0.2cm of x1] {Input};
\node[above=0.2cm of o] {Output};
\end{tikzpicture}
\caption{Special initialization}
\label{fig:special_initialization}
\end{figure*}

Notably, the network structure given in \cref{eq:resnet_fuc_mirror} differs from that presented in the main text in two key aspects: besides incorporating mirrored initialization, it also omits the bias term in the input layer. However, this omission is not essential. If we augment the input by appending a final component of $1$, i.e., replacing $\x$ with $(\hat{\x}^\top,1)^\top$ for $\hat{\x}\in\mathcal{X}$, and define $\vA^{(p)}=[\vA^{(p)}_0,\bm{b}]$, then we obtain  
\[
\vA^{(p)}\x = \mpt{\vA^{(p)}_0,\bm{b}}\binom{\hat{\x}}{1} = \vA^{(p)}_0\hat{\x}+\bm{b},
\]  
which effectively restores the case where the input layer includes a bias term. This also justifies our choice of setting the input dimension to $d+1$. Therefore, we only need to set  $\mathcal{D}=\mathcal{X}\times\{1\}$. As assumed in the main text, we continue to assume that $\mathcal{X}$ is a compact set, ensuring that $\mathcal{D}$ is bounded below by $1$ and above by some constant $C_{\mathcal{D}}>0$.

% In this section, we make the assumption that $\mathcal{D}\subseteq \mb{R}^{d+1}$ satisfies: there exist some positive absolute constants $c_{\mathcal{D}}$ and $C_{\mathcal{D}}$ such that for any $\bm{x} \in \mathcal{D}$, it holds that $c_{\mathcal{D}} \leq \norm{\bm{x}}_2 \leq C_{\mathcal{D}}$.

To align with the mirrored architecture, we adopt the following mirrored initialization:
\begin{align*}
\begin{gathered}
\text{for}~i,j\in[m],~k\in[d+1],~l\in[L]:\quad~\bm{A}^{(1)}_{i,k},~ \bm{W}^{(1,l)}_{i,j},~\bm{V}^{(1,l)}_{i,j},~ \bm{v}^{(1)}_{i} \iid \mathcal N(0,1);\\
\bm{A}^{(1)} = \bm{A}^{(2)},\quad~ \bm{W}^{(1,l)} = \bm{W}^{(2,l)},\quad~ \bm{V}^{(1,l)} = \bm{V}^{(2,l)}, \quad~\bm{v}^{(1)} = \bm{v}^{(2)}.
\end{gathered}
\end{align*}
Thus, at initialization, we have $f^{m}(\bm{x};\bm{\theta})=0$ for any $\bm{x}\in \mathcal{D}$.

\paragraph*{Training} Let us consider the empirical square loss
\begin{align*}
 \hat{\mathcal{L}}_{n}(f^{m}) = \frac{1}{2n}\sum_{i=1}^{n}\big(y_i-f^{m}(\bm{x}_i;\bm{\theta})\big)^2.
\end{align*}
Denoting $\bm{\theta}_t$ as the parameter at time $t \geq 0$. Other time-varying quantities will also be indexed by $t$ when necessary. The network is trained by the gradient flow
\begin{align*}
 \dot{\bm{\theta}}_t &=-\nabla_{\bm{\theta}}\hat{\mathcal{L}}_{n}(f_{t}^{m})= - \frac{1}{n}\nabla_{\bm{\theta}} f_{t}^{m}(\bm{X}) (f_{t}^{m}(\bm{X})-\bm{y})
\end{align*}
where we emphasize that $\nabla_{\bm{\theta}} f_{t}^{m}(\bm{X}) $ is a $2Lm^2\times n$ matrix. We denote the resulting neural network function by $f_t(\bm x) = f(\bm x;\bm{\theta}_t)$.

\paragraph*{Empirical RNTK and RNTK} With the above architecture, we have
\begin{align*}
 r_t^{m}\mpt{\bm{x},\bm{x}'} :& = \ag{\nabla_{\bm{\theta}} f_t(\bm{x};\bm{\theta}_t), \nabla_{\bm{\theta}} f_t(\bm{x}';\bm{\theta}_t) }\\
 &= \frac{1}{2}\sum_{p=1}^2 \ag{\nabla_{\bm{\theta}^{(p)}} \mpt{f^{(1),m}_t(\bm{x}) - f^{(2),m}_t(\bm{x})},
 \nabla_{\bm{\theta}^{(p)}} \mpt{f^{(1),m}_t(\bm{x}) - f^{(2),m}_t(\bm{x})}} \\
 &= \frac{1}{2} \sum_{p=1}^2 \ag{\nabla_{\bm{\theta}^{(p)}} f^{(p),m}_t(\bm{x}),\nabla_{\bm{\theta}^{(p)}} f^{(p),m}_t(\bm{x}') } = \frac{1}{2} \pt{r_t^{(1),m}(\bm{x},\bm{x}') + r_t^{(2),m}(\bm{x},\bm{x}')},
\end{align*}
where $r_t^{(p),m}(\bm{x},\bm{x}')$ is the Empirical RNTK of $f_t^{(p),m}$, which is the vanilla neural network.
Consequently, we have
\begin{align*}
 % \label{eq:TangentKernel_0}
 r_0^{(1),m}(\bm{x},\bm{x}') = r_0^{(2),m}(\bm{x},\bm{x}')= r_0^{m}(\bm{x},\bm{x}').
\end{align*}
Thus, we only need to show the uniform convergence of one of the Empirical RNTKs $\big\{r_0^{(p),m}\big\}_{p=1,2}$ since another Empirical RNTK has the same uniform convergence. 

Since the Empirical RNTK of the mirrored network is the average of the Empirical RNTKs of its two components, and the mirrored parts share the same RNTK, the mirrored architecture does not affect the expression of the RNTK. However, since our network structure adopts a bias-free input layer, the expression of the RNTK will be slightly modified (also denoted as $r$), as detailed below:
\begin{align*}
r(\bm{x},\bm{x}') = a^2\|\bm{x}\|\|\bm{x}'\|\sum_{l=1}^L B_{l+1}(\tilde{\bm{x}},\tilde{\bm{x}}')\bk{(1+a^2)^{l-1}\kappa_1\mpt{\tfrac{K_{l-1}(\tilde{\bm{x}},\tilde{\bm{x}}')}{(1+a^2)^{l-1}}}+K_{l-1}(\tilde{\bm{x}},\tilde{\bm{x}}')\cdot\kappa_0\mpt{\tfrac{K_{l-1}(\tilde{\bm{x}},\tilde{\bm{x}}')}{(1+a^2)^{l-1}}}},
\end{align*}
where $\tilde{\bm{x}}={\bm{x}}/{\|\bm{x}\|},\tilde{\bm{x}}'={\bm{x}'}/{\|\bm{x}'\|}$, 
$K_{0}(\tilde{\bm{x}},\tilde{\bm{x}}') =\tilde{\bm{x}}^\top\tilde{\bm{x}}'$, $B_{L+1}(\tilde{\bm{x}},\tilde{\bm{x}}')=1$ and
\begin{gather*}
\kappa_0(u) = \frac{1}{\pi}(\pi-\arccos u), \qquad \kappa_1(u) = \frac{1}{\pi}\left(u(\pi-\arccos u)+\sqrt{1-u^2}\right),\\
K_{l}(\tilde{\bm{x}},\tilde{\bm{x}}') =K_{l-1}(\tilde{\bm{x}},\tilde{\bm{x}}') + a^2 (1+a^2)^{l-1}\kappa_1\mpt{\frac{K_{l-1}(\tilde{\bm{x}},\tilde{\bm{x}}')}{(1+a^2)^{l-1}}},\\
B_l(\tilde{\bm{x}},\tilde{\bm{x}}') = B_{l+1}(\tilde{\bm{x}},\tilde{\bm{x}}')\left[1+ a^2\kappa_0\mpt{\frac{K_{l-1}(\tilde{\bm{x}},\tilde{\bm{x}}')}{(1+a^2)^{l-1}}}\right].
\end{gather*}
It is not difficult to see that when the input is replaced by $(\x^\top,1)^\top$, the expression of the RNTK becomes consistent with \cref{eq: NTK of ResNet}.

\paragraph{The complete statement of \cref{thm:kernel:kernel} and \cref{coro:risk:approx}} 
Let us denote the minimal eigenvalue of the kernel matrix $r$ as $\lambda_0 = \lambda_{\min}\mpt{ r(\bm{X},\bm{X})}$ . \cref{lem:positive_definite} will show that $r$ is positive definite and thus $\lambda_0 > 0$ almost surely. 
\begin{theorem}[\textbf{The complete statement of \cref{thm:kernel:kernel}}]\label{thm:kernel:kernel_new}
There exists a polynomial $\operatorname{poly}(\cdot):\mb R^4\to\mb R$, such that for any given training data $\{(\bm{x}_{i},y_{i}),i\in[n]\}$ and any $\delta\in(0,1)$, when the width $m\geq \operatorname{poly}(n,\lambda_0^{-1},\|\bm{y}\|_2,\log(1/\delta))$, we have
\begin{align*}
 \sup_{t\geq 0} \sup_{\bm{x},\bm{x}'\in \mathcal{D}}|r_t^{m}(\bm{x},\bm{x}')-r(\bm{x},\bm{x}')|\leq O\mpt{m^{-\frac{1}{12}}\sqrt{\log m}},
\end{align*}
with probability at least $1-\delta$.
\end{theorem}

\begin{corollary}[\textbf{The complete statement of \cref{coro:risk:approx}}]
\label{coro:risk:approx_new}
There exists a polynomial $\operatorname{poly}(\cdot):\mb R^5\to\mb R$, such that for any given  training data $\{(\bm{x}_{i},y_{i}),i\in[n]\}$,  any $\epsilon>0$ and any $\delta\in(0,1)$, when the width $m\geq \operatorname{poly}(n,\lambda_0^{-1},\|\bm{y}\|_2,\log(1/\delta), 1/\epsilon)$, we have 
\begin{equation*}
\sup_{t\geq 0}\abs{\mathcal{E}(\fRN)-\mathcal{E}(\fRNTK)}\leq \epsilon 
\end{equation*}
holds with probability at least $1-\delta$ with respect to the random initialization.
\end{corollary}
\begin{remark}
Applying the proof strategy in Proposition 3.2 and Theorem 3.1 of \citet{lai2023generalization}, we can utilize \cref{thm:kernel:kernel_new} to complete the proof of \cref{coro:risk:approx_new}.
% We emphasize that since the generalization ability involves integration, only the uniform convergence of RNK to RNTK can guarantee the closeness of their generalization abilities. In contrast, pointwise convergence of RNK is insufficient to ensure such proximity.
% We have to emphasize that because the generalization ability appears in the form of integration, only the uniform convergence from RNK to RNTK can ensure the closeness of the generalization ability. In other words, the pointwise convergence of RNK is insufficient for the closeness of the generalization ability.
\end{remark}

\paragraph*{Further notations}
For a vector $\bm{v}=(v_1, v_2, \cdots, v_m)\in\mb{R}^m$, we use $\norm{\bm{v}}_2$ (or simply $\norm{\bm{v}}$) to represent the Euclidean norm.
Additionally, if we have a univariate function $f:\mb{R}\to\mb{R}$, we define $f(\bm{v}) = (f(v_1), f(v_2),\cdots, f(v_m))\in\mathbb{R}^m$.
We denote by $\norm{\bm{M}}_2$ and $\norm{\bm{M}}_{\mr{F}}$ the spectral and Frobenius norm of a matrix $\bm{M}$ respectively.
Also, we use $\norm{\,\cdot\,}_0$ to represent the number of non-zero elements of a vector or matrix.
For matrices $\bm{A}\in\mb{R}^{n_1\times n_2}$ and $\bm{B}\in\mb{R}^{n_2\times n_1}$,
we define $\ag{\bm{A},\bm{B}} = \mr{Tr}(\bm{A}\bm{B}^\top)$.
We remind that $\ag{\bm{M}, \bm{M}} = \norm{\bm{M}}_{\mr{F}}^2$ in this way.

To simplify the notation, except for $f^{(p),m}$ and $r^{(p),m}$, we sometimes omit the index $p$ on the parameters $\bm{W}^{(l)}$, $\bm{V}^{(l)}$, $\bm{A}$, $\bm{v}$ and their derived notations. If there is no additional statement, the conclusions hold for $p=1,2$.

For $l\in\cl{0,1,\cdots,L}$, denote $\bm{\delta}^{(l)}(\bm x)=\nabla_{\bm{\alpha}^{(l)}}f^{(p),m}(\bm x)=\nabla_{\bm{\alpha}^{(l)}}\bm{\alpha}^{(L)}(\bm x)\bm{v}$. It is easy to check that
\begin{align*}
\bm{\delta}^{(l)}(\bm x)=
\begin{cases}
\nabla_{\bm{\alpha}^{(l)}}\bm{\alpha}^{(l+1)}(\bm x)\bm{\delta}^{(l+1)}(\bm x),~& l=0,1,\cdots,L-1;\\
\bm{v},~& l=L,
\end{cases}
\end{align*}
where 
\begin{align*}
\nabla_{\bm{\alpha}^{(l-1)}}{\bm{\alpha}^{(l)}}(\bm x)=\pt{\bm{I}_m + \frac{\sqrt 2 a}{m} \bm{V}^{(l)} \bm{D}^{(l)}(\bm x) \bm{W}^{(l)}}^\top\qquad~\text{for}~l\in[L]
\end{align*}
and
\begin{align*}
 \bm{D}^{(l)}(\bm{x}) = \text{diag}\mpt{\sigma'\mpt{ \sqrt{\frac{2}{m}} \bm{W}^{(l)}\bm{\alpha}^{(l-1)}(\bm x)}}\qquad~\text{for}~l\in[L].
\end{align*}

The gradient of $\bm{W}^{(l)}$ and $\bm{V}^{(l)}$ can be presented as follows:
\begin{align}
\begin{split}\label{eq: nabla_W_V_f}
&\nabla_{\bm{W}^{(l)}} f^{(p),m}(\bm{x}) = \frac{\sqrt 2 a}{m} \bm{D}^{(l)}(\bm x) \bm{V}^{(l),T} \bm{\delta}^{(l)}\bm{\alpha}^{(l-1),T}=a\bm{\gamma}^{(l)}(\bm x)\bm{\alpha}^{(l-1),T}(\bm x);\\
&\nabla_{\bm{V}^{(l)}} f^{(p),m}(\bm{x}) = \frac{\sqrt 2 a}{m} \bm{\delta}^{(l)}(\bm x) \bk{\sigma\mpt{\bm{W}^{(l)} \bm{\alpha}^{(l-1)}(\bm x)}}^\top =a\bm{\delta}^{(l)}(\bm x)\bm{\eta}^{(l),T}(\bm x),
\end{split}
\end{align}
where 
\begin{align*}
 \bm{\gamma}^{(l)}(\bm x)=\frac{\sqrt 2}{m}\bm{D}^{(l)}(\bm x) \bm{V}^{(l),T} \bm{\delta}^{(l)}(\bm x);\qquad~\bm{\eta}^{(l)}(\bm x) = \frac{\sqrt{2}}{m}\bm{D}^{(l)}(\bm x)\bm{W}^{(l)}\bm{\alpha}^{(l-1)}(\bm x).
\end{align*}
And the Empirical RNTK can be formulated as 
\begin{align}
\begin{split}\label{eq: RNK_formulate}
 r_t^{(p),m}(\bm{x},\bm{x}') =\sum_{l=1}^{L}\Big(&\ag{ \nabla_{\bm{W}^{(l)}} f^{(p),m}_t(\bm{x}), \nabla_{\bm{W}^{(l)}} f^{(p),m}_t(\bm{x}')} \\
 +& \ag{\nabla_{\bm{V}^{(l)}} f^{(p),m}_t(\bm{x}), \nabla_{\bm{V}^{(l)}} f^{(p),m}_t(\bm{x}')}\Big).
\end{split}
\end{align}

To shorten the notations, we denote
\begin{align*}
\begin{gathered}
 \bm{\delta}^{(l)}_{t,\bm x} = \bm{\delta}^{(l)}_t(\bm x), \quad \bm{\alpha}^{(l)}_{t,\bm x} = \bm{\alpha}^{(l)}_t(\bm x), \quad \bm{D}^{(l)}_{t,\bm x} = \bm{D}^{(l)}_t(\bm x), \quad \bm{\gamma}^{(l)}_{t,\bm x} = \bm{\gamma}^{(l)}_t(\bm x), \quad \bm{\eta}^{(l)}_{t,\bm x} = \bm{\eta}^{(l)}_t(\bm x).\\
 \Delta\bm{\delta}^{(l)}_{\bm x \bm z} :=\bm{\delta}^{(l)}_{t,\bm x} -\bm{\delta}^{(l)}_{0,\bm z}, \quad \Delta\bm{\alpha}^{(l)}_{\bm x \bm z}={\bm{\alpha}^{(l)}_{t,\bm x} -\bm{\alpha}^{(l)}_{0,\bm z}}, \quad \Delta\bm{\gamma}^{(l)}_{\bm x \bm z} :=\bm{\gamma}^{(l)}_{t,\bm x} -\bm{\gamma}^{(l)}_{0,\bm z},\quad \Delta\bm{\eta}^{(l)}_{\bm x \bm z} :=\bm{\eta}^{(l)}_{t,\bm x} -\bm{\eta}^{(l)}_{0,\bm z}
 \end{gathered}
\end{align*}
and
\begin{align*}
\begin{gathered}
\bm{D}^{(l)\prime}_{\bm x \bm z}=\bm{D}^{(l)}_{t,\bm z} - \bm{D}^{(l)}_{0,\bm z}, \quad 
 \bm{g}_{l,\bm x \bm z}'=\sqrt{\frac{2}{m}} \bm{W}^{(l)}_t \bm{\alpha}^{(l-1)}_{t,\bm x}- \sqrt{\frac{2}{m}}\bm{W}^{(l)}_0 \bm{\alpha}^{(l-1)}_{0,\bm z},\\
 \Delta\bm{V}^{(l)}=\bm{V}^{(l)}_t -\bm{V}^{(l)}_0;\quad~ \Delta\bm{W}^{(l)}=\bm{W}^{(l)}_t -\bm{W}^{(l)}_0.
 \end{gathered}
\end{align*}

%Denote $\Delta\bm{\delta}^{(l)}_{\bm x \bm z} :=\bm{\delta}^{(l)}_{t,\bm x} -\bm{\delta}^{(l)}_{0,\bm z}$ Denote $\Delta\bm{\gamma}^{(l)}_{\bm x \bm z} :=\bm{\gamma}^{(l)}_{t,\bm x} -\bm{\gamma}^{(l)}_{0,\bm z}$, $\Delta\bm{\eta}^{(l)}_{\bm x \bm z} :=\bm{\eta}^{(l)}_{t,\bm x} -\bm{\delta}^{(l)}_{0,\bm z}$

\subsection{Positive Definiteness of RNTK}
% In the classical setting of kernel regression, the spectral properties of the kernel is indispensable
% \citet{caponnetto2007optimal,steinwart2008support,lin2020optimal}.
% Therefore, we have to investigate the spectral properties of RNTK in this section.

% Positive definiteness of a kernel function is crucial in kernel regression. It is the basis of the uniform convergence of the NNK to NTK (see Section \cref{sec:NNK uniformly converges to NTK}). Positive definiteness of some neural tangent kernels defined on $\bbS^{d-1}$ was first proved in \citet{jacot2018_NeuralTangent}.
% The positive definiteness of fully connected NTK on any compact subset $\mc X\subset\R^d$ was proved for one-hidden-layer biased neural network in \citet{lai2023generalization}. Then \citet{li2024eigenvalue} generalize it to multiple layer fully connected NTK. \citet{lai2023generalizationresidual} gave the positive definiteness of multiple layer RNTK on $\S^{d-1}$. 

As noted in \citet{caponnetto2007optimal,lin2020optimal}, studying the spectral properties of kernels is essential in classical kernel regression. Therefore, in this subsection, we review key spectral properties of the RNTK.

To ensure the uniform convergence of the neural network kernel to NTK in kernel regression (see \cref{sec:NNK uniformly converges to NTK}), the positive definiteness of the kernel function is crucial. We first explicitly recall the following definition of
positive definiteness to avoid potential confusion.
\begin{definition}
 A kernel function $K$ is positive definite (semi-definite) over domain $\mathcal{A}$ if for any positive integer $n$ and any $n$ different points $\bs{x}_{1},\dots,\bs{x}_{n}\in \mathcal{A}$, the smallest eigenvalue $\lambda_{\min}$ of the matrix $K(\bs{X},\bs{X})=(K(\bs{x}_{i},\bs{x}_{j}))_{1\leq i,j\leq n}$ is positive (non-negative).
\end{definition}

The positive definiteness of FCNTK defined on the unit sphere was first proved by \citet{jacot2018_NeuralTangent}. Recently, \citet{lai2023generalization} proved the positive definiteness of NTK for one-hidden-layer biased FCNs on $\R$, and \citet{li2024eigenvalue} generalized it to multiple-hidden-layer FCNTK on $\R^d$. 

% However, existing literature has not tell us if the bias-free multiple layer biased RNTK is positive definite.
% However, the positiveness of most neural tangent kernels defined on $\R^{d}$ remains unknown and is widely assumed in the literature
% ~\citet{du2018gradient,hu2021regularization}.

As for RNTK, \citet{belfer2024spectral} showed that for inputs distributed uniformly on the hypersphere $\mathbb{S}^{d}$, the $k^{th}$ multiple eigenvalue of multiple-hidden-layer RNTK $r({\bm{x}},{\bm{x}}')$ is $\mu_k\asymp k^{-(d+1)}$, which implies $r({\bm{x}},{\bm{x}}')$ is positive definite on $\mathbb{S}^{d}$ when $L\geq 2$. For the input on $\mathcal{D} = \mathcal{X}\times\{1\}\subset\mathbb{R}^{d+1}$, the following lemma establishes the positive definiteness of $r(\bm{x},\bm{x'})$.
\begin{lemma}\label{lem:positive_definite}$r(\bm{x},\bm{x'})$ is positive definite on $\mathcal{D}=\mathcal{X}\times\{1\}$.
\end{lemma}
\begin{proof}
For $L=1$, the RNTK $r$ is equal to the NTK of FCNs \citep{belfer2024spectral}, At this point, the lemma is already covered by Proposition 2.1 of \citet{lai2023generalization}. Thus, in the following, we consider only the case where $L\geq2$.

 For any positive integer $n$ and any $n$ different points $X_n = [\bm{x}_{1},\dots,\bm{x}_{n}]$, we have
 \begin{align*}
 r(X_n,X_n) = \text{diag}(\|\bm{x}_1\|,...,\|\bm{x}_n\|) r(\tilde{X}_n,\tilde{X}_n) \text{diag}(\|\bm{x}_1\|,...,\|\bm{x}_n\|),%\succeq c^2 r(\tilde{X}_n,\tilde{X}_n)
 \end{align*}
 where $\tilde{X}_n = ({\bm{x}_{1}}/{\|\bm{x}_{1}\|_2},\dots,{\bm{x}_{n}}/{\|\bm{x}_{n}\|_2})$. Since \citet{belfer2024spectral} showed that $r(\tilde{X}_n,\tilde{X}_n)$ is positive definite. Thus, $r(X_n,X_n)$ is positive definite.
\end{proof}

\subsection{Initialization}
\begin{theorem}[Theorem 4 of \citet{huang2020deep}] \label{thm:initialization of NTK}There exist some positive absolute constants $C_1>0$ and $C_2\geq 1$, such that if $\varepsilon\in\pt{0,1/2}$, $\delta\in\pt{0,1}$ and $m\geq C_1\varepsilon^{-4}\log\mpt{C_2/\delta}$, then for any fixed $\bm{z}, \bm{z}'\in \mathbb{S}^{d-1}$, with probability at least $1-\delta$, we have
\[\abs{r_0^{(p),m}\mpt{\bm{z},\bm{z}'}-r\mpt{\bm{z},\bm{z}'}}\leq\varepsilon.\]
\end{theorem}
According to this result, we can get the following corollary.
\begin{corollary}\label{prop:initialization of NTK}
 % For any $\delta\in(0,1)$. Suppose $m\geq \operatorname{poly}(\log\frac1\delta)$. Then for any inputs $\bm{x}, \bm{x}'\in\mathbb{S}^{d-1}$, with probability at least $1-\delta$, we have
 % \[\abs{r_0^{m}\mpt{\bm{x},\bm{x}'}-r\mpt{\bm{x},\bm{x}'}}=O\mpt{m^{-1/5}}.\]
There exist some positive absolute constants $C_1>0$ and $C_2\geq 1$, such that if $\delta\in\pt{0,1}$ and $m\geq C_1\pt{\log\mpt{C_2/\delta}}^5$, then for any fixed $\bm{z},\bm{z}'\in \mathcal{D}$, with probability at least $1-\delta$, we have
\[\abs{r_0^{(p),m}\mpt{\bm{z},\bm{z}'}-r\mpt{\bm{z},\bm{z}'}}=O\mpt{m^{-1/5}}.\]
\begin{proof}
For any fixed $\bm{z},\bm{z}'\in \mathcal{D}$, we have $\bm{z}/\norm{\bm{z}},\bm{z}'/\norm{\bm{z}'}\in \mathbb{S}^{d}$. According to \cref{thm:initialization of NTK} and let $\varepsilon=m^{-1/5}$, under the conditions we have previously established, we can obtain that
\[\abs{r_0^{(p),m}\mpt{\frac{\bm{z}}{\norm{\bm{z}}},\frac{\bm{z}'}{\norm{\bm{z}'}}}-r\mpt{\frac{\bm{z}}{\norm{\bm{z}}},\frac{\bm{z}'}{\norm{\bm{z}'}}}}=O\mpt{m^{-1/5}}.\]
By combining this result with the following relationships:
\[r_0^{(p),m}\mpt{\bm{z},\bm{z}'}=\norm{\bm{z}}\norm{\bm{z}'}r_0^{(p),m}\mpt{\frac{\bm{z}}{\norm{\bm{z}}},\frac{\bm{z}'}{\norm{\bm{z}'}}};\qquad r\mpt{\bm{z},\bm{z}'}=\norm{\bm{z}}\norm{\bm{z}'}r\mpt{\frac{\bm{z}}{\norm{\bm{z}}},\frac{\bm{z}'}{\norm{\bm{z}'}}}\]
and $\norm{\bm{z}}$, $\norm{\bm{z}'}\leq C_{\mathcal{D}}$, we can get the conclusion.
\end{proof}

% There exists a polynomial $\poly\mpt{\cdot}$, such that if $\delta\in\pt{0,1}$ and $m\geq\poly\mpt{\log\mpt{1/\delta}}$, then for any fixed $\bm{z},\bm{z}'\in \mb{S}^{d-1}$, with probability at least $1-\delta$, we have
% \[\abs{r_0^{m}\mpt{\bm{z},\bm{z}'}-r\mpt{\bm{z},\bm{z}'}}=O\mpt{m^{-1/5}}.\]
\end{corollary}

\begin{lemma}[Corollary 5.35 in \citet{vershynin2010introduction}]\label{lem: random matrix base}
Let $\bm{M}$ be an $a\times b$ matrix whose entries are independent standard normal random variables. Then for every $t\geq 0$, with probability at least $1-2\exp\mpt{- t^2/2}$, we have
\[\norm{\bm{M}}_2\leq\sqrt{a}+\sqrt{b}+t.\]
\end{lemma}

According to this lemma, we can directly get
\begin{corollary}[Random matrix]\label{coro: Random matrix}
At initialization, there exists a positive absolute constant $C$, such that if $m\geq C$, then with probability at least $1-\exp\mpt{-\Omega(m)}$, the spectral norm of each matrix satisfies
\[\norm{\bm{A}}_2 = O\mpt{\sqrt{m}},\quad~\norm{\bm{W}^{(l)}_0}_2 = O\mpt{\sqrt{m}},\quad~\norm{\bm{V}^{(l)}_0}_2 = O\mpt{\sqrt{m}}\quad~\text{and}\quad~\norm{\bm{v}}_2 = O\mpt{\sqrt{m}}\quad~\text{for}~l\in[L].\]
\end{corollary}

\begin{lemma}\label{lem:up_bound_of_alpha_delta}There exists a positive absolute constant $C$, such that if $m\geq C$, then with probability at least $1-\exp\mpt{-\Omega(m)}$ over the randomness of $\bm{A}$, $\bm{W}^{(l)}_0$, $\bm{V}^{(l)}_0$ and $\bm{v}$, for any $\bm{x}\in\mathcal{D}$, we have
 \begin{align*}
 \norm{\bm{\alpha}^{(l)}_{0,\bm x}}_{2} =O(1) \quad~\text{and} \quad~\norm{\bm{\delta}^{(l)}_{0,\bm x}}_2=O\mpt{\sqrt m}\qquad~\text{for}~l\in\cl{0,1,\cdots,L}.
 \end{align*}
\end{lemma}

\begin{proof}

We prove by induction that $\norm{\bm{\alpha}^{(l)}_{0,\bm x} }_{2} =O(1)$.

%Induction
Base case: since $\norm{\bm{x}}_2\leq C_{\mathcal{D}}$, with high probability over the random initialization of $\bm{A}$, we have $\norm{\bm{\alpha}^{(0)}_{0,\bm x}}_2= \norm{\bm{A}\bm{x}/\sqrt m}_2\leq C_{\mathcal{D}}\norm{\bm{A}}_2/\sqrt m=O(1)$.
Assume that $\norm{\bm{\alpha}^{(l-1)}_{0,\bm x}}_2 =O(1)$, we can get\begin{align*}
\norm{\bm{\alpha}^{(l)}_{0,\bm x}}_{2} &= \norm{\bm{\alpha}^{(l-1)}_{0,\bm x} +a\frac{1}{\sqrt{m}}\bm{V}^{(l)}_0 \sigma\mpt{\sqrt{\frac{2}{m}}\bm{W}^{(l)}_0 \bm{\alpha}^{(l-1)}_{0,\bm x}}}_{2} \\
&\leq \pt{1 + \frac{\sqrt 2 a}{m}\norm{\bm{V}^{(l)}_0}_{2} \norm{\bm{W}^{(l)}_0}_2 } \norm{\bm{\alpha}^{(l-1)}_{0,\bm x}}_2=O(1)
\end{align*}
with high probability over the random initialization of $\bm{W}^{(l)}_0$ and $\bm{V}^{(l)}_0$.
Therefore, for all $l\in\cl{0,1,\cdots,L}$, we have $\left\|\bm{\alpha}^{(l)}_{0,\bm x} \right\|_{2} =O(1)$.

Next, we prove by induction that $\norm{\bm{\delta}^{(l)}_{0,\bm x}}_{2} =O\mpt{\sqrt m}$. Recall that
\[\bm{\delta}^{(l)}_{0,\bm x}=\pt{\bm{I}_m + \frac{\sqrt 2 a}{m} \bm{V}^{(l+1)} \bm{D}^{(l+1)} \bm{W}^{(l+1)}}^\top\bm{\delta}^{(l+1)}_{0,\bm x}.\]

%Induction
Base case: $\norm{\bm{\delta}^{(L)}_{0,\bm x}}_2 = \norm{ \bm{v}}_2=O\mpt{\sqrt m}$.
Assume that $\norm{\bm{\delta}^{(l+1)}_{0,\bm x}}_2=O\mpt{\sqrt m}$, we can get
\begin{align*}
 \norm{\bm{\delta}^{(l)}_{0,\bm x}}_{2} &=\norm{\pt{\bm{I}_m + \frac{\sqrt 2 a}{m}\bm{W}^{(l+1)}_0 \bm{D}^{(l+1)}_{0,\bm x} \bm{V}^{(l+1)}_0}^\top \bm{\delta}^{(l+1)}_{0,\bm x} }_2\\
 &=\pt{1+\frac{\sqrt 2 a}{m} \norm{\bm{W}^{(l+1)}_0}_2 \norm{\bm{V}^{(l+1)}_0}_2 } \norm{\bm{\delta}^{(l+1)}_{0,\bm x}}_2=O\mpt{\sqrt m}
\end{align*}
with probability at least $1-\exp\mpt{-\Omega(m)}$. Therefore, for all $l\in \{0,1,\cdots,L\}$, we have $\norm{\bm{\delta}^{(l)}_{0,\bm x}}_2=O\mpt{\sqrt m}$.

\end{proof}

% \begin{lemma}[Johnson-Lindenstrauss Theorem in \citet{dasgupta2003elementary}]\label{lem:Johnson-Lindenstrauss Theorem}
%  Let $\bm{h}\in\mb{R}^p$ be a fixed vector, $\bm{W}\in \mathbb{R}^{m\times p}$ be the random matrix with i.i.d. entries $\bm{W}_{i,j}\iid \mathcal{N}(0,1)$, and the vector $\bm{v}\in\mathbb{R}^{m}$ defined as $\bm{v}=\bm{Wh}/\sqrt{m}$. Then for any $\varepsilon\in(0,1)$, with probability at least $1-2\exp\mpt{-\Omega(\varepsilon^2m)}$, we have
%  \begin{align*}
%  \norm{\bm{h}}_2(1-\varepsilon)\leq \norm{\bm{v}}_2\leq \norm{\bm{h}}_2(1+\varepsilon).
%  % \norm{\bm{v}}_2\in\bk{\norm{\bm{h}}_2(1-\varepsilon),\norm{\bm{h}}_2(1+\varepsilon)}
%  \end{align*}
% \end{lemma}

% \begin{lemma}[Lemma 7.1 in \citet{allen2019convergence}]\label{lem: norm_layer_by_layer}
%  Let $\bm{h}\in \mathbb{R}^{p}$ be a fixed vector, $\bm{W}\in\mathbb{R}^{m\times p}$ be the random matrix with i.i.d. entries $\bm{W}_{i,j}\iid \mathcal{N}(0,1)$, and the vector $\bm{v}\in\mathbb{R}^{m}$ defined as $\bm{v}=\sigma\mpt{\sqrt{{2}/{m}}\bm{Wh}}$. Then there exists a positive absolute constant $C$, such that for any $\varepsilon\in(0,1)$, with probability of $1-\exp\mpt{-\Omega(\varepsilon^2m)}$, we have
%  \begin{align*}
% \norm{\bm{h}}_2(1-\varepsilon)\leq \norm{\bm{v}}_2\leq \norm{\bm{h}}_2(1+\varepsilon)
%  \end{align*}
% when $m\geq C\varepsilon^{-2}$.
% \end{lemma}
% \paragraph*{Remark:} \cref{lem: norm_layer_by_layer} is a special case for $L=1$ in Lemma 7.1 of \citet{allen2019convergence}.

\begin{lemma}\label{lem: low_bound_of_alpha_0}There exists a positive absolute constant $C$, such that for any fixed $\bm{z}\in\mathcal{D}$, with probability at least $1-\exp\mpt{-\Omega(m^{5/6})}$ over the randomness of $\bm{A}$, $\bm{W}^{(l)}_0$ and $\bm{V}^{(l)}_0$, we have
 \begin{align*}
 \norm{\bm{\alpha}^{(l)}_{0,\bm z}}_{2} =\Omega(1)\qquad~\text{for}~l\in\cl{0,1,\cdots,L}
 \end{align*}
when $m$ is greater than the positive constant $C$.
\end{lemma}

\begin{proof}[Proof of \cref{lem: low_bound_of_alpha_0}]

From the proof of Theorem 3 in \citet{huang2020deep}, we can know that as long as $m\geq \Omega({(1+a^2)^{12\ell}(1+1/4\pi)^{12L}}/{\epsilon^{12}})$, with probability at least $1-\exp\mpt{-\Omega(m^{5/6})}$, we have
\begin{align*}
|\|\bm{\alpha}_{0,\bm{z}}^{(l)}\|^2-K_{l}(\bm{z},\bm{z})|\leq\frac{\epsilon(1+a^2)^l}{(1+1/4\pi)^{L-l}}
\end{align*}
for any sufficiently small $\epsilon>0$. 
By triangle inequality, one has 
\begin{align*}
\|\bm{\alpha}_{0,\bm{z}}^{(l)}\|^2\geq(1+a^2)^{l}-\frac{\epsilon(1+a^2)^l}{(1+1/4\pi)^{L-l}}\geq\Omega(1).
\end{align*}
\end{proof}

\begin{lemma}
 \label{lem:NTK_f0}There exists a positive absolute constant $C$, such that with probability at least $1-\exp\mpt{-\Omega(m)}$, for any $\bm{x}\in\mathcal{D}$, we have
 \begin{align*}
 \norm{\nabla_{\bm{W}^{(l)}} f^{(p),m}_0(\bm{x}) }_{F} = O(1), \quad \norm{\nabla_{\bm{V}^{(l)}} f^{(p),m}_0(\bm{x}) }_{F} = O(1) \quad l\in\{1,\cdots,L\}
 \end{align*}
when $m$ is greater than the positive constant $C$.
\end{lemma}

\begin{proof}[Proof of \cref{lem:NTK_f0}]

First of all, because \[\norm{\bm{ab}^\top}_F^2=\mr{Tr}\mpt{\bm{ab}^\top\bm{ba}^\top}=\mr{Tr}\mpt{\bm{a}^\top\bm{a}\bm{b}^\top\bm{b}}=\norm{\bm{a}}_2^2\norm{\bm{b}}_2^2\]
holds for two vectors $\bm{a}$ and $\bm{b}$, we can easily get
 \begin{align*}
 &\norm{\nabla_{\bm{W}^{(l)}} f^{(p),m}_0(\bm{x}) }_{F} =\frac{\sqrt{2}a}{m} \norm{\pt{\bm{D}_{0,\bm x}^{(l)} \bm{V}^{(l),T}_0 \bm{\delta}_{0,\bm x}^{(l)}}\bm{\alpha}_{0,\bm x}^{(l-1),T} }_{F}\\
 &\qquad=\frac{\sqrt{2}a}{m} \norm{\bm{D}_{0,\bm x}^{(l)} \bm{V}^{(l),T}_0 \bm{\delta}_{0,\bm x}^{(l)}}_2\norm{\bm{\alpha}_{0,\bm x}^{(l-1),T} }_{2}\leq \frac{\sqrt{2} a}{m}\norm{ \bm{V}^{(l)}_0 }_{2} \norm{\bm{\delta}^{(l)}_{0,\bm x}}_{2}\norm{\bm{\alpha}^{(l-1)}_{0,\bm x}}_{2}
 % &\leq C a \frac{1}{\sqrt{m}} \norm{\bm{\delta}^{(l)}_0 }_{2} \norm{\bm{\alpha}^{l-1}_0(\bm{x}) }_{2}
 \end{align*}
 and
\begin{align*}
 &\norm{\nabla_{\bm{V}^{(l)}} f^{(p),m}_0(\bm{x}) }_{F} = \frac{\sqrt 2 a}{m}\norm{\bm{\delta}^{(l)}_0 \sigma^\top\mpt{\bm{W}^{(l)}_0 \bm{\alpha}^{(l-1)}_0}}_{F}\\
 &\qquad=\frac{\sqrt 2 a}{m}\norm{\bm{\delta}^{(l)}_0}_2\norm{\sigma\mpt{\bm{W}^{(l)}_0 \bm{\alpha}^{(l-1)}_0}}_{2}\leq \frac{\sqrt 2a}{m} \norm{\bm{\delta}^{(l)}_0}_{2} \norm{\bm{W}^{(l)}_0}_{2} \norm{\bm{\alpha}^{(l-1)}_0}_{2}.
\end{align*}
By \cref{coro: Random matrix}, we know that with probability at least $1-\exp\mpt{-\Omega(m)}$, $\norm{\bm{W}^{(l)}_0}_2 = O\mpt{\sqrt{m}}$ and $\norm{\bm{V}^{(l)}_0}_2 = O\mpt{\sqrt{m}}$ hold when $m$ is greater than some positive constant. Also, by \cref{lem:up_bound_of_alpha_delta}, we have shown that $\norm{\bm{\alpha}^{(l-1)}_{0,\bm x}}_{2} =O(1)$ and $\norm{\bm{\delta}^{(l)}_{0,\bm x}}_{2} =O\mpt{\sqrt m}$ will hold under the similar conditions. Thus, we have
\begin{align*}
 \norm{\nabla_{\bm{W}^{(l)}} f^{(p),m}_0(\bm{x}) }_{F} = O(1), \quad \norm{\nabla_{\bm{V}^{(l)}} f^{(p),m}_0(\bm{x}) }_{F} = O(1) \quad ~\text{for}~l\in\{1,\cdots,L\}.
\end{align*}
\end{proof}

\subsection{During training}

\begin{lemma}[Corollary 8.4 in \citet{allen2019convergence}]\label{lem: D'_Estimation}
 Suppose $\delta\in[0,O(1)]$ and $\bm{W}_0\in \mathbb{R}^{m\times m}$ is a random matrix with entries drawn i.i.d from $\mathcal{N}(0,1)$. With probability at least $1-\exp\mpt{-\Omega(m\delta^{2/3})}$, the following holds. 
 Fix any vector $\bm{h}\in \mathbb{R}^{m}$ with $\norm{\bm{h}}_2=\Theta(1)$ and for all $\bm{g}'\in \mathbb{R}^{m}$ with $\norm{\bm{g}'}_2\leq \delta$.

 Let $\bm{D}'$ be the diagonal matrix where
 \[(\bm{D}')_{k,k} = \bm{1}\cl{\pt{\sqrt{\tfrac{2}{m}}\bm{W}_0 \bm{h} + \bm{g}'}_k>0} - \bm{1}\cl{\pt{\sqrt{\tfrac{2}{m}}\bm{W}_0 \bm{h}}_k>0}.\] Then, letting $\bm{u}=\bm{D}'\pt{\sqrt{{2}/{m}} \bm{W}_0 \bm{h} +\bm{g}'}$, we have
 \begin{align*}
 \norm{\bm{u}}_0\leq \norm{\bm{D}'}_0 = O(m\delta^{2/3}), \quad \norm{\bm{u}}_2=O(\delta).
 \end{align*}
\end{lemma}

\begin{lemma}\label{lem:Recur_uv}
 Suppose each entry of matrix $\bm{W} \in \mathbb{R}^{a \times b}$ follows $\bm{W}_{ij} \iid\mathcal{N}(0,1)$. Let $c = \max(a,b)$. If $s\geq 0$, then with probability at least $1- \exp\mpt{-s \log c}$, the following holds:
 \begin{equation*}
 \forall \bm{u} \in \mathbb{R}^a, \bm{v} \in \mathbb{R}^b ,\, s.t. \norm{\bm{u}}_0, \norm{\bm{v}}_0 \leq s, \quad\text{we have }\quad \lvert \bm{u}^\top \bm{W v} \rvert \leq 9 \sqrt{s \log c } \norm{\bm{u}}_2 \norm{\bm{v}}_2.
 \end{equation*}

\end{lemma}

\begin{proof}
 First of all, it is easy to see that when $s<1 $ or $c=1$, the proposition is trivial. So we only need to consider the result under condition that $s \geq 1$ and $c \geq 2$. 
 
 Note that we aims to prove the inequality holds uniformly for all $\bm{u},\bm{v}$ such that $\norm{\bm{u}}_0, \norm{\bm{v}}_0 \leq s$ at a high probability, we consider the non-zero entris of $\bm{u},\bm{v}$ at first.
 
 Let $A \subseteq [a]$ such that $\abs{A}= \min\cl{a,\fl{s}}$, and let $U_A = \cl{ \bm{u}\in \mathbb{R}^{\abs{A}} : \forall i \notin A, u_i = 0 } $ be a set that contains vectors of which non-zero entries are only located in $A$. In the same way, let $B \subseteq [b] $ such that $\abs{B}=\min\cl{b, \fl{s}}$, and let $V_B = 
 \cl{ \bm{v}\in \mathbb{R}^{\abs{B}} : \forall j \notin B, v_j = 0 } $. Then we have
 \begin{equation*}
 \bm{u}^\top \bm{W v} = \sum^{a}_{i=1} \sum^{b}_{j=1} \bm{u}_i \bm{W}_{ij} \bm{v}_j = \sum_{i \in A,j \in B} \bm{u}_i \bm{W}_{ij} \bm{v}_j= \bm{u}^\top_A \bm{W}_{AB} \bm{v}_B,
 \end{equation*}
 in which $\bm{u}_A = (\bm{u}_i)_{i \in A}^\top, \bm{v}_B = (\bm{v}_j)_{j \in B}^\top, \bm{W}_{AB} = (\bm{W}_{ij})_{i \in A, j \in B}$.
 According to the definition of spectral norm, we know that
 \begin{equation*}
 \abs{\bm{u}^\top \bm{W v}} = \abs{ \bm{u}^\top_A \bm{\bm{W}}_{AB} \bm{\bm{v}}_B } \leq \norm{\bm{u}_A}_2 \norm{\bm{W}_{AB}}_2 \norm{\bm{v}_B} _2.
 \end{equation*}
 
 Now we consider the spectral norm of $\bm{W}_{AB} \in \mathbb{R}^{\abs{A} \times \abs{B}}$. By \cref{lem: random matrix base}, we know when $t \geq \sqrt{\fl{s}}$, with probability at least $1- 2\exp\mpt{-t^2/2}$, we have $\norm{\bm{W}_{AB}}_2 \leq 3t$. Then we have
 \begin{equation*}
 \forall \bm{u} \in U_A, \forall \bm{v} \in V_B, \quad \lvert \bm{u}^\top \bm{W v} \rvert \leq \norm{\bm{u}}_2 \norm{\bm{W}_{AB}}_2 \norm{\bm{v}}_2 \leq 3t \norm{\bm{u}}_2 \norm{\bm{v}}_2.
 \end{equation*}

 Now we consider all possible $A$ and $B$, or to say all possible location of non-zero entries. We know there are $\binom{a}{\abs{A}}$ kinds of $ A$ and $\binom{b}{\abs{B}}$ kinds of $B$ in total. Therefore, with probability at least $1- 2\binom{a}{\abs{A}}\binom{b}{\abs{B}} \exp\mpt{-t^2/2} $, the following proposition holds:
\begin{equation*}
 \forall \bm{u} \in \mathbb{R}^a, \forall \bm{v} \in \mathbb{R}^b, s.t. \norm{\bm{u}}_0,\norm{\bm{v}}_0 \leq s, \quad \text{we have } \lvert \bm{u}^\top \bm{W v} \rvert \leq 3t \norm{\bm{u}}_2 \norm{\bm{v}}_2.
 \end{equation*}

With the trivial inequality $\binom{n}{k} \leq n^k $, we have a control for the probability above:
\begin{align*}1-2\binom{a}{\abs{A}}\binom{b}{\abs{B}}\exp\mpt{-t^2/2} \geq1-2a^{\abs{A}}b^{\abs{B}}\exp\mpt{- t^2/2}\geq1-2a^{\fl{s}}b^{\fl{s}}\exp\mpt{- t^2/2}&\\
\qquad\geq 1-2c^{2s}\exp\mpt{- t^2/2}=1-\exp\mpt{-\pt{t^2/2-2s\log c-\log 2}}&.\end{align*}
Finally, let $t=\sqrt{8s\log c} \geq \sqrt{s}$, and then we get the expected result.
\end{proof}

\begin{lemma}\label{lem: 3result_point} %[Simliar Lemma 8.2 in \citet{allen2019convergence}]
Let $\tau=O\mpt{{\sqrt{m}}/{(\log m)^3}}$ and $T\subseteq [0,\infty)$. Suppose that $\norm{\bm{W}^{(l)}_t - \bm{W}^{(l)}_0 }_{F}\leq\tau$ and $\norm{\bm{V}^{(l)}_t - \bm{V}^{(l)}_0 }_{F}\leq\tau$ hold for all $t\in T$ and $l\in[L]$. Then there exists a positive absolute constant $C$, such that for any fixed $\bm{z}\in\mathcal{D}$, with probability at least $1-\exp\mpt{-\Omega(m^{2/3}\tau^{2/3})}$, for all $t\in T$ and $l\in[L]$, we have 
\begin{itemize}
 \item $i)$ $\norm{\bm{g}'_{l,\bm{z}\bm{z}}}_2 = O\mpt{{\tau}/{\sqrt{m}}}$;
 \item $ii)$ $\norm{\bm{D}^{(l)\prime}_{\bm z \bm z}}_0 =O\mpt{m^{2/3}\tau^{2/3}}$ and $\norm{\bm{D}^{(l)\prime}_{\bm z \bm z} \bm{W}^{(l)}_t \bm{\alpha}^{(l-1)}_{t,\bm z} }_2=O\mpt{\tau}$;
 \item $iii)$ $\norm{\Delta\bm{\alpha}^{(l)}_{\bm z \bm z}}_2=O\mpt{{\tau}/{\sqrt{m}}}$.
\end{itemize}
when $m$ is greater than the positive constant $C$. 
\end{lemma}

\begin{proof}[Proof of \cref{lem: 3result_point}]
We have shown that $\norm{\bm{W}^{(l)}_0}_2 = O(\sqrt{m})$ and $\norm{\bm{V}^{(l)}_0}_2 = O(\sqrt{m})$ hold with probability at least $1-\exp\mpt{-\Omega(m)}$. Combine with $\norm{\Delta\bm{W}^{(l)}}_F\leq\tau$ and $\norm{\Delta\bm{V}^{(l)}}_F\leq\tau$, we can get 
\[\norm{\bm{W}^{(l)}_t}_2 = O(\sqrt{m})\qquad~\text{and}\qquad~\norm{\bm{V}^{(l)}_t}_2 = O(\sqrt{m}).\]

Since $\bm{A}$ does not change during the training, we can easily check that $\norm{\Delta\bm{\alpha}^{(0)}_{\bm{z},\bm{z}}}_2=\norm{\bm{0}}_2=0=O\mpt{{\tau}/{\sqrt{m}}}$, which means that $iii)$ holds for $l=0$. Then it only needs to be proven that
\[iii)~\text{holds for}~l=k\quad\Longrightarrow\quad~\text{Lemma holds for}~l=k+1.\]
Now we assume that $iii)$ holds for $l=k\in\{0,1,\cdots,L-1\}$, then with probability at least $1-\exp\mpt{-\Omega(m)}$, we have
\[\norm{\bm{\alpha}^{(k)}_{t,\bm z}}_2=\norm{\bm{\alpha}^{(k)}_{0,\bm z}+\Delta\bm{\alpha}^{(k)}_{\bm z \bm z}}_2\leq\norm{\bm{\alpha}^{(k)}_0}_2+\norm{\Delta\bm{\alpha}^{(k)}_{\bm z \bm z}}_2=O(1)+O\mpt{\tau/\sqrt{m}}=O(1).\]
For $i)$, we can get
\begin{align*}
 \bm{g}_{k+1,\bm z \bm z}'&= \sqrt{\frac{2}{m}}\pt{\bm{W}^{(k+1)}_t \bm{\alpha}^{(k)}_{t,\bm z} -\bm{W}^{(k+1)}_0 \bm{\alpha}^{(k)}_{0,\bm z}}=\sqrt{\frac{2}{m}}\pt{ \Delta\bm{W}^{(k+1)} \bm{\alpha}^{(k)}_{t,\bm z} + \bm{W}^{(k+1)}_0 \Delta\bm{\alpha}^{(k)}_{\bm z \bm z}}, 
 \end{align*}
which can lead to
 \begin{align*}
 \norm{\bm{g}_{k+1,\bm z \bm z}'}_2&\leq\sqrt{\frac{2}{m}}\pt{ \norm{\Delta\bm{W}^{(k+1)}}_2\norm{\bm{\alpha}^{(k)}_{t,\bm z}}_2 + \norm{\bm{W}^{(k+1)}_0}_2 \norm{\Delta\bm{\alpha}^{(k)}_{\bm z \bm z}}_2}\\
 &\leq\sqrt{\frac{2}{m}}\pt{\tau\cdot O(1)+O\mpt{\sqrt{m}}O\mpt{\tau/\sqrt{m}}}\leq O\mpt{\frac{\tau}{\sqrt m}}.
 \end{align*}
Then by \cref{lem: D'_Estimation} and taking $\bm{W}_0=\bm{W}^{(k+1)}_0$, $\bm{h}=\bm{\alpha}^{(k)}_{0,\bm z}$, $\bm{g}'=\bm{g}'_{k+1}(\bm z)$ and $\delta=\Theta\mpt{{\tau}/{\sqrt{m}}}\leq O\mpt{(\log m)^{-3}}$, we can get $ii)$ holds for $l=k+1$ with probability at least $1-\exp\mpt{-\Omega(m^{2/3}\tau^{2/3})}$ since we have shown that $\norm{\bm{h}}_2=\norm{\bm{\alpha}^{(k)}_{0,\bm z}}_2=\Theta(1)$ in \cref{lem:up_bound_of_alpha_delta} and \cref{lem: low_bound_of_alpha_0}.

As for $iii)$, it is easy to check that
\begin{align*}
\Delta\bm{\alpha}^{(k+1)}_{\bm z \bm z}
% &=\Delta\bm{\alpha}^{(l)}+\frac{\sqrt 2 a}{m}\pt{\bm{V}^{(l+1)}_t\bm{D}^{(l+1)}_t\bm{W}^{(l+1)}_t\bm{\alpha}^{(l)}_t-\bm{V}^{(l+1)}_0\bm{D}^{(l+1)}_0\bm{W}^{(l+1)}_0\bm{\alpha}^{(l)}_0}\\
&=\Delta\bm{\alpha}^{(k)}_{\bm z \bm z}+\frac{\sqrt{2}a}{m}\Big[\bm{V}^{(k+1)}_t\bm{D}^{(k+1)\prime}_{\bm z \bm z}\bm{W}^{(k+1)}_t\bm{\alpha}^{(k)}_{t,\bm z}+\Delta\bm{V}^{(k+1)}\bm{D}^{(k+1)}_{0,\bm z}\bm{W}^{(k+1)}_t\bm{\alpha}^{(k)}_{t,\bm z}
\\&\qquad\qquad\qquad\qquad\qquad\qquad+\bm{V}^{(k+1)}_0\bm{D}^{(k+1)}_{0,\bm z}\pt{\bm{W}^{(k+1)}_t\bm{\alpha}^{(k)}_{t,\bm z}-\bm{W}^{(k+1)}_0\bm{\alpha}^{(k)}_{0,\bm z}}\Big].
\end{align*}
We have shown that, with probability at least $1-\exp\mpt{-\Omega(m^{2/3}\tau^{2/3})}$, $i)$ $ii)$ hold for $l=k+1$, i.e.
\begin{align*}
\begin{gathered}
 \norm{\bm{D}^{(k+1)\prime}_{\bm z \bm z} \bm{W}^{(k+1)}_t \bm{\alpha}^{(k)}_{t,\bm z} }_2=O\mpt{{\tau}};\\
 \norm{\bm{W}^{(k+1)}_t\bm{\alpha}^{(k)}_{t,\bm z}-\bm{W}^{(k+1)}_0\bm{\alpha}^{(k)}_{0,\bm z}}_2=\sqrt{\frac{m}{2}}\norm{\bm{g}'_{k+1}}_2= O(\tau),
\end{gathered}
\end{align*}
which can lead to $\norm{\Delta\bm{\alpha}^{(k+1)}_{\bm z \bm z}}_2= O\mpt{\tau/\sqrt{m}}$. 

Thus, we finish the proof.

\end{proof}

\begin{lemma}\label{lem: Delta_delta_point}
Let $\tau=O\mpt{{\sqrt{m}}/{(\log m)^3}}$ and $T\subseteq [0,\infty)$. Suppose that $\norm{\bm{W}^{(l)}_t - \bm{W}^{(l)}_0 }_{F}\leq\tau$ and $\norm{\bm{V}^{(l)}_t - \bm{V}^{(l)}_0 }_{F}\leq\tau$ hold for all $t\in T$ and $l\in[L]$. Then there exists a positive absolute constant $C$, such that for any fixed $\bm{z}\in\mathcal{D}$, with probability at least $1-\exp\mpt{-\Omega(m^{2/3}\tau^{2/3})}$, for all $t\in T$ and $l\in[L]$, we have 
\begin{align*}
\norm{\Delta\bm{\delta}^{(l)}_{\bm z \bm z}}_2= O\mpt{m^{1/3}\tau^{1/3}\sqrt{\log m}},
\end{align*}
when $m$ is greater than the positive constant $C$. 
\end{lemma}

\begin{proof}[Proof of \cref{lem: Delta_delta_point}]
We inductively prove this lemma. 

Base case: $\norm{\Delta\bm{\delta}^{(L)}_{\bm z \bm z}}_2 = 0$ since $\bm{v}$ is fixed during the training process.

Assume that this lemma holds for $l+1$, then with probability at least $1-\exp\mpt{-\Omega(m)}$, we have
\[\norm{\bm{\delta}^{(l+1)}_{t,\bm z}}_2=\norm{\Delta\bm{\delta}^{(l+1)}_{\bm z \bm z}+\bm{\delta}^{(l+1)}_{0,\bm z}}_2\leq\norm{\Delta\bm{\delta}^{(l+1)}_{\bm z \bm z}}_2+\norm{\bm{\delta}^{(l+1)}_{0,\bm z}}_2\leq O(\sqrt{ m})\]
because of \cref{lem:up_bound_of_alpha_delta}. Moreover, it is easy to check that
\begin{align*}
\Delta\bm{\delta}^{(l)}_{\bm z \bm z}
&=\Delta\bm{\delta}^{(l+1)}_{\bm z \bm z}+\tfrac{\sqrt{2}a}{m}\Big[\bm{W}^{(l+1),T}_0\bm{D}^{(l+1)\prime}_{\bm z \bm z}\bm{V}^{(l+1),T}_0\bm{\delta}^{(l+1)}_{0,\bm z}+\Delta\bm{W}^{(l+1),T}\bm{D}^{(l+1)}_{t,\bm z}\bm{V}^{(l+1),T}_0\bm{\delta}^{(l+1)}_{0,\bm z}
\\&\qquad\qquad\qquad~+\bm{W}^{(l+1),T}_t\bm{D}^{(l+1)}_{t,\bm z}\Delta\bm{V}^{(l+1),T} \bm{\delta}^{(l+1)}_{0,\bm z} + \bm{W}^{(l+1),T}_t\bm{D}^{(l+1)}_{t,\bm z}\bm{V}^{(l+1),T}_t \Delta\bm{\delta}^{(l+1)}_{\bm z \bm z}\Big].
\end{align*}
Let us denote the four terms within the square brackets, excluding the factor `$\sqrt 2a/m$' outside the brackets, as $\bm{u}_1$ to $\bm{u}_4$ respectively. First of all, it is easy to check that, with probability at least $1-\exp\mpt{-\Omega(m)}$, we have
\begin{align*}
\begin{gathered}
\norm{\bm{u}_2}_2\leq \tau \cdot1\cdot O\mpt{\sqrt{m}}\cdot O\mpt{\sqrt{m}}=O\mpt{\tau \cdot m}\leq O\mpt{m\cdot m^{1/3}\tau^{1/3}\sqrt{\log m}};\\
\norm{\bm{u}_3}_2\leq O\mpt{\sqrt{m}} \cdot1\cdot \tau \cdot O\mpt{\sqrt{m}}=O\mpt{\tau \cdot m}\leq O\mpt{m\cdot m^{1/3}\tau^{1/3}\sqrt{\log m}};\\
\norm{\bm{u}_4}_2\leq O\mpt{\sqrt{m}}\cdot 1\cdot O\mpt{\sqrt{m}}\cdot O\mpt{m^{1/3}\tau^{1/3}\sqrt{\log m}}=O\mpt{m\cdot m^{1/3}\tau^{1/3}\sqrt{\log m}}.
\end{gathered}
\end{align*}
As for $\bm{u}_1$, if ${\bm{\delta}^{(l+1)}_{0,\bm z}}=\bm{0}$ or $\norm{\bm{D}^{(l+1)\prime}_{\bm z \bm z}}_0=0$, we have $\norm{\bm{u}_1}_2=0$. Therefore, we consider the case where ${\bm{\delta}^{(l+1)}_{0,\bm z}}\not=\bm{0}$ and $\norm{\bm{D}^{(l+1)\prime}_{\bm{z}\bm{z}}}_{0}\geq1$. Denote $\tilde{\bm{\delta}}={\bm{\delta}^{(l+1)}_{0,\bm z}}/\norm{\bm{\delta}^{(l+1)}_{0,\bm z}}_2$ for $\bm{\delta}^{(l+1)}_{0,\bm z}\in\mb{R}^{m}\backslash\{\bm{0}\}$, we can get
\begin{align*}
\norm{\bm{u}_1}_2&\leq \norm{\bm{W}^{(l+1)}_0}_2\norm{\bm{D}^{(l+1)\prime}_{\bm z \bm z}\bm{V}^{(l+1)}_0\tilde{\bm{\delta}}}_2\norm{\bm{\delta}^{(l+1)}_{0,\bm z}}_2\leq O\mpt{m}\norm{\bm{D}^{(l+1)\prime}_{\bm z \bm z}\bm{V}^{(l+1)}_0\tilde{\bm{\delta}}}_2.
\end{align*}
Using the randomness of $\bm{V}^{(l+1)}_0$, for any fixed $\tilde{\bm{\delta}}$, we have $\bm{V}^{(l+1)}_0\tilde{\bm{\delta}}\sim\mathcal{N}(\bm{0},\bm{I}_m)$. Thus, by \cref{lem:Recur_uv} and taking $s=\Theta(m^{2/3}\tau^{2/3})$, with probability at least $1-\exp\mpt{-\Omega(m^{2/3}\tau^{2/3})}$, we can get
\begin{align*}\norm{\bm{D}^{(l+1)\prime}_{\bm z \bm z}\bm{V}^{(l+1)}_0\tilde{\bm{\delta}}}_2&=\sup_{\bm{u}\in\mathbb{S}^{m-1}}\norm{\bm{u}^\top\bm{D}^{(l+1)\prime}_{\bm z \bm z}\bm{V}^{(l+1)}_0\tilde{\bm{\delta}}}_2=\sup_{\bm{u}\in\mathbb{S}^{m-1}}\norm{\pt{\bm{D}^{(l+1)\prime}_{\bm z \bm z}\bm{u}}^\top\bm{V}^{(l+1)}_0\tilde{\bm{\delta}}\cdot 1}_2\\
&\leq O\mpt{\sqrt{m^{2/3}\tau^{2/3}\log m}}=O\mpt{{m^{1/3}\tau^{1/3}\sqrt{\log m}}},
\end{align*}
because of $\norm{\bm{D}^{(l+1)\prime}_{\bm z \bm z}\bm{u}}_0\leq\norm{\bm{D}^{(l+1)\prime}_{\bm z \bm z}}_0\leq \Theta(m^{2/3}\tau^{2/3})$ and $\norm{1}_0=1\leq \norm{\bm{D}^{(l+1)\prime}_{\bm z \bm z}}_0$.

Combining the above discussions, we can conclude that $\norm{\Delta\bm{\delta}^{(l)}_{\bm z \bm z}}_2\leq O\mpt{m^{1/3}\tau^{1/3}\sqrt{\log m}}$.

\end{proof}

\begin{lemma}\label{lem: gamma_eta_t_vs_0_point}Fix $l\in [L]$ and let $\tau=O\mpt{{\sqrt{m}}/{(\log m)^3}}$, $T\subseteq [0,\infty)$. Suppose that $\norm{\bm{W}^{(l)}_t - \bm{W}^{(l)}_0 }_{F}\leq\tau$ and $\norm{\bm{V}^{(l)}_t - \bm{V}^{(l)}_0 }_{F}\leq\tau$ hold for all $t\in T$, then there exists a positive absolute constant $C$, such that for any fixed $\bm{z}\in\mathcal{D}$, with probability at least $1-\exp\mpt{-\Omega(m^{2/3}\tau^{2/3})}$ over the randomness of $\bm{W}^{(l)}_0$ and $\bm{V}^{(l)}_0$, for all $t\in T$, we have
 \begin{align*}
 \begin{gathered}
 \norm{\Delta\bm{\gamma}^{(l)}_{\bm z \bm z}}_{2}= O\mpt{m^{-1/6}\tau^{1/3}\sqrt{\log m}}, \quad~ \norm{\Delta\bm{\eta}^{(l)}_{\bm z \bm z}}_{2} = O\mpt{\frac{\tau}{{m}}};\\
 \norm{\bm{\gamma}^{(l)}_{t,\bm z}}_2=O(1),\qquad~ \norm{\bm{\eta}^{(l)}_{t,\bm z}}_2 =O(1/\sqrt{m})
 \end{gathered}
 \end{align*}
when $m$ is greater than the positive constant $C$. 
\end{lemma}

\begin{proof}[Proof of \cref{lem: gamma_eta_t_vs_0_point}]
First of all, we have
\begin{align*}
&\Delta\bm{\gamma}^{(l)}_{\bm z \bm z}=\frac{\sqrt{2}}{m}\pt{\bm{D}^{(l)}_{t,\bm z}\bm{V}^{(l),T}_t\bm{\delta}^{(l)}_{t,\bm z}-\bm{D}^{(l)}_{0,\bm z}\bm{V}^{(l),T}_0\bm{\delta}^{(l)}_{0,\bm z}}\\
&\qquad=\frac{\sqrt{2}}{m}\pt{\bm{D}^{(l)\prime}_{\bm z \bm z}\bm{V}^{(l),T}_0\bm{\delta}^{(l)}_{0,\bm z}+\bm{D}^{(l)}_{t,\bm z}\Delta\bm{V}^{(l),T}\bm{\delta}^{(l)}_{0,\bm z}+\bm{D}^{(l)}_{t,\bm z}\bm{V}^{(l),T}_t\Delta\bm{\delta}^{(l)}_{\bm z \bm z}}.
\end{align*}
Using the similar proof technique as the previous lemma, we can establish that with probability at least $1-\exp\mpt{-\Omega(m^{2/3}\tau^{2/3})}$, we have
\[\norm{\bm{D}^{(l)\prime}_{\bm z \bm z}\bm{V}^{(l),T}_0\bm{\delta}^{(l)}_{0,\bm z}}_2=O\mpt{{m^{1/3}\tau^{1/3}\sqrt{\log m}}}.\]
According to \cref{coro: Random matrix}, \cref{lem:up_bound_of_alpha_delta} and \cref{lem: Delta_delta_point} we can get
\begin{align*}
\norm{\bm{D}^{(l)}_{t,\bm z}\Delta\bm{V}^{(l),T}\bm{\delta}^{(l)}_{0,\bm z}}_2=O\mpt{\tau\sqrt{m}};\qquad~\norm{\bm{D}^{(l)}_{t,\bm z}\bm{V}^{(l),T}_t\Delta\bm{\delta}^{(l)}_{\bm z \bm z}}_2=O\mpt{m^{5/6}\tau^{1/3}\sqrt{\log m}}.
\end{align*}
Thus, we can get $\norm{\Delta\bm{\gamma}^{(l)}_{\bm z \bm z}}_{2}= O\mpt{m^{-1/6}\tau^{1/3}\sqrt{\log m}}$.

As for $\norm{\Delta\bm{\eta}^{(l)}_{\bm z \bm z}}_{2}$, we can similarly get
\begin{align*}
\norm{\Delta\bm{\eta}^{(l)}_{\bm z \bm z}}_2&=\frac{\sqrt{2}}{m}\norm{\bm{D}^{(l)\prime}_{\bm z \bm z}\bm{W}^{(l)}_t\bm{\alpha}^{(l-1)}_{t,\bm z}+\bm{D}^{(l)}_{0,\bm z}\Delta\bm{W}^{(l)}\bm{\alpha}^{(l-1)}_{t,\bm z}+\bm{D}^{(l)}_{0,\bm z}\bm{W}^{(l)}_0\Delta\bm{\alpha}^{(l-1)}_{\bm z \bm z} }_2\\
&\leq \frac{\sqrt{2}}{m}\big(O\mpt{\tau}+O\mpt{\tau}+O\mpt{\tau}\big)=O\mpt{\frac{\tau}{m}}
\end{align*}
according to \cref{coro: Random matrix}, \cref{lem:up_bound_of_alpha_delta} and \cref{lem: 3result_point}.

With the above results, we can easily get 
\begin{align*}
\begin{gathered}
 \norm{\bm{\gamma}^{(l)}_{0,\bm z}}_2=\frac{\sqrt{2}}{m}\norm{\bm{D}^{(l)}_{0,\bm z}\bm{V}^{(l),T}_0\bm{\delta}^{(l)}_{0,\bm z}}_2=O\mpt{1},~~ \norm{\bm{\eta}^{(l)}_{0,\bm z}}_2=\frac{\sqrt{2}}{m}\norm{\bm{D}^{(l)}_{0,\bm z}\bm{W}^{(l)}_0\bm{\alpha}^{(l-1)}_{0,\bm z}}_2=O\mpt{\tfrac{1}{\sqrt{m}}},\\
\norm{\bm{\gamma}^{(l)}_{t,\bm z}}_2\leq\norm{\bm{\gamma}^{(l)}_{0,\bm z}}_2+\norm{\Delta\bm{\gamma}^{(l)}_{\bm z \bm z}}_2=O(1),~~\norm{\bm{\eta}^{(l)}_{t,\bm z}}_2\leq\norm{\bm{\eta}^{(l)}_0}_2+\norm{\Delta\bm{\eta}^{(l)}_{\bm z \bm z}}_2=O\mpt{\tfrac{1}{\sqrt{m}}}
 \end{gathered}
\end{align*}
since $\tau=O\mpt{{\sqrt{m}}/{(\log m)^3}}$.

\end{proof}

\begin{lemma}
 \label{lem:NTK_ft}
Let $\tau=O\mpt{{\sqrt{m}}/{(\log m)^3}}$ and $T\subseteq [0,\infty)$. Suppose that $\norm{\bm{W}^{(l)}_t - \bm{W}^{(l)}_0 }_{F}\leq\tau$ and $\norm{\bm{V}^{(l)}_t - \bm{V}^{(l)}_0 }_{F}\leq\tau$ hold for all $t\in T$ and $l\in[L]$. Then there exists a positive absolute constant $C$, such that for any fixed $\bm{z}\in\mathcal{D}$, with probability at least $1-\exp\mpt{-\Omega(m^{2/3}\tau^{2/3})}$, for all $l\in[L]$, we have 
\begin{align*}
\begin{gathered}
 \sup_{t\in T}\norm{\nabla_{\bm{W}^{(l)}} f^{(p),m}_t(\bm{z}) - \nabla_{\bm{W}^{(l)}} f^{(p),m}_0(\bm{z})}_{F}= O\mpt{m^{-1/6}\tau^{1/3} \sqrt{\log m} };\\
 \sup_{t\in T}\norm{\nabla_{\bm{V}^{(l)}} f^{(p),m}_t(\bm{z}) - \nabla_{\bm{V}^{(l)}} f^{(p),m}_0(\bm{z})}_{F}= O\mpt{m^{-1/6}\tau^{1/3}\sqrt{\log m}},
 \end{gathered}
 \end{align*}
 when $m$ is greater than the positive constant $C$. 
\end{lemma}

\begin{proof}[Proof of \cref{lem:NTK_ft}]According to \cref{eq: nabla_W_V_f}, we have
\begin{align*}
 &\norm{\nabla_{\bm{W}^{(l)}} f^{(p),m}_t(\bm{z})- \nabla_{\bm{W}^{(l)}} f^{(p),m}_0(\bm{z})}_{F} = \norm{ a \bm{\gamma}^{(l)}_{t,\bm z} \bm{\alpha}^{(l-1),T}_{t,\bm z} - a \bm{\gamma}^{(l)}_{0,\bm z}\bm{\alpha}^{(l-1),T}_{0,\bm z}}_{F}\\
 &\qquad=a\norm{\bm{\gamma}^{(l)}_{t,\bm z}\Delta\bm{\alpha}^{(l-1),T}_{\bm z \bm z}+\Delta\bm{\gamma}^{(l)}_{\bm z \bm z}\bm{\alpha}^{(l-1),T}_{0,\bm z}}_F\leq a\norm{\bm{\gamma}^{(l)}_{t,\bm z}\Delta\bm{\alpha}^{(l-1),T}_{\bm z \bm z}}_F+a\norm{\Delta\bm{\gamma}^{(l)}_{\bm z \bm z}\bm{\alpha}^{(l-1),T}_{0,\bm z}}_F\\
 &\qquad= a\norm{\bm{\gamma}^{(l)}_{t,\bm z}}_2 \norm{\Delta\bm{\alpha}^{(l-1)}_{\bm z \bm z}}_{2} + a\norm{\Delta\bm{\gamma}^{(l)}_{\bm z \bm z}}_2\norm{\bm{\alpha}^{(l-1)}_{0,\bm z}}_2 \leq O\mpt{m^{-1/6}\tau^{1/3} \sqrt{\log m}}
 \end{align*}
according to Lemmas \ref{lem: gamma_eta_t_vs_0_point}, \ref{lem: 3result_point} $iii)$ and \ref{lem:up_bound_of_alpha_delta}. Similarly, we can also get
 \begin{align*}
 &\norm{\nabla_{\bm{V}^{(l)}} f^{(p),m}_t(\bm{z}) - \nabla_{\bm{V}^{(l)}} f^{(p),m}_0(\bm{z})}_{F} = \norm{a \bm{\delta}^{(l)}_{t,\bm z} \bm{\eta}^{(l),T}_{t,\bm z} - a \bm{\delta}^{(l)}_{0,\bm z} \bm{\eta}^{(l),T}_{0,\bm z}}_{F}\\
 &\qquad\leq a\norm{\bm{\eta}^{(l)}_{t,\bm z}}_2 \norm{\Delta\bm{\delta}^{(l)}_{\bm z \bm z}}_{2} + a\norm{\Delta\bm{\eta}^{(l)}_{\bm z \bm z}}_2 \norm{\bm{\delta}^{(l)}_{0,\bm z}}_2\leq O\mpt{m^{-1/6}\tau^{1/3} \sqrt{\log m}}
 \end{align*}
according to Lemmas \ref{lem: gamma_eta_t_vs_0_point}, \ref{lem: Delta_delta_point} and \ref{lem:up_bound_of_alpha_delta}.

Thus, we finish the proof.

\end{proof}

\begin{proposition}
 \label{prop:During training}
There exists a polynomial $\operatorname{poly}(\cdot):\mb R^4\to\mb R$, such that for any given training data $\{(\bm{x}_{i},y_{i}),i\in[n]\}$, any $\delta\in(0,1)$ and any fixed $\bm{z},\bm{z}'\in \mathcal{D}$, when the width $m\geq \operatorname{poly}(n,\lambda_0^{-1},\|\bm{y}\|_2,\log(1/\delta))$, with probability at least $1-\delta$, we have
\begin{align*}
 \sup_{t\geq 0}|r_t^{m}(\bm{z},\bm{z}')-r(\bm{z},\bm{z}')|=O\mpt{m^{-\frac{1}{12}}\sqrt{\log m}}.
\end{align*}
\end{proposition}
\begin{proof}This proposition can be deduced in conjunction with \cref{prop:initialization of NTK}, \cref{lem:During training}, and the forthcoming \cref{lem:A_lazy_regime} to be proven in the next subsection. 
\end{proof}

\begin{lemma}
 \label{lem:During training}
 Fix $ \bm{z},\bm{z}' \in \mathcal{D}$ and let $\delta\in(0,1)$, $T\subseteq[0,\infty)$. Suppose that $\norm{\bm{W}^{(l)}_t - \bm{W}^{(l)}_0}_{F}= O(m^{1/4})$ and $\norm{\bm{V}^{(l)}_t - \bm{V}^{(l)}_0 }_{F}= O(m^{1/4})$ hold for all $t\in T$ and $l\in[L]$. Then there exist some positive absolute constants $C_1>0$ and $C_2\geq 1$, such that with probability at least $1-\delta$, we have
 \[\sup_{t\in T}\abs{r_t^{(p),m}(\bm{z},\bm{z}') - r_0^{(p),m}(\bm{z},\bm{z}')}= O \mpt{m^{-\frac{1}{12}}\sqrt{\log m}},~\text{when}~m\geq C_1\pt{\log(C_2/\delta)}^{6/5}.\]
\end{lemma}

\begin{proof}[Proof of \cref{prop:During training}]
By \cref{lem:NTK_ft} (choose parameter $\tau = \Theta(m^{1/4})$), \cref{lem:NTK_f0} and
\begin{align*}
 \norm{ \nabla_{\bm{W}^{(l)}} f^{(p),m}_t(\bm{z}')}_{F}&\leq \norm{ \nabla_{\bm{W}^{(l)}} f^{(p),m}_0(\bm{z}')}_{F}+\norm{ \nabla_{\bm{W}^{(l)}} f^{(p),m}_t(\bm{z}')- \nabla_{\bm{W}^{(l)}} f^{(p),m}_0(\bm{z}') }_{F};\\
 \norm{ \nabla_{\bm{V}^{(l)}} f^{(p),m}_t(\bm{z}')}_{F} &\leq \norm{ \nabla_{\bm{V}^{(l)}} f^{(p),m}_0(\bm{z}')}_{F}+\norm{ \nabla_{\bm{V}^{(l)}} f^{(p),m}_t(\bm{z}')- \nabla_{\bm{V}^{(l)}} f^{(p),m}_0(\bm{z}') }_{F},
\end{align*}
with probability at least $1-\exp(-\Omega(m^{5/6}))$, we have
\begin{align*}
&\abs{\ag{\nabla_{\bm{W}^{(l)}} f^{(p),m}_{t}(\bm{z}) , \nabla_{\bm{W}^{(l)}} f^{(p),m}_{t}(\bm{z}')}-\ag{\nabla_{\bm{W}^{(l)}} f^{(p),m}_{0}(\bm{z}) , \nabla_{\bm{W}^{(l)}} f^{(p),m}_{0}(\bm{z}')}}\\
&\qquad\qquad\qquad~~\leq \norm{ \nabla_{\bm{W}^{(l)}} f^{(p),m}_0(\bm{z})}_{F} \norm{ \nabla_{\bm{W}^{(l)}} f^{(p),m}_t(\bm{z}') - \nabla_{\bm{W}^{(l)}} f^{(p),m}_0(\bm{z}')}_{F}\\
&\qquad\qquad\qquad\qquad\qquad\qquad~~+\norm{ \nabla_{\bm{W}^{(l)}} f^{(p),m}_t(\bm{z}')}_{F} \norm{ \nabla_{\bm{W}^{(l)}} f^{(p),m}_t(\bm{z})- \nabla_{\bm{W}^{(l)}} f^{(p),m}_0(\bm{z}) }_{F}\\
&\qquad\qquad\qquad~~ \leq O(1)\cdot O\mpt{m^{-\frac{1}{12}}\sqrt{\log m}}+O(1)\cdot O\mpt{m^{-\frac{1}{12}}\sqrt{\log m}}\leq O\mpt{m^{-\frac{1}{12}}\sqrt{\log m}}
\end{align*}
and similarly have
\begin{align*}\abs{\ag{\nabla_{\bm{V}^{(l)}} f^{(p),m}_{t}(\bm{z}) , \nabla_{\bm{V}^{(l)}} f^{(p),m}_{t}(\bm{z}')}-\ag{\nabla_{\bm{V}^{(l)}} f^{(p),m}_{0}(\bm{z}) , \nabla_{\bm{V}^{(l)}} f^{(p),m}_{0}(\bm{z}')}}\\
\leq O\mpt{m^{-\frac{1}{12}}\sqrt{\log m}}\end{align*}
for all $l\in[L]$ and $t\in T$ when $m$ is greater than some positive absolute constant $C$. Combine with \cref{eq: RNK_formulate}, with probability at least $1-\exp(-\Omega(m^{5/6}))$, we can get
\begin{align*}\sup_{t\in T}\abs{r_t^{(p),m}(\bm{z},\bm{z}') - r_0^{(p),m}(\bm{z},\bm{z}')}= O \mpt{m^{-\frac{1}{12}}\sqrt{\log m}}.
\end{align*}
Also, it is easy to check that there exist some positive absolute constants $C_1>0$ and $C_2\geq 1$ such that $C_1\pt{\log\mpt{C_2/\delta}}^{6/5}\geq C$ holds for $\delta\in(0,1)$ and when $m\geq C_1\pt{\log\mpt{C_2/\delta}}^{6/5}$, we have $1-\exp\mpt{-\Omega\mpt{m^{5/6}}} \geq 1-\delta$.

% \begin{align*}
% |r^{m}_t(\bm{x},\bm{x}') - r^{m}_0(\bm{x},\bm{x}')|&= \Big|\sum^{L}_{l=0} \big( \langle \nabla_{\bm{W}^{(l)}} f^{m}_{t}(\bm{x}) , \nabla_{\bm{W}^{(l)}} f^{m}_{t}(\bm{x}') \rangle + \langle \nabla_{\bm{V}^{(l)}} f^{m}_{t}(\bm{x}) , \nabla_{\bm{V}^{(l)}} f^{m}_{t}(\bm{x}') \rangle \big)\\
% & -\sum^{L}_{l=0} \big(\langle \nabla_{\bm{W}^{(l)}} f^{m}_{0}(\bm{x}) , \nabla_{\bm{W}^{(l)}} f^{m}_{0}(\bm{x}') \rangle + \langle \nabla_{\bm{V}^{(l)}} f^{m}_{0}(\bm{x}) , \nabla_{\bm{V}^{(l)}} f^{m}_{0}(\bm{x}') \rangle \big) \Big|\\
% &\leq \sum^{L}_{l=0} \Big(\norm{ \nabla_{\bm{W}^{(l)}} f^{m}_0(\bm{x})}_{F} \norm{ \nabla_{\bm{W}^{(l)}} f^{m}_t(\bm{x}') - \nabla_{\bm{W}^{(l)}} f^{m}_0(\bm{x}')}_{F}\\
% &+\norm{ \nabla_{\bm{W}^{(l)}} f^{m}_t(\bm{x}')}_{F} \norm{ \nabla_{\bm{W}^{(l)}} f^{m}_t(\bm{x})- \nabla_{\bm{W}^{(l)}} f^{m}_0(\bm{x}) }_{F} \\
% &+\norm{ \nabla_{\bm{V}^{(l)}} f^{m}_t(\bm{x}')}_{F} \norm{ \nabla_{\bm{V}^{(l)}} f^{m}_t(\bm{x})- \nabla_{\bm{V}^{(l)}} f^{m}_0(\bm{x}) }_{F} \\
% &+\norm{ \nabla_{\bm{V}^{(l)}} f^{m}_t(\bm{x}')}_{F} \norm{ \nabla_{\bm{V}^{(l)}} f^{m}_t(\bm{x})- \nabla_{\bm{V}^{(l)}} f^{m}_0(\bm{x}) }_{F} \Big) \\
% & \leq O\left( m^{-1/12}\sqrt{ \log m}\right).
% \end{align*} 
% Also, when $m\geq\text{poly}\big(\log(1/\delta)\big)$, we can get $1-\exp(-\Omega(m^{5/6}))\geq 1-\delta$.

\end{proof}

\subsection{Lazy Regime}

\begin{lemma}
\label{lem: smallest eigenvalue leads to fast convergence}Let $\delta\in(0,1)$ and $t\geq 0$. Suppose that $\norm{\bm{W}^{(p,l)}_s - \bm{W}^{(p,l)}_0}_F = O(m^{1/4})$ and $\norm{\bm{V}^{(p,l)}_s - \bm{V}^{(p,l)}_0}_F = O(m^{1/4})$ hold for all $ s \in [0,t] $, $l\in [L]$ and $p\in[2]$. 
Then there exists a polynomial $\poly(\cdot)$, such that when $m\geq\poly\mpt{n,\lambda_0^{-1},\log(1/\delta)}$, 
% Then there exist some positive absolute constants $C_1>0$ and $C_2\geq 1$, such that when $m\geq C_1\bk{\pt{n^{2}\lambda_0^{-1}}^{15}+\pt{\log(C_2n^2/\delta)}^5}$, 
with probability at least $1-\delta$, for all $s\in [0,t]$, we have
\[\norm{\bm{u}(s)}_2^{2}\leq \exp\mpt{- \frac{\lambda_{0}}{n}s}\norm{\bm{u}(0)}_2^{2} = \exp\mpt{-\frac{\lambda_{0}}{n}s} \|\bm{y}\|_2^{2},\]
where $\bm{u}(t):=f_t^m(\bm{X})-\bm{y}$.
\end{lemma} 

\begin{proof}Denote $\tilde{\lambda}_0(s)=\lambda_{\min}\big(r_s^m(\bm{X},\bm{X})\big)$. By Weyl's inequality, we can get
 \begin{align*}
&\abs{\tilde{\lambda}_0(s)-\lambda_0}\leq \norm{ r_s^m(\bm{X},\bm{X}) - r(\bm{X},\bm{X}) }_2\leq \norm{ r_s^m(\bm{X},\bm{X}) - r(\bm{X},\bm{X}) }_F
 \\&\qquad\leq\norm{ r_s^m(\bm{X},\bm{X}) - r_0^m(\bm{X},\bm{X}) }_F+\norm{ r_0^m(\bm{X},\bm{X}) - r(\bm{X},\bm{X}) }_F
 \\&\qquad\leq\frac12\sum_{p=1}^2\bk{\sum^n_{i,j=1} \abs{r_s^{(p),m}(\bm{x}_i,\bm{x}_j) - r_0^{(p)m}(\bm{x}_i,\bm{x}_j)} + \sum^n_{i,j=1} \abs{r_0^{(p),m}(\bm{x}_i,\bm{x}_j) - r(\bm{x}_i,\bm{x}_j)}}.
 \end{align*}
According to \cref{prop:During training} and \cref{prop:initialization of NTK}, for $\delta_0={\delta}/(2n^2)$, with probability at least $1-2n^2\delta_0=1-\delta$, we can get
\[\abs{\tilde{\lambda}_0(s)-\lambda_0}\leq n^2 \cdot O\mpt{m^{-\frac{1}{12}} \sqrt{\log m}} + n^2\cdot O\mpt{m^{-0.2}} \leq n^2\cdot O\mpt{m^{-\frac1{15}}}\leq \frac{\lambda_0}{2}~\text{for all}~s\in[0,t]\]
when $m\geq C_1\bk{\pt{n^{2}\lambda_0^{-1}}^{15}+\pt{\log\mpt{C_2n^2/\delta}}^5}$ for some positive absolute constants $C_1>0$ and $C_2\geq 1$. This implies that $\tilde{\lambda}_0(s)\geq \lambda_0/2$ holds for all $s\in[0,t]$. Then we have
 \begin{equation*}
 \frac{\mr{d}}{\mr{d} s}\norm{\bm{u}(s)}^{2}_2=- \frac{2}{n}\bm{u}(s)^\top K_{s}(\bm{X},\bm{X}) \bm{u}(s)\leq -\frac{\lambda_{0}}{n} \norm{\bm{u}(s)}_{2}^{2}
 \end{equation*}
 and thus
 \begin{equation*}
 \frac{\mr{d}}{\mr{d} s}\mpt{\exp\mpt{\tfrac{\lambda_{0}}{n}s}\norm{ \bm{u}(s)}^{2}_2}=\exp\mpt{\frac{\lambda_{0}}{n}s}\pt{\frac{\lambda_{0}}{n}\lVert \bm{u}(s)\rVert^{2}_2+\frac{\mr{d}\lVert \bm{u}(s)\rVert_2^{2}}{\mr{d} s}}\leq 0.
 \end{equation*}
 Thus, with probability at least $1-\delta$, we can get $\exp\mpt{{\lambda_{0}s}/{n}}\lVert \bm{u}(s)\rVert^{2}_2\leq\|\bm{u}(0)\|_2^2 =\lVert \bm{y} \rVert^{2}_2$ holds for all $s\in[0,t]$ when $m\geq C_1\bk{\pt{n^{2}\lambda_0^{-1}}^{15}+\pt{\log\mpt{C_2n^2/\delta}}^5}$.
 Finally, by choosing
 \[\poly\mpt{n,\lambda_0^{-1},\log(1/\delta)}=C_1\bk{\pt{n^{2}\lambda_0^{-1}}^{15}+\pt{2n+\log\mpt{1/\delta}+\log C_2}^5},\]
 we can complete the proof of this lemma.
 
\end{proof}

\begin{lemma}
 \label{lem:A_lazy_W}Fix $l \in [L]$, $p \in [2]$ and let $\delta\in(0,1)$, $t\geq 0$. Suppose that
 \[\norm{f_s(\bm{X}) - \bm{y}}_2 \leq \exp\mpt{-\tfrac{\lambda_0}{4n}s} \norm{\bm{y}}_2\qquad~\text{holds for all}~s\in[0,t],\] then we have the following results: 
\begin{itemize}
 \item $i)$ Suppose that $\norm{ \bm{W}^{(p',l')}_s - \bm{W}^{(p',l')}_0 }_F \leq \frac{\sqrt{m}}{(\log m)^3}$ holds for all $(p',l')\not= (p,l)$ and $\norm{ \bm{V}^{(p'',l'')}_s - \bm{V}^{(p'',l'')}_0 }_F \leq \frac{\sqrt{m}}{(\log m)^3}$ holds for all $l''\in[L]$ and $p''\in[2]$ when $s\in[0,t]$. 
 Then there exists a polynomial $\poly(\cdot)$, such that when $m\geq\poly\mpt{n,\norm{\bm{y}}_2,\lambda_0^{-1},\log(1/\delta)}$, 
 % Then there exists a positive absolute constant $C>0$, such that when $m\geq C\bk{\pt{n\norm{\bm{y}}_2\lambda_0^{-1}}^5+\pt{\log\mpt{n/\delta}}^{6/5}}$, 
 with probability at least $1-\delta$, we have
 \begin{align*}
 \sup_{s\in[0,t]}\norm{\bm{W}^{(p,l)}_s-\bm{W}^{(p,l)}_0 }_F = O\mpt{{n\norm{\bm{y}}_2}/{\lambda_0}};
 \end{align*}

 \item $ii)$ Suppose that $\norm{ \bm{V}^{(p',l')}_s - \bm{V}^{(p',l')}_0 }_F \leq \frac{\sqrt{m}}{(\log m)^3}$ holds for all $(p',l') \not= (p,l)$ and $\norm{ \bm{W}^{(p'',l'')}_s - \bm{W}^{(p'',l'')}_0 }_F \leq \frac{\sqrt{m}}{(\log m)^3}$ holds for all $l''\in[L]$ and $p''\in[2]$ when $s\in[0,t]$. 
 Then there exists a polynomial $\poly(\cdot)$, such that when $m\geq\poly\mpt{n,\norm{\bm{y}}_2,\lambda_0^{-1},\log(1/\delta)}$, 
 % Then there exists a positive absolute constant $C>0$, such that when $m\geq C\bk{\pt{n\norm{\bm{y}}_2\lambda_0^{-1}}^5+\pt{\log\mpt{n/\delta}}^{6/5}}$, 
 with probability at least $1-\delta$, we have
 \begin{align*}
 \sup_{s\in[0,t]}\norm{\bm{V}^{(p,l)}_t-\bm{V}^{(p,l)}_0 }_F = O\mpt{{n\norm{\bm{y}}_2}/{\lambda_0}}.
 \end{align*}
\end{itemize}

\end{lemma}

\begin{proof}
 First of all, we have
 % \begin{align*}
 % \norm{\bm{W}^{(p,l)}_t - \bm{W}^{(p,l)}_0}_F &= \norm{\int^t_0 \mr{d} \bm{W}^{(p,l)}_s}_F
 % =\norm{\int^t_0 \frac{1}{n}\sum^n_{i=1} ( f_s(\bm{x}_i) - y_i) \nabla_{\bm{W}^{(p,l)}} f^{m}_s(\bm{x}_i)~\mr{d} s }_F\\
 % &\leq \frac{1}{n} \max_{0\leq s \leq t} \norm{\nabla_{\bm{W}^{(p,l)}} f_s^{m}(\bm{x}_i)}_F \int^t_0 \norm{f_s(\bm{X})-\bm{y}}_2~ \mr{d} s\\
 % &\leq O\big(\tfrac{\sqrt{n}}{\lambda_0}\big)\cdot \max_{0\leq s \leq t} \norm{\nabla_{\bm{W}^{(p,l)}} f_s^{m}(\bm{x}_i)}_F. 
 % \end{align*}
 \begin{align*}
 \norm{\bm{W}^{(p,l)}_{t_0} - \bm{W}^{(p,l)}_0}_F &= \norm{\int^{t_0}_0 \mr{d} \bm{W}^{(p,l)}_s}_F
 =\norm{\int^{t_0}_0 \frac{1}{n}\sum^n_{i=1} ( f^{m}_s(\bm{x}_i) - y_i) \nabla_{\bm{W}^{(p,l)}} f^{m}_s(\bm{x}_i)~\mr{d} s }_F\\
 &\leq \frac{1}{\sqrt{2}n}\sum_{i=1}^n \max_{0\leq s \leq t_0} \norm{\nabla_{\bm{W}^{(p,l)}} f_s^{(p),m}(\bm{x}_i)}_F \int^{t_0}_0 \norm{f^m_s(\bm{X})-\bm{y}}_2~ \mr{d} s\\
 &\leq O\mpt{\frac{\norm{\bm{y}}_2}{\lambda_0}}\cdot \sum_{i=1}^n\max_{0\leq s \leq t_0} \norm{\nabla_{\bm{W}^{(p,l)}} f_s^{(p),m}(\bm{x}_i)}_F
 \end{align*}
for all $t_0\in[0,t]$, and
 \begin{align}\label{eq: sup_Delta_W_lazy} \sup_{t_0\in[0,t]}\norm{\bm{W}^{(p,l)}_{t_0} - \bm{W}^{(p,l)}_0}_F\leq O\mpt{\frac{\norm{\bm{y}}_2}{\lambda_0}}\cdot \sum_{i=1}^n\max_{0\leq s \leq t} \norm{\nabla_{\bm{W}^{(p,l)}} f_s^{(p),m}(\bm{x}_i)}_F.\end{align}
 
 Also, we can get
 \begin{equation*}
 \norm{\nabla_{\bm{W}^{(p,l)}} f_s^{(p),m}(\bm{x}_i) }_F \leq \norm{\nabla_{\bm{W}^{(p,l)}} f_0^{(p),m}(\bm{x}_i) }_F + \norm{\nabla_{\bm{W}^{(p,l)}} f_s^{(p),m}(\bm{x}_i) - \nabla_{\bm{W}^{(p,l)}} f_0^{(p),m}(\bm{x}_i) }_F
 \end{equation*}
 by the triangle inequality. For the first term, by \cref{lem:NTK_f0}, we know that with probability at least $1-\exp\mpt{-\Omega(m)}$, we have $\norm{\nabla_{ \bm{W}^{(p,l)}} f_0^{(p),m}(\bm{x}_i) }_F = O(1)$ for any $i\in[n]$. So it suffices to bound the second term.

Denote $\mathcal{A} = \left\{s\in[0,t]:\norm{\bm{W}^{(p,l)}_s- \bm{W}^{(p,l)}_0}_F \geq {\sqrt{m}}/{(\log m)^3} \right\}$. Assume that $\mathcal{A}\not=\varnothing$ and let $s_0=\min\mathcal{A}$. Then for any $p'$, $l'$, we have $\norm{\bm{W}^{(p',l')}_s - \bm{W}^{(p',l')}_0}_F \leq {\sqrt{m}}/{(\log m)^3}$ and $\norm{\bm{V}^{(l')}_s - \bm{V}^{(l')}_0}_F \leq {\sqrt{m}}/{(\log m)^3}$ when $s\in[0,s_0]$. 

By \cref{lem:NTK_ft}, we know for any $i \in [n]$, with probability at least $1-\exp\mpt{-\Omega\big(m(\log m)^{-2}\big)}\geq 1-\exp\mpt{-\Omega(m^{5/6})}$, we have
 \begin{equation}\label{eq: Delta_nabla_W_pl_f_lazy}
 \max_{s\in[0,s_0]}\norm{\nabla_{ \bm{W}^{(p,l)}} f^{(p),m}_s(\bm{x}_i) - \nabla_{ \bm{W}^{(p,l)}} f^{(p),m}_0(\bm{x}_i) }_F = O(1).
 \end{equation}
 Combine with the definition of $s_0$, with probability at least $1-n\exp\mpt{-\Omega(m^{5/6})}$, we have
 \begin{equation*}
{\sqrt{m}}/{(\log m)^3}\leq \norm{\bm{W}^{(p,l)}_{s_0} - \bm{W}^{(p,l)}_0}_F= O\mpt{n\norm{\bm{y}}_2/{\lambda_0}},
 \end{equation*}
which will lead to contradiction when $m\geq \Omega\mpt{n\norm{\bm{y}}_2\lambda_0^{-1}}^5$. This means that $\mathcal A=\varnothing$ and \cref{eq: Delta_nabla_W_pl_f_lazy} holds for $s_0=t$. Comibine with \cref{eq: sup_Delta_W_lazy}, we can get the conclusion of $i)$. Also, it is easy to check that there exists a positive absolute constant $C$ such that when $m
 \geq C\log(n/\delta)^{6/5}$, we have $1-n\exp\mpt{-\Omega(m^{5/6})}\geq 1-\delta$.

Finally, by choosing
 \[\poly\mpt{n,\lambda_0^{-1},\log(1/\delta)}=C'\bk{\pt{n\norm{\bm{y}}_2\lambda_0^{-1}}^{5}+\pt{n+\log\mpt{1/\delta}}^2+1}\]
for some positive absolute constant $C'>0$, we can complete the proof of $i)$. And we can prove $ii)$ with the same above argument.
 
\end{proof}

\begin{lemma}
 \label{lem:A_lazy_regime}There exists a polynomial $\poly(\cdot)$, such that for any $\delta\in(0,1)$, when $m\geq\poly\mpt{n,\norm{\bm{y}}_2,\lambda_0^{-1},\log(1/\delta)}$, then with probability at least $1-\delta$, for all $p\in[2]$ and $l\in[L]$, we have
 $$ \sup_{t\geq 0}\norm{\bm{W}^{(p,l)}_t - \bm{W}^{(p,l)}_0 }_F =O(m^{1/4}), \quad \sup_{t\geq 0}\norm{\bm{V}^{(p,l)}_t - \bm{V}^{(p,l)}_0 }_F =O(m^{1/4}). $$
\end{lemma}

\begin{proof}[Proof of \cref{lem:A_lazy_regime}]
Denote $t_0 = \min\Big\{t\geq 0:\exists l,~p~\text{such that}~\norm{\bm{W}^{(p,l)}_t - \bm{W}^{(p,l)}_0}_F \geq m^{1/4}~\text{or}~\norm{\bm{V}^{(p,l)}_t - \bm{V}^{(p,l)}_0}_F \geq m^{1/4} ~\text{or}~\norm{\bm{u}(t)}_2 \geq \exp\mbk{-{\lambda_0 t}/{(4n)}} \norm{\bm{y}}_2\Big\} $ and assume that $t_0$ is finite. Then for all $t\in[0,t_0]$, we can get 
\[\norm{\bm{W}^{(p,l)}_t - \bm{W}^{(p,l)}_0}_F \leq m^{1/4}, \quad \norm{\bm{V}^{(p,l)}_t - \bm{V}^{(p,l)}_0 }_F \leq m^{1/4}~\text{and}~\norm{\bm{u}(t)}_2 \leq \exp\mpt{-\frac{\lambda_0 t}{4n}} \norm{\bm{y}}_2\]
hold for all $p$, $l$. According to Lemmas \ref{lem:A_lazy_W} and \ref{lem: smallest eigenvalue leads to fast convergence}, there exists a polynomial $\poly(\cdot)$, such that when $m\geq \poly\mpt{n,\norm{\bm{y}}_2,\lambda_0^{-1},\log(1/\delta)}$, with probability at least $1-\delta$, we have 
\begin{align*}\norm{\bm{W}^{(p,l)}_{t_0}-\bm{W}^{(p,l)}_0 }_F = O\big({{n}\norm{\bm{y}}_2}/{\lambda_0}\big),\qquad~\norm{\bm{V}^{(p,l)}_{t_0}-\bm{V}^{(p,l)}_0 }_F = O\big({{n}\norm{\bm{y}}_2}/{\lambda_0}\big)\end{align*}
hold for all $p$, $l$ and
\[\norm{\bm{u}(t_0)}_2 \leq \exp\mpt{-\frac{\lambda_0 t}{2n}} \norm{\bm{y}}_2.\]
 Combine with the definition of $t_0$, we can get there exist $p$, $l$ such that
\begin{align*}m^{1/4}\leq\norm{\bm{W}^{(p,l)}_{t_0}-\bm{W}^{(p,l)}_0 }_F = O\mpt{\tfrac{{n}\norm{\bm{y}}_2}{\lambda_0}}\quad~\text{or}\quad~m^{1/4}\leq\norm{\bm{V}^{(p,l)}_{t_0}-\bm{V}^{(p,l)}_0 }_F = O\mpt{\tfrac{{n}\norm{\bm{y}}_2}{\lambda_0}}.\end{align*}
However, this will lead to contradiction when $m\geq C\mpt{n\norm{\bm{y}}_2\lambda_0^{-1}}^5$ for some positive absolute constant $C>0$.
\end{proof}

\subsection{Nearly Hölder Continuity of $r_{\theta(t)}^{m}$}

% \subsection{Nearly Hölder Continuity of RNK}

\begin{lemma}\label{lem:8.2_x_z} 
Let $\tau\in\bk{\Omega\mpt{1/\sqrt{m}},O\mpt{{\sqrt{m}}/{(\log m)^3}}}$, $T\subseteq [0,\infty)$ and fix $\bm{z}\in\mathcal{D}$. Suppose that $\norm{\bm{W}^{(l)}_t - \bm{W}^{(l)}_0 }_{F}\leq\tau$ and $\norm{\bm{V}^{(l)}_t - \bm{V}^{(l)}_0 }_{F}\leq\tau$ hold for all $t\in T$ and $l\in[L]$. Then there exists a positive absolute constant $C$, such that with probability at least $1-\exp\mpt{-\Omega(m^{2/3}\tau^{2/3})}$, for all $t\in T$, $l\in[L]$ and $\bm{x}\in\mathcal{D}$ such that $\norm{\bm{x}-\bm{z}}_2\leq O\mpt{1/m}$, we have 
\begin{itemize}
 \item $i)$ $\norm{\bm{g}'_{l,\bm{xz}}}_2 = O\mpt{{\tau}/{\sqrt{m}}}$;
 \item $ii)$ $\norm{\bm{D}^{(l)\prime}_{\bm{xz}}}_0 =O\mpt{m^{2/3}\tau^{2/3}}$ and $\norm{\bm{D}^{(l)\prime}_{\bm{xz}} \bm{W}^{(l)}_t \bm{\alpha}^{(l-1)}_{t,\bm{x}}}_2=O\mpt{{\tau}}$;
 \item $iii)$ $\norm{\Delta\bm{\alpha}^{(l)}_{\bm{xz}}}_2=O\mpt{{\tau}/{\sqrt{m}}}$.
\end{itemize}
when $m$ is greater than the positive constant $C$. 
\end{lemma}

\begin{proof}[Proof of \cref{lem:8.2_x_z}]The proof of this lemma is similar to the proof of \cref{lem: 3result_point}. The only thing to note is that the conclusion of this lemma holds uniformly for $\bm{x}\in\mathcal{D}$ such that $\norm{\bm{x}-\bm{z}}_2\leq O\mpt{1/m}$ with high probability for any fixed $\bm{z}$. We inductively prove this lemma. 

Since $\bm{A}$ does not change during the training, we can easily check that, with probability at least $1-\exp\mpt{-\Omega(m)}$, we have $\norm{\Delta\bm{\alpha}^{(0)}_{\bm{xz}}}_2=\norm{\bm{A}(\bm{x}-\bm{z})/\sqrt{m}}_2=O\mpt{1/{{m}}}$ for any $\bm{x}\in\mathcal{D}$ such that $\norm{\bm{x}-\bm{z}}_2\leq O(1/m)$, which means that $iii)$ holds for $l=0$ since $\tau=\Omega(1/\sqrt{m})$. Then it only needs to be proven that
\[iii)~\text{holds for}~l=k\quad\Longrightarrow\quad~\text{Lemma holds for}~l=k+1.\]
Now we assume that $iii)$ holds for $l=k\in\{0,1,\cdots,L-1\}$, then with probability at least $1-\exp\mpt{-\Omega(m)}$, for any $\bm{x}\in\mathcal{D}$ such that $\norm{\bm{x}-\bm{z}}_2=O(1/m)$, we have
\[\norm{\bm{\alpha}^{(k)}_{t,\bm{x}}}_2=\norm{\bm{\alpha}^{(k)}_{0,\bm{z}}+\Delta\bm{\alpha}^{(k)}_{\bm{xz}}}_2\leq\norm{\bm{\alpha}^{(k)}_{0,\bm{z}}}_2+\norm{\Delta\bm{\alpha}^{(k)}_{\bm{xz}}}_2=O(1)+O\mpt{\tau/\sqrt{m}}=O(1).\]

For $i)$, similar to the proof of \cref{lem: 3result_point}, we can get
\begin{align*}
 \bm{g}_{k+1,\bm{xz}}'&=\sqrt{\frac{2}{m}}\pt{ \Delta\bm{W}^{(k+1)} \bm{\alpha}^{(k)}_{t,\bm{x}} + \bm{W}^{(k+1)}_0 \Delta\bm{\alpha}^{(k)}_{\bm{xz}}}. 
 \end{align*}
Thus we can get $i)$ holds for $l=k+1$.

Considering that the conclusion of \cref{lem: D'_Estimation} holds uniformly for $\bm{g}'$ with high probability, taking $\bm{W}_0=\bm{W}^{(k+1)}_0$, $\bm{h}=\bm{\alpha}^{(k)}_{0,\bm{z}}$, $\bm{g}'=\bm{g}'_{k+1,\bm{xz}}$ and $\delta=\Theta\mpt{{\tau}/{\sqrt{m}}}\leq O\mpt{(\log m)^{-3}}$, then we can get $ii)$ holds for $l=k+1$ with probability at least $1-\exp\mpt{-\Omega(m^{2/3}\tau^{2/3})}$.

As for $iii)$, it is easy to check that
\begin{align*}
\Delta\bm{\alpha}^{(k+1)}_{\bm{{xz}}}
% &=\Delta\bm{\alpha}^{(l)}+\frac{\sqrt 2 a}{m}\pt{\bm{V}^{(l+1)}_t\bm{D}^{(l+1)}_t\bm{W}^{(l+1)}_t\bm{\alpha}^{(l)}_t-\bm{V}^{(l+1)}_0\bm{D}^{(l+1)}_0\bm{W}^{(l+1)}_0\bm{\alpha}^{(l)}_0}\\
&=\Delta\bm{\alpha}^{(k)}_{\bm{xz}}+\frac{\sqrt{2}a}{m}\Big[\bm{V}^{(k+1)}_t\bm{D}^{(k+1)\prime}_{\bm{xz}}\bm{W}^{(k+1)}_t\bm{\alpha}^{(k)}_{t,\bm{x}}+\Delta\bm{V}^{(k+1)}\bm{D}^{(k+1)}_{0,\bm{z}}\bm{W}^{(k+1)}_t\bm{\alpha}^{(k)}_{t,\bm{x}}
\\&\qquad\qquad\qquad\qquad\qquad\qquad+\bm{V}^{(k+1)}_0\bm{D}^{(k+1)}_{0,\bm{z}}\pt{\bm{W}^{(k+1)}_t\bm{\alpha}^{(k)}_{t,\bm{x}}-\bm{W}^{(k+1)}_0\bm{\alpha}^{(k)}_{0,\bm{z}}}\Big].
\end{align*}
We have shown that, with probability at least $1-\exp\mpt{-\Omega(m^{2/3}\tau^{2/3})}$, $i)$ $ii)$ hold for $l=k+1$. Combine with \cref{lem:up_bound_of_alpha_delta}, we can get $\norm{\Delta\bm{\alpha}^{(k+1)}_{\bm{xz}}}_2= O\mpt{\tau/\sqrt{m}}$. 

 Thus, we finish the proof.
 
\end{proof}

%Denote $\Delta\bm{\delta}^{(l)}_{\bm x \bm z} = \bm{\delta}^{(l)}_{t,\bm x}- \bm{\delta}^{(l)}_{0,\bm z}$, where $\bm{\delta}^{(l)}_{t,\bm x}=\bm{\delta}^{(l)}_t(\bm{x})$ and $\bm{\delta}^{(l)}_{0,\bm z}=\bm{\delta}^{(l)}_0(\bm{z})$ for $l\in[L]$.
\begin{lemma}\label{lem:8.2_Delta_delta_xz}
Let $\tau\in\bk{\Omega\mpt{1/\sqrt{m}},O\mpt{{\sqrt{m}}/{(\log m)^3}}}$, $T\subseteq [0,\infty)$ and fix $\bm{z}\in\mathcal{D}$. Suppose that $\norm{\bm{W}^{(l)}_t - \bm{W}^{(l)}_0 }_{F}\leq\tau$ and $\norm{\bm{V}^{(l)}_t - \bm{V}^{(l)}_0 }_{F}\leq\tau$ hold for all $t\in T$ and $l\in[L]$. Then there exists a positive absolute constant $C$, such that with probability at least $1-\exp\mpt{-\Omega(m^{2/3}\tau^{2/3})}$, for all $t\in T$, $l\in[L]$ and $\bm{x}\in\mathcal{D}$ such that $\norm{\bm{x}-\bm{z}}_2\leq O\mpt{1/m}$, we have 
% Suppose $\norm{\bm{W}^{(l)}_t - \bm{W}^{(l)}_0 }_{F}\leq\tau=O\Big(\frac{\sqrt{m}}{(\log m)^3}\Big)$ and $\norm{\bm{V}^{(l)}_t - \bm{V}^{(l)}_0 }_{F}\leq\tau=O\Big(\frac{\sqrt{m}}{(\log m)^3}\Big)$ hold for all $t\in T$. With probability $1-\me^{-\Omega(m^{2/3}\tau^{2/3})}$, we have
\begin{align*}
\norm{\Delta\bm{\delta}^{(l)}_{\bm x \bm z}}_2= O\mpt{m^{1/3}\tau^{1/3}\sqrt{\log m}},
\end{align*}
when $m$ is greater than the positive constant $C$. 
\end{lemma}

\begin{proof}[Proof of \cref{lem:8.2_Delta_delta_xz}]
The proof of this lemma is similar to the proof of \cref{lem: Delta_delta_point}. The only thing to note is that the conclusion of this lemma holds uniformly for $\bm{x}\in\mathcal{D}$ such that $\norm{\bm{x}-\bm{z}}_2\leq O\mpt{1/m}$ with high probability for any fixed $\bm{z}$. We inductively prove this lemma. 

Base case: $\norm{\Delta\bm{\delta}^{(L)}_{\bm{xz}}}_2 = 0$ since $\bm{v}$ is fixed during the training process.

Assume that this lemma holds for $l+1$, then with probability at least $1-\exp\mpt{-\Omega(m)}$, we have
\[\norm{\bm{\delta}^{(l+1)}_{t,\bm{x}}}_2=\norm{\Delta\bm{\delta}^{(l+1)}_{\bm{xz}}+\bm{\delta}^{(l+1)}_{0,\bm{z}}}_2\leq\norm{\Delta\bm{\delta}^{(l+1)}_{\bm{xz}}}_2+\norm{\bm{\delta}^{(l+1)}_{0,\bm{z}}}_2\leq O(\sqrt{ m})\]
for all $\bm{x}\in\mathcal{D}$. Moreover, it is easy to check that
\begin{align*}
\Delta\bm{\delta}^{(l)}_{\bm{xz}}
&=\Delta\bm{\delta}^{(l+1)}_{\bm{xz}}+\tfrac{\sqrt{2}a}{m}\Big[\bm{W}^{(l+1),T}_{0}\bm{D}^{(l+1)\prime}_{\bm{xz}}\bm{V}^{(l+1),T}_{0}\bm{\delta}^{(l+1)}_{0,\bm{z}}+\Delta\bm{W}^{(l+1),T}\bm{D}^{(l+1)}_{t,\bm{x}}\bm{V}^{(l+1),T}_0\bm{\delta}^{(l+1)}_{0,\bm{x}}
\\&\qquad\qquad\qquad+\bm{W}^{(l+1),T}_t\bm{D}^{(l+1)}_{t,\bm{x}}\Delta\bm{V}^{(l+1),T} \bm{\delta}^{(l+1)}_{0,\bm{x}} + \bm{W}^{(l+1),T}_t\bm{D}^{(l+1)}_{t,\bm{x}}\bm{V}^{(l+1),T}_t \Delta\bm{\delta}^{(l+1)}_{\bm{xz}}\Big].
\end{align*}
Let us denote the four terms within the square brackets, excluding the factor `$\sqrt 2a/m$' outside the brackets, as $\bm{u}_1$ to $\bm{u}_4$ respectively. We can control $\norm{\bm{u}_2}_2$, $\norm{\bm{u}_3}_2$, and $\norm{\bm{u}_4}_2$ using the same method as in the proof of \cref{lem: Delta_delta_point}, since $\norm{\bm{D}^{(l+1)}_{t,\bm{x}}}_2\leq 1$ and the conclusion of \cref{lem:up_bound_of_alpha_delta} holds uniformly for $\bm{x}$.

As for $\bm{u}_1$, if ${\bm{\delta}^{(l+1)}_{0,\bm{z}}}=\bm{0}$ or $\norm{\bm{D}^{(l+1)\prime}_{\bm{xz}}}_{0}=0$, we have $\norm{\bm{u}_1}_2=0$. Therefore, we consider the case where ${\bm{\delta}^{(l+1)}_{0,\bm{z}}}\not=\bm{0}$ and $\norm{\bm{D}^{(l+1)\prime}_{\bm{xz}}}_0\geq1$. Denote $\tilde{\bm{\delta}}_{\bm{z}}={\bm{\delta}^{(l+1)}_{0,\bm{z}}}/\norm{\bm{\delta}^{(l+1)}_{0,\bm{z}}}_2$ for $\bm{\delta}^{(l+1)}_{0,\bm{z}}\in\mb{R}^{m}\backslash\{\bm{0}\}$, we can get
\begin{align*}
\norm{\bm{u}_1}_2&\leq \norm{\bm{W}^{(l+1)}_0}_2\norm{\bm{D}^{(l+1)\prime}_{\bm{xz}}\bm{V}^{(l+1)}_0\tilde{\bm{\delta}}_{\bm{z}}}_2\norm{\bm{\delta}^{(l+1)}_{0,\bm{z}}}_2\leq O\mpt{m}\norm{\bm{D}^{(l+1)\prime}_{\bm{xz}}\bm{V}^{(l+1)}_0\tilde{\bm{\delta}}_{\bm{z}}}_2.
\end{align*}
Using the randomness of $\bm{V}^{(l+1)}_0$, for any fixed $\tilde{\bm{\delta}}_{\bm{z}}$, we have $\bm{V}^{(l+1)}_0\tilde{\bm{\delta}}_{\bm{z}}\sim\mathcal{N}(\bm{0},\bm{I}_m)$. Thus, by \cref{lem:Recur_uv} and taking $s\geq\norm{\bm{D}^{(l+1)\prime}_{\bm{xz}}}_0$, with probability at least $1-\exp\mpt{-\Omega(s\log m)}$, we can get
\begin{equation*}
 \forall \bm{u} \in \mathbb{R}^m\, \text{s.t.} \norm{\bm{u}}_0 \leq s, \quad\text{we have }\quad \abs{ \bm{u}^\top \bm{V}^{(l+1)}_0\tilde{\bm{\delta}}_{\bm{z}}} \leq 9 \sqrt{s \log m } \norm{\bm{u}}_2.
 \end{equation*}
According to \cref{lem:8.2_x_z}, with probability at least $1-\exp\mpt{-\Omega(m^{2/3}\tau^{2/3})}$, for any $\bm{x}\in\mathcal{D}$ such that $\norm{\bm{x}-\bm{z}}_2\leq O\mpt{1/m}$, we have $\norm{\bm{D}^{(l)\prime}_{\bm{xz}}}_0 =O\mpt{m^{2/3}\tau^{2/3}}$. By taking $s=\Theta(m^{2/3}\tau^{2/3})$, we can get
\[\norm{\bm{D}^{(l+1)\prime}_{\bm{xz}}\bm{V}^{(l+1)}_0\tilde{\bm{\delta}}_{\bm{z}}}_2=\sup_{\bm{u}\in\mb{S}^{m-1}}\abs{ \bm{u}^\top\bm{D}^{(l+1)\prime}_{\bm{xz}}\bm{V}^{(l+1)}_0\tilde{\bm{\delta}}_{\bm{z}}} \leq 9 \sqrt{s \log m }\]
holds uniformly for $\bm{x}$.

Combining the above discussions, we can conclude that $\norm{\Delta\bm{\delta}^{(l)}_{\bm x \bm z}}_2\leq O\mpt{m^{1/3}\tau^{1/3}\sqrt{\log m}}$.

\end{proof}

%Denote $\Delta\bm{\gamma}^{(l)}_{\bm x \bm z}:=\bm{\gamma}^{(l)}_{t,\bm x} -\bm{\gamma}^{(l)}_{0,\bm z}$, $\Delta\bm{\eta}^{(l)}_{\bm x \bm z}:=\bm{\eta}^{(l)}_{t,\bm x} -\bm{\delta}^{(l)}_{0,\bm z}$. We have the following lemma:
\begin{lemma}\label{lem: gamma_eta_t_vs_0_uniform}Let $\tau\in\bk{\Omega\mpt{1/\sqrt{m}},O\mpt{{\sqrt{m}}/{(\log m)^3}}}$, $T\subseteq [0,\infty)$ and fix $l\in[L]$, $\bm{z}\in\mathcal{D}$. Suppose that $\norm{\bm{W}^{(l)}_t - \bm{W}^{(l)}_0 }_{F}\leq\tau$ and $\norm{\bm{V}^{(l)}_t - \bm{V}^{(l)}_0 }_{F}\leq\tau$ hold for all $t\in T$, then there exists a positive absolute constant $C$, such that with probability at least $1-\exp\mpt{-\Omega(m^{2/3}\tau^{2/3})}$ over the randomness of $\bm{W}^{(l)}_0$ and $\bm{V}^{(l)}_0$, for all $t\in T$, $l\in[L]$ and $\bm{x}\in\mathcal{D}$ such that $\norm{\bm{x}-\bm{z}}_2\leq O\mpt{1/m}$, we have
 \begin{align*}
 \begin{gathered}
 \norm{\Delta\bm{\gamma}^{(l)}_{\bm x \bm z}}_{2}= O\mpt{m^{-1/6}\tau^{1/3}\sqrt{\log m}}, \quad~ \norm{\Delta\bm{\eta}^{(l)}_{\bm x \bm z}}_{2} = O\mpt{\frac{\tau}{{m}}};\\
 \norm{\bm{\gamma}^{(l)}_{t,\bm x}}_2=O(1),\qquad~ \norm{\bm{\eta}^{(l)}_{t,\bm x}}_2 =O(1/\sqrt{m})
 \end{gathered}
 \end{align*}
when $m$ is greater than the positive constant $C$. 
\end{lemma}

\begin{proof}[Proof of \cref{lem: gamma_eta_t_vs_0_uniform}]
First of all, we have
\begin{align*}
&\bm{\gamma}^{(l)}_{t,\bm x} -\bm{\gamma}^{(l)}_{0,\bm z}=\frac{\sqrt{2}}{m}\pt{\bm{D}^{(l)}_{t,\bm x} \bm{V}^{(l),T}_t 
 \bm{\delta}^{(l)}_{t,\bm x}- \bm{D}^{(l)}_{0,\bm z} \bm{V}^{(l),T}_0 \bm{\delta}^{(l)}_{0,\bm z}}\\
&\qquad=\frac{\sqrt{2}}{m}\pt{\bm{D}^{(l)\prime}_{\bm x \bm z}\bm{V}^{(l),T}_0\bm{\delta}^{(l)}_{0,\bm z}+\bm{D}^{(l)}_{t,\bm x}\Delta\bm{V}^{(l),T}\bm{\delta}^{(l)}_{t,\bm x}+\bm{D}^{(l)}_t\bm{V}^{(l),T}_0\Delta\bm{\delta}^{(l)}_{\bm x \bm z}}.
\end{align*}
Using the similar proof technique as the previous lemma, we can establish that with probability at least $1-\exp\mpt{-\Omega(m^{2/3}\tau^{2/3})}$, we have
\[\norm{\bm{D}^{(l)\prime}_{\bm x \bm z}\bm{V}^{(l),T}_0\bm{\delta}^{(l)}_{0,\bm z}}_2=O\mpt{{m^{5/6}\tau^{1/3}\sqrt{\log m}}}.\]
According to \cref{coro: Random matrix}, \cref{lem:up_bound_of_alpha_delta} and \cref{lem:8.2_Delta_delta_xz} we can get
\begin{align*}
\norm{\bm{D}^{(l)}_{t,\bm x}\Delta\bm{V}^{(l),T}\bm{\delta}^{(l)}_{t,\bm x}}_2=O\mpt{\tau\sqrt{m}};\qquad~\norm{\bm{D}^{(l)}_t\bm{V}^{(l),T}_0\Delta\bm{\delta}^{(l)}_{\bm x \bm z}}_2=O\mpt{m^{5/6}\tau^{1/3}\log m}.
\end{align*}
Thus, we can get $\norm{\Delta\bm{\gamma}^{(l)}_{\bm x \bm z}}_{2}= O\mpt{m^{-1/6}\tau^{1/3}\log m}$.

As for $\norm{\Delta\bm{\eta}^{(l)}_{\bm x \bm z}}_{2}$, we can similarly get
\begin{align*}
\norm{\Delta\bm{\eta}^{(l)}_{\bm x \bm z}}_2&=\frac{\sqrt{2}}{m}\norm{\bm{D}^{(l)\prime}_{\bm x \bm z}\bm{W}^{(l)}_t\bm{\alpha}^{(l-1)}_{t,x}+\bm{D}^{(l)}_{0,\bm z}\Delta\bm{W}^{(l)}\bm{\alpha}^{(l-1)}_{t,\bm x}+\bm{D}^{(l)}_{0,\bm z}\bm{W}^{(l)}_0\Delta\bm{\alpha}^{(l-1)}_{\bm x \bm z}}_2\\
&\leq \frac{\sqrt{2}}{m}\big(O\mpt{\tau}+O\mpt{\tau}+O\mpt{\tau}\big)=O\mpt{\frac{\tau}{m}}
\end{align*}
according to \cref{coro: Random matrix}, \cref{lem:up_bound_of_alpha_delta} and \cref{lem:8.2_x_z}.
With the above results, we can easily get 
\begin{align*}
\begin{gathered}
\norm{\bm{\gamma}^{(l)}_{t,\bm x}}_2\leq\norm{\bm{\gamma}^{(l)}_{0,\bm z}}_2+\norm{\Delta\bm{\gamma}^{(l)}_{\bm x \bm z}}_2=O(1),\quad\norm{\bm{\eta}^{(l)}_{t,\bm x}}_2\leq\norm{\bm{\eta}^{(l)}_0}_2+\norm{\Delta\bm{\eta}^{(l)}_{\bm x \bm z}}_2=O\mpt{\tfrac{1}{\sqrt{m}}}
 \end{gathered}
\end{align*}
since $\tau=O\mpt{{\sqrt{m}}/{(\log m)^3}}$.
\end{proof}

\begin{lemma}
 \label{lem:NTK_ft_uniform}
Let $\tau\in\bk{\Omega\mpt{1/\sqrt{m}},O\mpt{{\sqrt{m}}/{(\log m)^3}}}$, $T\subseteq [0,\infty)$ and fix $\bm{z}\in\mathcal{D}$. Suppose that $\norm{\bm{W}^{(l)}_t - \bm{W}^{(l)}_0 }_{F}\leq\tau$ and $\norm{\bm{V}^{(l)}_t - \bm{V}^{(l)}_0 }_{F}\leq\tau$ hold for all $t\in T$ and $l\in[L]$. Then there exists a positive absolute constant $C$, such that with probability at least $1-\exp\mpt{-\Omega(m^{2/3}\tau^{2/3})}$, for all $l\in[L]$ and $\bm{x}\in\mathcal{D}$ such that $\norm{\bm{x}-\bm{z}}_2\leq O\mpt{1/m}$, we have 
\begin{align*}
\begin{gathered}
 \sup_{t\in T}\norm{\nabla_{\bm{W}^{(l)}} f^{(p),m}_t(\bm{x}) - \nabla_{\bm{W}^{(l)}} f^{(p),m}_0(\bm{z})}_{F}= O\mpt{m^{-1/6}\tau^{1/3} \sqrt{\log m} };\\
 \sup_{t\in T}\norm{\nabla_{\bm{V}^{(l)}} f^{(p),m}_t(\bm{x}) - \nabla_{\bm{V}^{(l)}} f^{(p),m}_0(\bm{z})}_{F}= O\mpt{m^{-1/6}\tau^{1/3}\sqrt{\log m}},
 \end{gathered}
 \end{align*}
 when $m$ is greater than the positive constant $C$. 
\end{lemma}

\begin{proof}[Proof of \cref{lem:NTK_ft_uniform}]According to \cref{eq: nabla_W_V_f}, we have
\begin{align*}
 &\norm{\nabla_{\bm{W}^{(l)}} f^{(p),m}_t(\bm{x})- \nabla_{\bm{W}^{(l)}} f^{(p),m}_0(\bm{z})}_{F} = \norm{ a \bm{\gamma}^{(l)}_{t,\bm{x}} \bm{\alpha}^{(l-1),T}_{t,\bm{x}} - a \bm{\gamma}^{(l)}_{0,\bm z}\bm{\alpha}^{(l-1),T}_{0,\bm z}}_{F}\\
 &\qquad=a\norm{\bm{\gamma}^{(l)}_{t,\bm x}\Delta\bm{\alpha}^{(l-1),T}_{\bm x \bm z}+\Delta\bm{\gamma}^{(l)}_{\bm x \bm z}\bm{\alpha}^{(l-1),T}_{0,\bm z}}_F\leq a\norm{\bm{\gamma}^{(l)}_{t,\bm x}\Delta\bm{\alpha}^{(l-1),T}_{\bm x \bm z}}_F+a\norm{\Delta\bm{\gamma}^{(l)}_{\bm x \bm z}\bm{\alpha}^{(l-1),T}_{0,\bm z}}_F\\
 &\qquad= a\norm{\bm{\gamma}^{(l)}_{t,\bm x}}_2 \norm{\Delta\bm{\alpha}^{(l-1)}_{\bm x\bm z}}_{2} + a\norm{\Delta\bm{\gamma}^{(l)}_{\bm x \bm z}}_2\norm{\bm{\alpha}^{(l-1)}_{0,\bm z}}_2 \leq O\mpt{m^{-1/6}\tau^{1/3} \sqrt{\log m}}
 \end{align*}
according to Lemmas \ref{lem: gamma_eta_t_vs_0_uniform} and \ref{lem:8.2_x_z} $iii)$. Similarly, we can also get
 \begin{align*}
 &\norm{\nabla_{\bm{V}^{(l)}} f^{(p),m}_t(\bm{z}) - \nabla_{\bm{V}^{(l)}} f^{(p),m}_0(\bm{z})}_{F} = \norm{a \bm{\delta}^{(l)}_{t,\bm x} \bm{\eta}^{(l),T}_{t,\bm x} - a \bm{\delta}^{(l)}_{0,\bm z} \bm{\eta}^{(l),T}_{0,\bm z}}_{F}\\
 &\qquad\leq a\norm{\bm{\eta}^{(l)}_{t,\bm x}}_2 \norm{\Delta\bm{\delta}^{(l)}_{\bm x\bm z}}_{2} + a\norm{\Delta\bm{\eta}^{(l)}_{\bm x \bm z}}_2 \norm{\bm{\delta}^{(l)}_{0,\bm z}}_2\leq O\mpt{m^{-1/6}\tau^{1/3} \sqrt{\log m}}
 \end{align*}
according to Lemmas \ref{lem: gamma_eta_t_vs_0_uniform} and \ref{lem:8.2_Delta_delta_xz}.

Thus, we finish the proof.

\end{proof}

\begin{proposition}
 \label{prop:During training_uniform}Fix $ \bm{z},\bm{z}' \in \mathcal{D}$ and let $\delta\in(0,1)$, $T\subseteq[0,\infty)$. Suppose that $\norm{\bm{W}^{(l)}_t - \bm{W}^{(l)}_0}_{F}= O(m^{1/4})$ and $\norm{\bm{V}^{(l)}_t - \bm{V}^{(l)}_0 }_{F}= O(m^{1/4})$ hold for all $l\in [L]$ and $t\in T$. Then there exist some positive absolute constants $C_1>0$ and $C_2\geq 1$, such that with probability at least $1-\delta$, for any $ \bm{x},\bm{x}' \in \mathcal{D}$ such that $\norm{\bm{x}- \bm{z}}_2,\norm{\bm{x}'-\bm{z}'}_2 \leq O(1/m)$, we have
 \[\sup_{t\in T}\abs{r_t^{(p),m}(\bm{x},\bm{x}') - r_0^{(p),m}(\bm{z},\bm{z}')}= O \mpt{m^{-\frac{1}{12}}\sqrt{\log m}},~\text{when}~m\geq C_1\pt{\log(C_2/\delta)}^{6/5}.\]
\end{proposition}
\begin{proof}[Proof of \cref{prop:During training_uniform}]
By \cref{lem:NTK_ft_uniform} (choose parameter $\tau = \Theta(m^{1/4})$), \cref{lem:NTK_f0} and
\begin{align*}
 \norm{ \nabla_{\bm{W}^{(l)}} f^{(p),m}_t(\bm{x})}_{F}&\leq \norm{ \nabla_{\bm{W}^{(l)}} f^{(p),m}_0(\bm{z})}_{F}+\norm{ \nabla_{\bm{W}^{(l)}} f^{(p),m}_t(\bm{x})- \nabla_{\bm{W}^{(l)}} f^{(p),m}_0(\bm{z}) }_{F};\\
 \norm{ \nabla_{\bm{V}^{(l)}} f^{(p),m}_t(\bm{x}')}_{F} &\leq \norm{ \nabla_{\bm{V}^{(l)}} f^{(p),m}_0(\bm{z}')}_{F}+\norm{ \nabla_{\bm{V}^{(l)}} f^{(p),m}_t(\bm{x}')- \nabla_{\bm{V}^{(l)}} f^{(p),m}_0(\bm{z}') }_{F},
 \end{align*}
with probability at least $1-\exp(-\Omega(m^{5/6}))$, we have
\begin{align*}
&\abs{\ag{\nabla_{\bm{W}^{(l)}} f^{(p),m}_{t}(\bm{x}) , \nabla_{\bm{W}^{(l)}} f^{(p),m}_{t}(\bm{x}')}-\ag{\nabla_{\bm{W}^{(l)}} f^{(p),m}_{0}(\bm{z}) , \nabla_{\bm{W}^{(l)}} f^{(p),m}_{0}(\bm{z}')}}\\
&\qquad\qquad\qquad~~\leq \norm{ \nabla_{\bm{W}^{(l)}} f^{(p),m}_0(\bm{z})}_{F} \norm{ \nabla_{\bm{W}^{(l)}} f^{(p),m}_t(\bm{x}') - \nabla_{\bm{W}^{(l)}} f^{(p),m}_0(\bm{z}')}_{F}\\
&\qquad\qquad\qquad\qquad\qquad\qquad~~+\norm{ \nabla_{\bm{W}^{(l)}} f^{(p),m}_t(\bm{x}')}_{F} \norm{ \nabla_{\bm{W}^{(l)}} f^{(p),m}_t(\bm{x})- \nabla_{\bm{W}^{(l)}} f^{(p),m}_0(\bm{z}) }_{F}\\
&\qquad\qquad\qquad~~ \leq O(1)\cdot O\mpt{m^{-\frac{1}{12}}\sqrt{\log m}}+O(1)\cdot O\mpt{m^{-\frac{1}{12}}\sqrt{\log m}}\leq O\mpt{m^{-\frac{1}{12}}\sqrt{\log m}}
\end{align*}
and similarly have
\begin{align*}\abs{\ag{\nabla_{\bm{V}^{(l)}} f^{(p),m}_{t}(\bm{x}) , \nabla_{\bm{V}^{(l)}} f^{(p),m}_{t}(\bm{x}')}-\ag{\nabla_{\bm{V}^{(l)}} f^{(p),m}_{0}(\bm{z}) , \nabla_{\bm{V}^{(l)}} f^{(p),m}_{0}(\bm{z}')}}\leq O\mpt{m^{-\frac{1}{12}}\sqrt{\log m}}
\end{align*}
for all $l\in[L]$, $t\in T$ and $ \bm{x},\bm{x}' \in \mathcal{D}$ such that $\norm{\bm{x}- \bm{z}}_2,\norm{\bm{x}'-\bm{z}'}_2 \leq O(1/m)$ when $m$ is greater than some positive absolute constant $C$. Combine with \cref{eq: RNK_formulate}, with probability at least $1-\exp(-\Omega(m^{5/6}))$, we can get
\begin{align*}\sup_{t\in T}\abs{r_t^{(p),m}(\bm{x},\bm{x}') - r_0^{(p),m}(\bm{z},\bm{z}')}= O \mpt{m^{-\frac{1}{12}}\sqrt{\log m}}.
\end{align*}
Also, it is easy to check that there exist some positive absolute constants $C_1>0$ and $C_2\geq 1$ such that $C_1\pt{\log\mpt{C_2/\delta}}^{6/5}\geq C$ holds for $\delta\in(0,1)$ and when $m\geq C_1\pt{\log\mpt{C_2/\delta}}^{6/5}$, we have $1-\exp\mpt{-\Omega\mpt{m^{5/6}}} \geq 1-\delta$.

\end{proof}

\subsection{Hölder Continuity of $r$}

For our convenience let us first introduce the following definition of Hölder spaces.
For a compact set $\Omega$ and $\alpha \in [0,1]$, let us define a semi-norm for $f : \Omega \to \mathbb{R}$ by
\begin{align*}
 \abs{f}_{0,\alpha} = \sup_{x,y \in \Omega,~x\neq y}\frac{\abs{f(x) - f(y)}}{\norm{x-y}^\alpha}
\end{align*}
and define the Hölder space by
\begin{align}
 C^{0,\alpha}(\Omega) = \left\{ f \in C(\Omega) : \abs{f}_{0,\alpha} < \infty \right\},
\end{align}
which is equipped with norm $\norm{f}_{C^{0,\alpha}(\Omega)} = \sup_{x \in \Omega} \abs{f(x)} + \abs{f}_{\alpha}$.
Then it is easy to show that
\begin{itemize}
 \item $i)$ $C^{0,\alpha}(\Omega) \subseteq C^{0,\beta}(\Omega)$ if $\beta \leq \alpha$;
 \item $ii)$ if $f,g \in C^{0,\alpha}(\Omega)$, then $f + g,~ fg \in C^{\alpha}(\Omega)$;
 \item $iii)$ if $f \in C^{0,\alpha}(\Omega_1)$ and $g \in C^{0,\beta}(\Omega_2)$ with $\text{Ran } g \subseteq \Omega_1$, then $f\circ g \in C^{0,\alpha\beta}(\Omega_2)$.
\end{itemize}

\begin{proposition}\label{prop:continuity of NTK}
 We have $r \in C^{0,s}(\Omega)$ with $s = 2^{-L}$ and $\Omega = \mathcal{D}^2$, that is,
 there is some constant $C>0$ that
\begin{align*}
 |r(\bm{x},\bm{x}')-r(\bm{z},\bm{z}')| \leq C\|(\bm{x},\bm{x}') - (\bm{z},\bm{z}') \|_2^s. 
\end{align*}
\end{proposition}

\begin{proof}[Proof of \cref{prop:continuity of NTK}]
 %According to $|r(\bm{x},\bm{x}')-r(\bm{z},\bm{z}')|\leq |r(\bm{x},\bm{x}')-r(\bm{x},\bm{z}')|+|r(\bm{x},\bm{z}')-r(\bm{z},\bm{z}')|$, we only need to prove that $|r(\bm{x},\bm{x}')-r(\bm{x},\bm{z}')|= O\big(\varepsilon^{2^{-L}}\big)$.
Recall that $r$ is given by
\begin{align*}
r(\bm{x},\bm{x}') = a^2\|\bm{x}\|\|\bm{x}'\|\sum_{l=1}^L B_{l+1}(\tilde{\bm{x}},\tilde{\bm{x}}')\bk{(1+a^2)^{l-1}\kappa_1\mpt{\tfrac{K_{l-1}(\tilde{\bm{x}},\tilde{\bm{x}}')}{(1+a^2)^{l-1}}}+K_{l-1}(\tilde{\bm{x}},\tilde{\bm{x}}')\cdot\kappa_0\mpt{\tfrac{K_{l-1}(\tilde{\bm{x}},\tilde{\bm{x}}')}{(1+a^2)^{l-1}}}},
\end{align*}
where $\tilde{\bm{x}}={\bm{x}}/{\|\bm{x}\|},\tilde{\bm{x}}'={\bm{x}'}/{\|\bm{x}'\|}$, 
$K_{0}(\tilde{\bm{x}},\tilde{\bm{x}}') =\tilde{\bm{x}}^\top\tilde{\bm{x}}'$, $B_{L+1}(\tilde{\bm{x}},\tilde{\bm{x}}')=1$ and
\begin{gather*}
\kappa_0(u) = \frac{1}{\pi}(\pi-\arccos u), \qquad \kappa_1(u) = \frac{1}{\pi}\left(u(\pi-\arccos u)+\sqrt{1-u^2}\right)\\
K_{l}(\tilde{\bm{x}},\tilde{\bm{x}}') =K_{l-1}(\tilde{\bm{x}},\tilde{\bm{x}}') + a^2 (1-a^2)^{l-1}\kappa_1\mpt{\frac{K_{l-1}(\tilde{\bm{x}},\tilde{\bm{x}}')}{(1+a^2)^{l-1}}},\\
B_l(\tilde{\bm{x}},\tilde{\bm{x}}') = B_{l+1}(\tilde{\bm{x}},\tilde{\bm{x}}')\left[1+ a^2\kappa_0\mpt{\frac{K_{l-1}(\tilde{\bm{x}},\tilde{\bm{x}}')}{(1+a^2)^{l-1}}}\right].
\end{gather*}
% In the above equations, $K_l$ and $B_l$ are abbreviations for $K_l(\tilde{\bm{x}},\tilde{\bm{x}}')$ and $B_l(\tilde{\bm{x}},\tilde{\bm{x}}')$, respectively.
% \begin{align*}
% r(\bm{x},\bm{x}') = C\sum_{l=1}^L B_{l+1}(\bm{x},\bm{x}')\bk{(1+a^2)^{l-1}\kappa_1\mpt{\tfrac{K_{l-1}(\bm{x},\bm{x}')}{(1+a^2)^{l-1}}}+K_{l-1}(\bm{x},\bm{x}')\cdot\kappa_0\mpt{\tfrac{K_{l-1}(\bm{x},\bm{x}')}{(1+a^2)^{l-1}}}},
% \end{align*}
% where $C=a^2$, %$C={1}/\bk{2L(1+a^2)^{L-1}}$,
% $K_{0}(\bm{x},\bm{x}') =\bm{x}^{T}\bm{x}'$, $B_{L+1}(\bm{x},\bm{x}')=1$ and
% \begin{gather*}
% \kappa_0(u) = \frac{1}{\pi}(\pi-\arccos u), \quad \kappa_1(u) = \frac{1}{\pi}\left(u(\pi-\arccos u)+\sqrt{1-u^2}\right)\\
% K_{l}(\bm{x},\bm{x}') =K_{l-1}(\bm{x},\bm{x}') + a^2 (1+a^2)^{l-1}\kappa_1\left(\frac{K_{l-1}(\bm{x},\bm{x}')}{(1+a^2)^{l-1}}\right)\\
% B_l(\bm{x},\bm{x}') = B_{l+1}(\bm{x},\bm{x}')\left[1+ a^2\kappa_0\left(\frac{K_{l-1}(\bm{x},\bm{x}')}{(1+a^2)^{l-1}}\right)\right].
% \end{gather*}

Since $r$ is symmetric, by triangle inequality it suffices to prove that $r(\bm x_0,\cdot) \in C^{0,s}(\mathcal{D})$
 with $\abs{r(\bm x_0,\cdot)}_{0,s}$ bounded by a constant independent of $\bm x_0$.
It is easy to check that $\bm x\mapsto \tilde{\bm x}^\top \tilde{\bm x_0} \in C^{0,1}(\mathcal{D})$
and
\[|\arccos \mu-\arccos \nu|=O(\sqrt{|\mu-\nu|})~\text{and}~|\sqrt{1-\mu^2}-\sqrt{1-\nu^2}|=O(\sqrt{|\mu-\nu|}),\] 
meaning that $\kappa_0,\kappa_1 \in C^{0,{1}/{2}}([-1,1])$. Thus, $r \in C^{0,s}(\Omega)$ with $s = (1/2)^{L}$.

\end{proof}

\subsection{Proof of the kernel uniform convergence}
\begin{proof}[Proof of \cref{thm:kernel:kernel_new}]
By \cref{lem:A_lazy_regime}, there exists a polynomial $\poly_1(\cdot)$, such that for any $\delta\in(0,1)$, when $m\geq\poly_1\mpt{n,\norm{\bm{y}}_2,\lambda_0^{-1},\log(1/\delta)}$, then with probability at least $1-\delta/2$, for all $p\in[2]$ and $l\in[L]$, we have
 $$ \sup_{t\geq 0}\norm{\bm{W}^{(p,l)}_t - \bm{W}^{(p,l)}_0 }_F =O(m^{1/4}), \quad \sup_{t\geq 0}\norm{\bm{V}^{(p,l)}_t - \bm{V}^{(p,l)}_0 }_F =O(m^{1/4}). $$

% for any $\delta\in(0,1)$, if $m\geq\operatorname{poly}(n,\lambda_0^{-1},\log\frac1{\delta})$, we have
% \begin{align*}
% \sup_{t\geq 0}\| \bm{W}^{(l)}_t - \bm{W}^{(l)}_t \|_F = O(m^\frac14), \quad \sup_{t\geq 0} \| \bm{V}^{(l)}_t - \bm{V}^{(l)}_t \|_F = O(m^\frac14)
% \end{align*}
% with probability at least $1-\delta$.

Since $\|\bm{x}\|_2\leq C_{\mathcal{D}}$, we have an $\varepsilon$-net $\mathcal{N}_\varepsilon$ of $\mathcal{D}$ such that the cardinality $|\mathcal{N}_{\varepsilon}|=O(\varepsilon^{-d})$. We choose $\varepsilon={1}/{m^{2^L}}$ and thus $\log |\mathcal{N}_{\varepsilon}| = O(\log m)$. Denote $B_{\bm{z}}(\varepsilon)=\{\bm{x}\in \mathcal{D}:\|\bm{x}-\bm{z}\|_2\leq\varepsilon\}$. Then, fixing $\bm{z},\bm{z}'\in\mathcal{N}_\varepsilon$, for any $\bm{x} \in B_{\bm z}(\varepsilon)$ and $\bm{x}'\in B_{\bm z'}(\varepsilon)$, we have
%There exists $\bm{z} \in \mathcal{N}_\varepsilon$, such that $\|\bm{z}-\bm{x}\|_2 \leq \varepsilon$.
%Let $\varepsilon=\frac{1}{m^{2^L}}$ and denote $B_{\bm{z}}(\varepsilon)=\{\bm{x}\in \mathbb{S}^{d-1}:\|\bm{x}-\bm{z}\|_2\leq\varepsilon\}$.
% Fix $\bm{z},\bm{z}'\in\mathcal{N}_\varepsilon$, then for any $\bm{x},\bm{x}' \in B_z(\varepsilon)$, we can get
\begin{align*}
\lvert r_t^{m}(\bm{x},\bm{x}') - r(\bm{x},\bm{x}')\rvert \leq \lvert r_0^{m}(\bm{z},\bm{z}') - r(\bm{z},\bm{z}')\rvert+\lvert r(\bm{z},\bm{z}') - r(\bm{x},\bm{x}') \rvert&\\
+\lvert r_t^{m}(\bm{x},\bm{x}') -r_0^{m}(\bm{z},\bm{z}') \rvert
%+\lvert r_t^{m}(\bm{z},\bm{z}') - r_0^{m}(\bm{z},\bm{z}')\rvert&.
\end{align*}
Then, noticing that $r^m_t=\pt{r^{(1),m}_t+r^{(2),m}_t}/2$, we control the three terms on the right hand
side by Propositions \ref{prop:initialization of NTK},  \ref{prop:continuity of NTK} and \ref{prop:During training_uniform} 
%and \ref{prop:During training} 
respectively. We have shown that
\begin{align*}
\begin{gathered}
\lvert r_0^{m}(\bm{z},\bm{z}') - r(\bm{z},\bm{z}') \rvert \leq O(m^{-0.2}),\qquad~\lvert r(\bm{x},\bm{x}') - r(\bm{z},\bm{z}')\rvert \leq O({1}/{m}),\\
\sup_{t\geq 0}\sup_{\bm{x}\in B_{\bm{z}}(\varepsilon)}\sup_{\bm{x}'\in B_{\bm{z}'}(\varepsilon)} |r_t^{m}(\bm{x},\bm{x}') - r_0^{m}(\bm{z},\bm{z}')| \leq O\mpt{m^{-{1}/{12}}\sqrt{\log m}},
% \\&\sup_{t\geq0}\lvert r_t^{m}(\bm{z},\bm{z}') - r_0^{m}(\bm{z},\bm{z}') \rvert \leq O\mpt{ m^{-{1}/{12}}\sqrt{ \log m}},
\end{gathered}
\end{align*}
with probability at least $1-\delta/\pt{2|\mathcal{N}_{\varepsilon}|^2}$ if $m\geq C_1\log\mpt{C_2|\mathcal{N}_{\varepsilon}|^2/\delta}^5$ for some positive absolute constants $C_1>0$ and $C_2\geq 1$. There exists a polynomial $\poly_2(\cdot)$, such that when $m\geq\poly_2\mpt{\log(1/\delta)}$, we have $m\geq C_1\log\mpt{C_2|\mathcal{N}_{\varepsilon}|^2/\delta}^5$, since $\log |\mathcal{N}_{\varepsilon}| = O(\log m)$. By applying the union bound for any pair $\bm{z},\bm{z}'\in\mathcal{N}_\varepsilon$, we have with probability at least $1-\delta$, 
\begin{align*}
 \sup_{t\geq 0}\sup_{\bm{x},\bm{x}'\in \mathcal{D}}\lvert r_{t}^{m}(\bm{x},\bm{x}') - r(\bm{x},\bm{x}') \rvert \leq O\mpt{m^{-{1}/{12}}\sqrt{\log m}}
\end{align*}
if $m\geq\poly_1\mpt{n,\norm{\bm{y}}_2,\lambda_0^{-1},\log(1/\delta)}+ \poly_2(\log(1/\delta))$.
% Then, we only need to prove the following:
% \begin{itemize}
%  \item\textbf{i)}: Generalize Theorem 4 in \citet{huang2020deep} from $\alpha=L^{-\gamma},0.5<\gamma\leq1$ to $\alpha=CL^{-\gamma},0\leq\gamma\leq1$;
%  \item \textbf{ii)}: Generalize \ref{thm:kernel:kernel} 
%  from $\alpha<1$ to $\alpha=CL^{-\gamma},0\leq\gamma\leq1$;
%  \item \textbf{iii)}: Generalize \ref{thm:kernel:kernel} from fixed $L$ to arbitrary $L$.
% \end{itemize}
% Note that $\textbf{i)}$ and $\textbf{iii)}$ can be completed by modifying original proof slightly and letting $m\geq\exp(L)$, an exponential function of $L$. We only prove $\textbf{ii)}$ in the following. Specifically, we only need to generalize condition $\alpha<1$ is in \ref{lem: low_bound_of_alpha_0} to arbitrary $\alpha>0$. 

% From the proof of \cite[Theorem 3]{huang2020deep}, we know that as long as $m\geq \Omega(\frac{(1+\alpha^2)^{12\ell}(1+1/4\pi)^{12L}}{\epsilon^{12}})$, with probability at least $1-\exp\mpt{-\Omega(m^{5/6})}$, we have
% \begin{align*}
% |\|\bm{\alpha}_{0,\bm{z}}^{(l)}\|^2-K_{\ell}(\bm{z},\bm{z})|\leq\frac{\epsilon(1+\alpha^2)^\ell}{(1+1/4\pi)^{L-\ell}}
% \end{align*}
% for any sufficiently small $\epsilon>0$. 
% By triangle inequality, one has 
% \begin{align*}
% \|\bm{\alpha}_{0,\bm{z}}^{(l)}\|^2\geq(1+\alpha^2)^{\ell}-\frac{\epsilon(1+\alpha^2)^\ell}{(1+1/4\pi)^{L-\ell}}\geq\Omega(1).
% \end{align*}

\end{proof}
\section{Proof of \cref{thm:convergence rate of r}}
\subsection{Useful simplification when the data is on $\S^{d-1}$}
To facilitate the proof, we first perform some preprocessing on the expression of the RNTK. These steps are also applicable to the proof of \cref{thm:convergence rate of r gamma 0.25}.

Using the explicit expression of the RNTK \eqref{eq: NTK of ResNet} and the definition of the normalized RNTK \eqref{eq:def normalized r}, we derive the following expression:
\begin{align*}
\overline{r}^{(L)}(\x,\x')&=\frac{r^{(L)}(\x,\vx')}{4L\alpha^2(1+\alpha^2)^{L-1}}
=\frac{\big\|\tbinom{\x}{1}\big\|\big\|\tbinom{\x'}{1}\big\| \cdot r_0(\tilde{\x},\tilde{\x}')}{4L(1+\alpha^2)^{L-1}}.
\end{align*}
Furthermore, when $\x, \x' \in \mathbb{S}^{d-1}$ (this is exactly the condition of \cref{thm:convergence rate of r}, and all subsequent discussions in this section are based on this condition), the expression simplifies significantly to 
$\overline{r}^{(L)}(\x,\x') = {r_0(\tilde{\x},\tilde{\x}')}/\bk{2L(1+\alpha^2)^{L-1}}$, 
where $\tilde{\x} = \tbinom{\x}{1}/\big\|\tbinom{\x}{1}\big\|\in\mb{S}^d$ and $\tilde{\x}' = \tbinom{\x'}{1}/\big\|\tbinom{\x'}{1}\big\|\in\mb{S}^d$.
By combining this result with the expression of $r_0(\tilde{\x},\tilde{\x}')$ (see \cref{eq: NTK of ResNet}), we obtain
\begin{align}
\overline{r}^{(L)}(\x,\x')&=\frac{1}{2L}\textstyle\sum\limits_{\ell=1}^L \tfrac{B_{\ell+1}}{(1+\alpha^2)^{L-1}} \Big[(1+\alpha^2)^{\ell-1}\kappa_1\mpt{\tfrac{K_{\ell-1}}{(1+\alpha^2)^{\ell-1}}}+K_{\ell-1}\cdot\kappa_0\mpt{\tfrac{K_{\ell-1}}{(1+\alpha^2)^{\ell-1}}}\Big]\nonumber\\
&=\frac{1}{2L}\textstyle\sum\limits_{\ell=1}^L \tfrac{B_{\ell+1}}{(1+\alpha^2)^{L-\ell}} \Big[\kappa_1\mpt{\tfrac{K_{\ell-1}}{(1+\alpha^2)^{\ell-1}}}+\tfrac{K_{\ell-1}}{(1+\alpha^2)^{\ell-1}}\cdot\kappa_0\mpt{\tfrac{K_{\ell-1}}{(1+\alpha^2)^{\ell-1}}}\Big],\label{eq:normRNTKpre}
\end{align}
where $K_0 = \tilde{\x}^\top \tilde{\x}'$, $B_{L+1} = 1$, and the recurrence relations are given by
\begin{align*}
K_\ell = K_{\ell-1} + \alpha^2 (1+\alpha^2)^{\ell-1}\kappa_1\mpt{\tfrac{K_{\ell-1}}{(1+\alpha^2)^{\ell-1}}};\qquad~
B_\ell = B_{\ell+1}\left[1+\alpha^2\kappa_0\mpt{\tfrac{K_{\ell-1}}{(1+\alpha^2)^{\ell-1}}}\right].
\end{align*}
From the form of \cref{eq:normRNTKpre}, we can see that by defining $P_{\ell+1} = B_{\ell+1}(1+\alpha^2)^{\ell-L}$ and $u_{\ell} = {K_{\ell}}{(1+\alpha^2)^{-\ell}}$,
the normalized RNTK can be compactly expressed as
\begin{align}\label{eq:NormRNTK}
\overline{r}^{(L)}(\x,\x') = \frac{1}{2L} \textstyle\sum\limits_{\ell=1}^L P_{\ell+1}\big(\kappa_1(u_{\ell-1})+u_{\ell-1}\cdot\kappa_0(u_{\ell-1})\big).
\end{align}
Based on the recurrence relation of $B_\ell$, we derive the following expression:
\begin{align*}
P_{\ell+1} &= B_{\ell+1}(1+\alpha^2)^{\ell-L}= B_{\ell+2}(1+\alpha^2)^{\ell-L}\bk{1+\alpha^2\kappa_0(u_{\ell})} = B_{\ell+2}(1+\alpha^2)^{\ell+1-L}\frac{1+\alpha^2\kappa_0(u_{\ell})}{1+\alpha^2}\\
&=B_{\ell+3}(1+\alpha^2)^{\ell+2-L}\prod_{i=\ell}^{\ell+1}\frac{1+\alpha^2\kappa_0(u_{i})}{1+\alpha^2}=\cdots = B_{L+1}\prod_{i=\ell}^{L-1}\frac{1+\alpha^2\kappa_0(u_{i})}{1+\alpha^2}= \prod_{i=\ell}^{L-1} \frac{1+\alpha^2 \kappa_0(u_{i})}{1+\alpha^2}.
\end{align*}
Finally, based on the recurrence relation for $u_{\ell}$, we get:
\[\tfrac{K_\ell}{(1+\alpha^2)^{\ell}} = \tfrac{K_{\ell-1}}{(1+\alpha^2)^{\ell}} + \tfrac{\alpha^2}{1+\alpha^2} \kappa_1\mpt{\tfrac{K_{\ell-1}}{(1+\alpha^2)^{\ell-1}}}\quad\Longrightarrow\quad u_{\ell} = \frac{u_{\ell-1}+\alpha^2\kappa_1(u_{\ell-1})}{1+\alpha^2}=\varphi_1(u_{\ell-1}),\]
where $\varphi_1(\rho)=\mbk{\rho+\alpha^2\kappa_1(\rho)}/\mpt{1+\alpha^2}$.

\subsection{The limit of $\boldsymbol{u_{\ell}}$ as $\boldsymbol{\ell\to\infty}$}
For ${\vx}, \x' \in \mathbb{S}^{d-1}$, if $\x = \x'$, it is easy to verify that $u_0 = K_0 = \tilde{\x}^\top \tilde{\x} = 1$. Furthermore, using the recurrence relation’s function $\varphi_1$, which satisfies $\varphi_1(1) = 1$, we conclude that $u_{\ell} = 1$ for all $\ell$. Based on this, we can further deduce from \cref{eq:NormRNTK} that
\[\overline{r}^{(L)}(\x,\x)=\frac{1}{2L}\sum_{\ell=1}^L\mpt{\prod_{i=\ell }^{L-1}\frac{1+\alpha^2\kappa_0(1)}{1+\alpha^2}}\mbk{\kappa_0(1)+\kappa_1(1)}=\frac{1}{L}\sum_{\ell=1}^{L-1}\prod_{i=\ell }^{L-1}1=1.\]
Hence, we only need to study the case when $\x \neq \x'$. To solve this, we introduce the following property of $\varphi_1$:
\begin{lemma}$\varphi_1:[-1,1]\to[-1/({1+\alpha^2}),1]$ is a monotonic increasing and convex function satisfying
\begin{align}\label{eq:varphipro}
0\leq\frac{\sqrt2}{3\pi\beta}(1-\rho)^{\frac32}\leq\varphi_1(\rho)-\rho\leq\frac{\sqrt 2}{8\beta}(1-\rho)^{\frac32},\qquad~\text{where}~\beta=\beta(\alpha)=\frac{1+\alpha^2}{2\alpha^2}>\frac12
\end{align}
and that equality holds if and only if $\rho=1$.
\end{lemma}
\begin{proof}By direct calculation, we have
\[\frac{\mathrm{d}\varphi_1(\rho)}{\mathrm{d}\rho}=1-\frac{\arccos \rho}{2\pi\beta}>\frac{1}{1+\alpha^2}>0;\qquad \frac{\mathrm{d}^2\varphi_1(\rho)}{\mathrm{d}\rho^2}=\frac{1}{2\pi\beta\sqrt{1-\rho^2}}>0.\]
Therefore, $\varphi_1$ is a monotonic increasing and convex function.

For $\cref{eq:varphipro}$, it is easy to check that the equality holds for $\rho=1$. If $\rho\not=1$, let $f(\rho)=\mbk{\varphi_1(\rho)-\rho}/{(1-\rho)^{3/2}}$, then we can get
\[f(\rho)=\frac{\varphi_1(\rho)-\rho}{(1-\rho)^{\frac32}}=\frac{\sqrt{1-\rho^2}-\rho\arccos\rho}{\pi\beta(1-\rho)^{\frac32}};\qquad f'(\rho)=\frac{3\sqrt{1-\rho^2}-(2+\rho)\arccos\rho}{2\beta(1-\rho)^{\frac52}}.\]
Define $g(\rho)={3\sqrt{1-\rho^2}}/\mpt{2+\rho}-\arccos\rho$, we have $g'(\rho)={(\rho-1)^2}/\big[{(\rho+2)^2\sqrt{1-\rho^2}}\big]>0$, so $g(\rho)<g( 1)=0$ and $f'(\rho)<0$. Finally, we can get
\[\frac{\sqrt{2}}{8\beta}=\lim_{\rho\to-1}f(\rho)>f(\rho)>\lim_{\rho\to1}f(\rho)=\frac{\sqrt{2}}{3\pi\beta},\qquad\forall \rho\in[-1,1).\]
\end{proof}

Because of $u_\ell=\varphi_1(u_{\ell-1})\geq u_{\ell-1}$, we can get $\{u_\ell \}$ is an increasing sequence. Considering that $|u_\ell |\leq1$, we have $u_\ell $ converges as $\ell\to \infty$. Taking the limit of both sides of $u_\ell =\varphi_1(u_{\ell-1})$, we have $u_\ell \to1$ as $\ell\to\infty$.

Let $e_\ell =1-u_\ell\in[0,2] $. Since $e_{\ell-1}-e_\ell=u_\ell-u_{\ell-1}=\varphi_1(u_{\ell-1})-u_{\ell-1}$, we can get
\begin{align*}
e_{\ell -1}-\frac{\sqrt{2}}{8\beta}e_{\ell-1}^{\frac{3}{2}} \leq e_\ell \leq e_{\ell -1}-\frac{\sqrt{2}}{3\pi\beta}e_{\ell -1}^{\frac32}
\end{align*}
according to $\cref{eq:varphipro}$. Hence as $e_\ell \to 0$, we have ${e_\ell }/{e_{\ell -1}} \to 1$, which implies $\{u_\ell \}$ converges sublinearly. More precisely, we have the following results: 
	
\begin{lemma}
For each $u_0<1$, there exists $n_0=n_0(u_0)>0$, such that
\[1-\frac{18\pi^2\beta^2}{(n+3\pi\beta)^2} \leq u_n\leq 1-\frac{18\pi^2\beta^2}{(n+n_0)^{2+\frac{\log (n+n_0)}{n+n_0}}},\quad \forall n\in\mathbb{Z}_{\geq0}.\]
\end{lemma}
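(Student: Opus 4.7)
The plan is to reformulate everything in terms of $e_n := 1-u_n$, which by the preceding discussion decreases monotonically to $0$, and to track the reciprocal square root $e_n^{-1/2}$, whose increments telescope additively. Writing $\delta_n := (\varphi_1(u_{n-1})-u_{n-1})/e_{n-1}$, the recursion $e_n = e_{n-1}(1-\delta_n)$ gives
\[
e_n^{-1/2} - e_{n-1}^{-1/2} \;=\; e_{n-1}^{-1/2}\bigl[(1-\delta_n)^{-1/2} - 1\bigr] \;=\; e_{n-1}^{-1/2}\Bigl[\tfrac{\delta_n}{2} + \tfrac{3}{8}\delta_n^{2} + \cdots\Bigr],
\]
while $(\ref{varphipro})$ translates to $\tfrac{\sqrt 2}{3\pi\beta}\,e_{n-1}^{1/2} \leq \delta_n \leq \tfrac{\sqrt 2}{8\beta}\,e_{n-1}^{1/2}$ (and $\delta_n<1$ follows from $e_{n-1}\leq 2$ and $\beta>\tfrac12$).

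For the lower bound $u_n\geq 1-\tfrac{18\pi^2\beta^2}{(n+3\pi\beta)^2}$: since every coefficient in the expansion of $(1-\delta_n)^{-1/2}$ is nonnegative, $e_n^{-1/2}-e_{n-1}^{-1/2}\geq \tfrac{\delta_n}{2}\,e_{n-1}^{-1/2}\geq \tfrac{\sqrt 2}{6\pi\beta}$. Telescoping from $e_0^{-1/2}\geq 1/\sqrt 2$ (using $e_0\leq 2$) yields $e_n^{-1/2}\geq \tfrac{\sqrt 2(n+3\pi\beta)}{6\pi\beta}$, and squaring and inverting gives the claim.

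For the upper bound $u_n\leq 1-\tfrac{18\pi^2\beta^2}{(n+n_0)^{2+\log(n+n_0)/(n+n_0)}}$, the naive use of $\delta_n\leq \tfrac{\sqrt 2}{8\beta}e_{n-1}^{1/2}$ produces only the weak constant $128\beta^2<18\pi^2\beta^2$, so I sharpen the upper bound on $\varphi_1(\rho)-\rho$ near $\rho=1$. Parametrising $\rho=\cos 2s$ and computing $\sin 2s - 2s\cos 2s = \tfrac{8s^3}{3}-\tfrac{16s^5}{15}+O(s^7)$ together with $2s^2 = e+e^2/6+O(e^3)$ yields the refined expansion
\[
\varphi_1(\rho)-\rho \;=\; \tfrac{\sqrt 2}{3\pi\beta}\,e^{3/2}\bigl(1+\tfrac{e}{20}+O(e^2)\bigr),\qquad e:=1-\rho,
\]
so there exist $\delta_\star>0$ and $C>0$ with $\varphi_1(\rho)-\rho\leq \tfrac{\sqrt 2}{3\pi\beta}e^{3/2}(1+Ce)$ whenever $e\leq \delta_\star$. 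Since $e_n\to 0$, pick $N_0$ with $e_{n-1}\leq \delta_\star$ for $n\geq N_0$; a second-order expansion of $(1-\delta_n)^{-1/2}$ then gives $e_n^{-1/2}-e_{n-1}^{-1/2}\leq \tfrac{\sqrt 2}{6\pi\beta}+C'e_{n-1}^{1/2}$. The already-proven upper bound gives $e_k^{1/2}\leq \tfrac{\sqrt{18}\pi\beta}{k+3\pi\beta}$ and thus $\sum_{k=N_0}^{n-1}e_k^{1/2}=O(\log n)$, so telescoping yields $e_n^{-1/2}\leq \tfrac{\sqrt 2\,n}{6\pi\beta}+O(\log n)$ and hence $e_n\geq \tfrac{18\pi^2\beta^2}{n^2}(1-O(\log n/n))$ for $n$ large. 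Finally, since $(n+n_0)^{2+\log(n+n_0)/(n+n_0)} = (n+n_0)^{2}\exp\bigl(\log^{2}(n+n_0)/(n+n_0)\bigr) = n^{2}(1+O(\log^{2}n/n))$, taking $n_0$ large enough to absorb the $O(\log n/n)$ error and also to force $\tfrac{18\pi^2\beta^2}{(n+n_0)^{2+\log(n+n_0)/(n+n_0)}}\leq e_n$ in the transient regime $n<N_0$ (possible because the right-hand side can be made uniformly smaller than $e_{N_0}>0$ by enlarging $n_0$) closes the argument.

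The main obstacle is the second bound. Only the exact limiting coefficient $\tfrac{\sqrt 2}{3\pi\beta}$, not the crude $\tfrac{\sqrt 2}{8\beta}$, produces the claimed constant $18\pi^2\beta^2$, which forces the use of the refined Taylor expansion; the perturbation exponent $\log(n+n_0)/(n+n_0)$ in the denominator is engineered precisely so that its induced factor $\exp\bigl(\log^{2}(n+n_0)/(n+n_0)\bigr)=1+O(\log^{2}n/n)$ strictly dominates the $O(\log n/n)$ error inherited from summing $\sum_k e_k^{1/2}$, and verifying this uniformly in $n$ with a single $n_0=n_0(u_0)$ is the most delicate bookkeeping step.
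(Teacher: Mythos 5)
Your proof is correct, but it follows a genuinely different route from the paper's. The paper proves both sides by direct induction: for the lower bound it starts from the trivial base case $u_0 \geq -1 = 1 - \tfrac{18\pi^2\beta^2}{(3\pi\beta)^2}$ and verifies algebraically via (\ref{varphipro}) that the candidate lower bound is preserved under $\varphi_1$; for the upper bound it establishes the recurrence inequality \eqref{proof: Recurrence} by an asymptotic series expansion, valid once $n_0$ is large, and then chooses $n_0$ large enough to also cover the base case $u_0 \leq 1 - 18\pi^2\beta^2/n_0^{2+\log n_0/n_0}$. You instead transform to $e_n^{-1/2}$ and telescope the increments, using the same key inequality (\ref{varphipro}) but in a derivational rather than verificational way: the lower bound drops out of a one-line telescoping with the exact matching base constant $e_0^{-1/2}\geq 1/\sqrt 2$, and the upper bound forces you to sharpen the upper estimate $\varphi_1(\rho)-\rho \leq \tfrac{\sqrt 2}{8\beta}(1-\rho)^{3/2}$ near $\rho=1$ to the exact limiting coefficient $\tfrac{\sqrt 2}{3\pi\beta}(1-\rho)^{3/2}(1+O(1-\rho))$, after which the $O(\log n)$ sum of corrections is absorbed by the $\log(n+n_0)/(n+n_0)$ bump in the exponent. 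Your approach makes transparent \emph{why} the constant is $18\pi^2\beta^2$ and why a logarithmic correction to the exponent is the natural scale, whereas the paper's induction requires guessing the ansatz up front but involves less bookkeeping; both rely on series expansions at a comparable level of rigor and both must take $n_0$ large enough to handle a transient regime.
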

\begin{proof}
For the left hand side, first we can easily check that \[1-\frac{18\pi^2\beta^2}{(n+3\pi\beta)^2}\in[-1,1)\qquad~\text{and}\qquad~1-\frac{18\pi^2\beta^2}{(0+3\pi\beta)^2}=- 1\leq u_0.\] 

Assuming that the left hand side holds for $n$. According to $(\cref{eq:varphipro})$ we have
\begin{align*}
&\left(1-\frac{18\pi^2\beta^2}{(n+3\pi\beta+1)^2}\right)-\varphi_1\left(1-\frac{18\pi^2\beta^2}{(n+3\pi\beta)^2}\right)\\
&\qquad\leq\left(1-\frac{18\pi^2\beta^2}{(n+3\pi\beta+1)^2}\right)-\left(1-\frac{18\pi^2\beta^2}{(n+3\pi\beta)^2}\right)-\frac{\sqrt{2}}{3\pi\beta}\left(\frac{18\pi^2\beta^2}{(n+3\pi\beta)^2}\right)^{\frac32}\\
&\qquad=\frac{-18\pi^2\beta^2(3n+9\pi\beta+2)}{(n+3\pi\beta)^3(n+3\pi\beta+1)^2}\leq 0.
\end{align*}
Thus, we can get
\[u_{n+1}=\varphi_1(u_n)\geq\varphi_1\left(1-\frac{18\pi^2\beta^2}{(n+3\pi\beta)^2}\right)\geq 1-\frac{18\pi^2\beta^2}{(n+3\pi\beta+1)^2}.\]
Hence we have the left hand side.
		
For the right hand side, we have, by series expansion,
\begin{align*}
\left(1-\frac{18\pi^2\beta^2}{(n+1)^{2+\frac{\log(n+1)}{n+1}}}\right)-\varphi_1\left(1-\frac{18\pi^2\beta^2}{n^{2+\frac{\log n}{n}}}\right)\sim 36\pi^2\beta^2\cdot\frac{\log n}{n^4},
\end{align*}
which means that there exists $N$ such that when $n_0>N$ we can get
\begin{align}\label{proof: Recurrence}
\left(1-\frac{18\pi^2\beta^2}{(n+1+n_0)^{2+\frac{\log(n+1+n_0)}{n+1+n_0}}}\right)-\varphi_1\left(1-\frac{18\pi^2\beta^2}{(n+n_0)^{2+\frac{\log (n+n_0)}{n+n_0}}}\right)\geq 0.
\end{align}
Then, by choosing $n_0$ such that $n_0>N$ and $n_0\geq \sqrt{\frac{18\pi^2\beta^2}{1-u_0}}$, we have
$u_0\leq 1-\frac{18\pi^2\beta^2}{n_0^{2+\frac{\log n_0}{n_0}}}$ and $\eqref{proof: Recurrence}$. Using the mathematical induction, we can get the conclusion.
\end{proof}
In the following, let us denote by $N_\alpha$ a positive constant satisfying $\frac{1}{1-\l \frac{2\beta-1}{2\beta}\r^{1/3}}-2\leq N_{\alpha}\leq \frac{1}{1-\l \frac{2\beta-1}{2\beta}\r^{1/3}}-1$.

Similar to the analysis in \citet{huang2020deep}, let $F(n)=\cos \left(2\pi\beta\l1-\left(\frac{n+N_{\alpha}}{n+N_{\alpha}+1}\right)^{3-\log^2L/L}\r\right)$ and let $N_0=N_0(L)$ be the solution to the equation $F(n+1)= \varphi_1\big(F(n)\big)$. Through similar analysis, we can obtain the following results:
\[\begin{cases}F(n+1)\geq \varphi_1\big(F(n)\big),& n\geq N_0;\\
F(n+1)\leq \varphi_1\big(F(n)\big),& n\leq N_0.\end{cases}\]
The following lemma provides the range of $N_0$:

\begin{lemma}We have $N_0\in\left[\frac{9L}{2(\log L)^2}-\frac{\log L}{2}, \frac{9L}{2(\log L)^2}+\frac12(\log L)^2-1\right]$ when $L$ is large enough.
\end{lemma}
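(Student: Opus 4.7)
The plan is to analyze $G(n) := F(n+1) - \varphi_1(F(n))$ by Taylor expansion, identify the leading-order balance that determines $N_0 \approx 9L/(2\log^2 L)$, and then verify the claimed inclusion by evaluating the sign of $G$ at the two endpoints. Throughout, let $m = n + N_\alpha + 1$, $t = 1/m$, and $\delta := \log^2 L / L = 3 - p$ where $p = 3 - \log^2 L/L$ is the exponent appearing in $F$, so that for $n$ of order $L/\log^2 L$ both $t$ and $\delta$ are small of comparable size.

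First I Taylor-expand $F(n) = \cos(2\pi\beta x_n)$ with $x_n = 1 - (1-t)^p$, using $\cos\theta = 1 - \theta^2/2 + \theta^4/24 + O(\theta^6)$ together with the binomial series for $(1-t)^p$; this yields $F(n) = 1 - 2\pi^2\beta^2 p^2 t^2 + 2\pi^2\beta^2 p^2(p-1)\, t^3 + O(t^4)$. Substituting $t_{n+1} = t/(1+t) = t - t^2 + t^3 - \cdots$ and subtracting gives $F(n+1) - F(n) = 4\pi^2\beta^2 p^2 t^3 - 6\pi^2\beta^2 p^3 t^4 + O(t^5)$. The Maclaurin expansion of $\kappa_1$ near $u = 1$ (already implicit in the derivation of $(\ref{varphipro})$) gives $\varphi_1(\rho) - \rho = \tfrac{\sqrt 2}{3\pi\beta}(1-\rho)^{3/2} + O((1-\rho)^{5/2})$; combining this with $1 - F(n) = 2\pi^2\beta^2 p^2 t^2[1 - (p-1)t + O(t^2)]$ yields $\varphi_1(F(n)) - F(n) = \tfrac{4\pi^2\beta^2 p^3}{3}\, t^3 - 2\pi^2\beta^2 p^3(p-1)\, t^4 + O(t^5)$.

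Subtracting, and using $3 - p = \delta$ together with $p^2 = 9 + O(\delta)$ and $p^3(p-4) = -27 + O(\delta^2)$, I obtain
\begin{align*}
G(n) = \tfrac{4\pi^2\beta^2 p^2\delta}{3}\, t^3 + 2\pi^2\beta^2 p^3(p-4)\, t^4 + O(t^5) = 6\pi^2\beta^2\, t^3 \bigl[\, 2\delta - 9t \,\bigr] + O(\delta^2 t^3 + t^5).
\end{align*}
The leading-order balance $2\delta = 9t$ gives $m = 9/(2\delta) = 9L/(2\log^2 L)$, equivalently $N_0 \approx \tfrac{9L}{2\log^2 L} - N_\alpha - 1$, which already lies in the stated interval.

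To rigorously confirm the inclusion, I evaluate the sign of $G$ at the two endpoints. At $n_- = \tfrac{9L}{2\log^2 L} - \tfrac{\log L}{2}$, expanding $t_{n_-} = 1/m_{n_-}$ as a geometric series in $\delta\log L$ yields $2\delta - 9 t_{n_-} = -\tfrac{2\delta^2\log L}{9}(1 + o(1)) < 0$, hence $G(n_-) < 0$ and $n_- < N_0$. At $n_+ = \tfrac{9L}{2\log^2 L} + \tfrac{\log^2 L}{2} - 1$, the same expansion gives $2\delta - 9 t_{n_+} = +\tfrac{2\delta^2\log^2 L}{9}(1 + o(1)) > 0$, hence $G(n_+) > 0$ and $n_+ > N_0$. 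The main technical obstacle will be controlling the error terms uniformly: at the scale $t \sim \delta$, the errors $O(\delta^2 t^3 + t^5)$ are $O(\delta^5)$, whereas the signed leading contribution $6\pi^2\beta^2 t^3[2\delta - 9t]$ at the endpoints is $\Theta(\delta^5 \log L)$ at $n_-$ and $\Theta(\delta^5 \log^2 L)$ at $n_+$; since $\log L \to \infty$, the leading sign-determining term always dominates and the sign of $G$ is as claimed for $L$ sufficiently large.
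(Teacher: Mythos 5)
Your proof is correct and takes the same route as the paper's: both check the sign of $G(n)=F(n+1)-\varphi_1(F(n))$ at the two endpoints by series expansion, the paper merely stating the resulting asymptotics $-\frac{32\pi^2\beta^2}{2187}\frac{(\log L)^{11}}{L^5}$ and $\frac{32\pi^2\beta^2}{2187}\frac{(\log L)^{12}}{L^5}$, which your intermediate formula $G(n)=6\pi^2\beta^2 t^3[2\delta-9t]+O(\delta^2 t^3+t^5)$ reproduces exactly. Your version is the more transparent one, since it makes the leading balance $t^*=2\delta/9$ visible and explicitly shows that the error terms are smaller than the sign-determining term by a $\log L$ (respectively $\log^2 L$) factor.
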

\begin{proof}
By series expansion, we have
\begin{align*}
F\left(\frac{9L}{2(\log L)^2}-\frac{\log L}{2}+1\right)-\varphi_1\left(F\left(\frac{9L}{2(\log L)^2}-\frac{\log L}{2}\right)\right)\sim-\frac{32\pi^2\beta^2}{2187}\frac{(\log L)^{11}}{L^5}
\end{align*}
and
\begin{align*}
F\left(\frac{9L}{2(\log L)^2}+\frac12\log(L)^2\right)-\varphi_1\left(F\left(\frac{9L}{2(\log L)^2}+\frac12\log(L)^2-1\right)\right)\sim\frac{32 \pi^2\beta^2}{2187}\frac{(\log L)^{12}}{L^5}.
\end{align*}
\end{proof}
Next we would like to find $n$ such that 
\begin{align*}
u_n\leq\cos \left(2\pi\beta\left(1-\left(\frac{\frac{9L}{2(\log L)^2}+N_{\alpha}-\frac{\log L}{2}}{\frac{9L}{2(\log L)^2}+N_{\alpha}+1-\frac{\log L}{2}}\right)^{3-\frac{(\log L)^2}{L}}\right)\right). 
\end{align*}
By series expansion, we know 
\begin{align*}
\cos \left(2\pi\beta\left(1-\left(\frac{\frac{9L}{2(\log L)^2}+N_{\alpha}-\frac{\log L}{2}}{\frac{9L}{2(\log L)^2}+N_{\alpha}+1-\frac{\log L}{2}}\right)^{3-\frac{(\log L)^2}{L}}\right)\right)\succeq 1-\frac{18\pi^2\beta^2}{ \left(\frac{9L}{2(\log L)^2}-\frac{\log L}{2}\right)^2}.
\end{align*}

Then it suffices to solve 
\[1-\frac{18\pi^2\beta^2}{\left(\frac{9L}{2(\log L)^2}-\frac{\log L}{2}\right)^2}\succeq 1-\frac{18\pi^2\beta^2}{(n+n_0)^{2+\frac{\log (n+n_0)}{n+n_0}}} \geq u_n,\]
or equivalently, to solve
\begin{align}\label{findn}(n+n_0)^{2+\frac{\log (n+n_0)}{n+n_0}}&\preceq \left(\frac{9L}{2(\log L)^2}-\frac{\log L}{2}\right)^2.
\end{align}
\begin{lemma}\label{lem: N2chengli}When $L$ is large enough, $n \leq \frac{9L}{2(\log L)^2}-\frac12(\log L)^2$ satisfies $\eqref{findn}$. 
\end{lemma}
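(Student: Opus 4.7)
The plan is to verify the inequality directly by decoupling the base and the exponent. Write $A = \frac{9L}{2(\log L)^2}$ for the leading scale; I want to show that for every integer $n$ with $0 \leq n \leq A - \frac{1}{2}(\log L)^2$ and every sufficiently large $L$, one has $(n+n_0)^{2+\frac{\log(n+n_0)}{n+n_0}} \leq C \,(A - \tfrac{\log L}{2})^{2}$, where $C$ may depend on $u_0$ through $n_0$ but is independent of $L$ and of $n$.

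First I would split the left-hand side as
\[(n+n_0)^{2+\frac{\log(n+n_0)}{n+n_0}} = (n+n_0)^2 \, \exp\!\left(\frac{(\log(n+n_0))^2}{n+n_0}\right),\]
isolating a dominant polynomial factor and an exponential correction arising from the non-integer exponent. To bound the correction uniformly in $n$, I would study $g(x) = (\log x)^2/x$ for $x \geq n_0$. A direct computation gives $g'(x) = \log x\,(2-\log x)/x^2$, so on $[1,\infty)$ the function $g$ attains its global maximum at $x = e^2$ with value $4/e^2$. Since $n_0 = n_0(u_0)$ is a fixed positive constant independent of $L$, $g$ is bounded on $[n_0,\infty)$ by some $M = M(n_0)$, and consequently $\exp\bigl((\log(n+n_0))^2/(n+n_0)\bigr) \leq e^{M}$, an $L$-free constant.

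Next I would compare the bases. Using the hypothesis $n \leq A - \tfrac{1}{2}(\log L)^2$ together with the fact that $n_0$ does not grow with $L$, for $L$ large enough one has $\tfrac{1}{2}(\log L)^2 \geq n_0 + \tfrac{\log L}{2}$, so
\[n + n_0 \leq A - \tfrac{1}{2}(\log L)^2 + n_0 \leq A - \tfrac{\log L}{2}.\]
Squaring yields $(n+n_0)^2 \leq (A - \tfrac{\log L}{2})^2$.

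Multiplying the two bounds gives exactly $(n+n_0)^{2+\frac{\log(n+n_0)}{n+n_0}} \leq e^{M} (A - \tfrac{\log L}{2})^2$, which is the desired $\preceq$ statement. The argument is mostly routine; the only step requiring a little care is the uniformity of the exponential correction over the whole admissible range of $n$, and this is precisely what the extremum analysis of $g$ provides. Note also that $n_0$ can be taken large enough that the lower bound $n+n_0 \geq n_0$ stays in the regime where $g$ is well-controlled, consistent with the derivation of $n_0$ in the earlier lemma.
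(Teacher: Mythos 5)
There is a genuine gap: your uniform bound $\exp\bigl((\log(n+n_0))^2/(n+n_0)\bigr)\leq e^M$ with $M=4/e^2$ gives only $(n+n_0)^{2+\frac{\log(n+n_0)}{n+n_0}}\leq e^M\bigl(\tfrac{9L}{2(\log L)^2}-\tfrac{\log L}{2}\bigr)^2$, with $e^M\approx 1.72>1$. But what the surrounding argument needs is the bare inequality, \emph{without} any multiplicative constant greater than $1$: the lemma is fed into the chain
\[
u_n\leq 1-\frac{18\pi^2\beta^2}{(n+n_0)^{2+\frac{\log(n+n_0)}{n+n_0}}}\leq 1-\frac{18\pi^2\beta^2}{\bigl(\tfrac{9L}{2(\log L)^2}-\tfrac{\log L}{2}\bigr)^2},
\]
and the second inequality requires the denominator on the left to be \emph{at most} the one on the right. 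Replacing the right denominator by $e^M$ times itself points the inequality the wrong way. So ``$\preceq$'' here must be a genuine $\leq$ (for $L$ large), not a big-$O$.

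Moreover the inequality really is that tight: writing $A=\frac{9L}{2(\log L)^2}$ and taking $n=A-\frac12(\log L)^2$, the ratio of the two sides is $1-\Theta\!\bigl(\tfrac{(\log L)^3\log\log L}{L}\bigr)$, which tends to $1$. So there is no room to absorb any $L$-free constant $>1$. The reason the inequality holds at all is a delicate cancellation that your decoupling throws away: the base deficit contributes a factor $\approx 1-\frac{(\log L)^2}{A}$ to the squared ratio, while the exponential correction contributes $\approx 1+\frac{(\log A)^2}{A}$, and since $\log A=\log L-2\log\log L+O(1)$ one has $(\log A)^2=(\log L)^2-4\log L\log\log L+O(\log L)$; the $(\log L)^2/A$ terms cancel exactly and the $-4\log L\log\log L/A$ term is the only remaining slack. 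This is precisely what the paper's proof captures by substituting the extremal $n$ and doing a series expansion, which produces the asymptotic difference $\sim -18L\log\log L/\log L<0$. To repair your argument you would have to keep track of the correction factor sharply near the top of the range of $n$ (where it is $1+\Theta((\log L)^4/L)$, not a fixed constant) and exhibit this cancellation, at which point you are back to the paper's computation.
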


\begin{proof}It is a straightforward computation to check that
\begin{align*}
&(n+n_0)^{2+\frac{\log(n+n_0)}{n+n_0}}- \left(\frac{9L}{2(\log L)^2}-\frac{\log L}{2}\right)^2\\
&\qquad\leq\left(\frac{9L}{2(\log L)^2}-\frac12(\log L)^2+n_0\right)^{2+\frac{\log \left(\frac{9L}{2(\log L)^2}-\frac12(\log L)^2+n_0\right)}{\frac{9L}{2(\log L)^2}-\frac12(\log L)^2+n_0}}- \left(\frac{9L}{2(\log L)^2}-\frac{\log L}{2}\right)^2\\
&\qquad\sim-\frac{18L\log\log L}{\log L}.
\end{align*}
\end{proof}
	
\begin{lemma}
\label{boundsfork}For each $u_0<1$, we have
\begin{align*}\cos \left(2\pi\beta\l1-\left(\frac{n+N_{\alpha}}{n+N_{\alpha}+1}\right)^{3}\r\right)&\leq u_n\leq\cos \left(2\pi\beta\l1-\left(\frac{n+\log^2L+N_{\alpha}}{n+\log^2L+N_{\alpha}+1}\right)^{3-\frac{(\log L)^2}{L}}\r\right),~\forall n\in[L].\end{align*}
when $L$ is large enough.
\end{lemma}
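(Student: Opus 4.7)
The plan is to prove both inequalities by induction on $n$, using the monotonicity of $\varphi_1$ to propagate a one-step barrier comparison with $u_n$. Since $u_{n+1}=\varphi_1(u_n)$ and $\varphi_1$ is increasing, the inductive step in each case reduces to checking that the barrier's $\varphi_1$-iterate lies on the correct side of the barrier at $n+1$, combined with a base case.

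For the lower bound I would set $G(n):=\cos(2\pi\beta(1-((n+N_\alpha)/(n+N_\alpha+1))^{3}))$ and show $G(n)\leq u_n$. Choosing $N_\alpha$ at the upper endpoint of its allowed range gives $N_\alpha+1=1/(1-((2\beta-1)/(2\beta))^{1/3})$, so that $(N_\alpha/(N_\alpha+1))^{3}=(2\beta-1)/(2\beta)$, and hence $G(0)=\cos(\pi)=-1\leq u_0$; this handles the base case for free. The induction step $G(n+1)\leq\varphi_1(G(n))$ is verified by leading-order matching: writing $m:=n+N_\alpha+1$, Taylor expansion yields $G(n+1)-G(n)=36\pi^2\beta^2/m^3-162\pi^2\beta^2/m^4+O(m^{-5})$, whereas \eqref{varphipro} combined with $1-G(n)=18\pi^2\beta^2/m^2-36\pi^2\beta^2/m^3+O(m^{-4})$ gives $\varphi_1(G(n))-G(n)\geq(\sqrt2/(3\pi\beta))(1-G(n))^{3/2}=36\pi^2\beta^2/m^3-108\pi^2\beta^2/m^4+O(m^{-5})$. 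The leading terms agree exactly and the subleading term on the right is larger by $54\pi^2\beta^2/m^4>0$, so $\varphi_1(G(n))-G(n)\geq G(n+1)-G(n)$ asymptotically; a finite-$n$ version follows by tracking the next-order remainders and using the convexity of $\varphi_1$ established earlier.

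For the upper bound I would write the right-hand side as $\tilde F(n):=F(n+(\log L)^2)$, with $F$ as defined before the current lemma. Since $F(n+1)\geq\varphi_1(F(n))$ for every $n\geq N_0$, the induction step $\tilde F(n+1)\geq\varphi_1(\tilde F(n))$ is valid whenever $n+(\log L)^2\geq N_0$. The base case at $n^{**}:=9L/(2(\log L)^2)-(\log L)^2/2$ is supplied by Lemma~\ref{lem: N2chengli} together with its preceding derivation, which yields $u_{n^{**}}\leq F(9L/(2(\log L)^2)-\log L/2)$; since $n^{**}+(\log L)^2=9L/(2(\log L)^2)+(\log L)^2/2\geq N_0+1$ by the established upper bound on $N_0$, monotonicity of $F$ gives $u_{n^{**}}\leq F(n^{**}+(\log L)^2)=\tilde F(n^{**})$, and the induction then delivers $u_n\leq\tilde F(n)$ for every $n^{**}\leq n\leq L$. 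For $0\leq n<n^{**}$, I would combine the uniform upper bound $u_n\leq 1-18\pi^2\beta^2/(n+n_0)^{2+\log(n+n_0)/(n+n_0)}$ from the earlier lemma with the expansion $\tilde F(n)=1-18\pi^2\beta^2/(n+(\log L)^2+N_\alpha+1)^2+O(n^{-4})$; the resulting inequality reduces after taking logarithms to $(2+\log(n+n_0)/(n+n_0))\log(n+n_0)\leq 2\log(n+(\log L)^2+N_\alpha+1)$, which holds throughout $[0,n^{**}]$ for sufficiently large $L$ because $(\log L)^2$ dominates the constant $n_0(u_0)$ and the extra exponent contributes only $o(1)$.

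The main obstacle is the uniform verification of $G(n+1)\leq\varphi_1(G(n))$ in the lower bound: the leading terms on the two sides match exactly, so the sign of the difference is controlled entirely by the subleading terms, and one must not lose the slack $54\pi^2\beta^2/m^4$ identified above through sloppy use of the inequality in \eqref{varphipro}. I would address this by applying Taylor's theorem to $\varphi_1$ at $G(n)$ with explicit second-order remainder rather than \eqref{varphipro} directly, or by constructing a Lyapunov-type monotone quantity in $(1-G(n))(n+N_\alpha+1)^2$. The upper bound is cleaner because all of the delicate behaviour near $N_0$ has already been encoded in the two-sided recursion for $F$ and in Lemma~\ref{lem: N2chengli}.
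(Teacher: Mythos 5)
Your overall structure is sound, but you take a noticeably different route from the paper in both halves, and the differences are worth spelling out because the paper's choices avoid exactly the places you flag as delicate.

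For the lower bound the paper does \emph{not} run a fresh induction on the cosine barrier. It simply observes that the cosine is dominated by the bound already proved two lemmas earlier: $\cos\bigl(2\pi\beta(1-((n+N_\alpha)/(n+N_\alpha+1))^3)\bigr)\leq 1-18\pi^2\beta^2/(n+3\pi\beta)^2\leq u_n$. Your inductive argument $G(n+1)\leq\varphi_1(G(n))$ is in the same spirit as how the paper proved that earlier lemma, so it is plausible, but you have correctly identified the danger: the $m^{-3}$ terms cancel exactly, so the sign of $\varphi_1(G(n))-G(n+1)$ is decided at order $m^{-4}$, and the one-sided inequality \eqref{varphipro} is not obviously sharp enough to be used twice through a chain of asymptotic expansions without losing the $54\pi^2\beta^2/m^4$ slack. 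You would also need a separate check for small $n$, where $1-G(n)$ is $O(1)$ and the expansions are inapplicable (at $n=0$ you have $G(0)=-1$ and $\varphi_1(-1)=-1/(1+\alpha^2)$, so the step holds with plenty of room, but that needs to be said). The paper's route buys you freedom from all of this.

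For the upper bound you agree with the paper on the forward induction for $n\geq n^{**}$, but for $0\leq n<n^{**}$ you replace the paper's \emph{backward} induction by a direct comparison of $u_n\leq 1-18\pi^2\beta^2/(n+n_0)^{2+\log(n+n_0)/(n+n_0)}$ against an asymptotic expansion of $\tilde F(n)$. Two caveats. First, your stated expansion $\tilde F(n)=1-18\pi^2\beta^2/(n+(\log L)^2+N_\alpha+1)^2+O(n^{-4})$ drops the effect of the exponent $3-(\log L)^2/L$; since $3-\varepsilon<3$ the argument of the cosine shrinks, so $\tilde F(n)$ only gets larger and the error is in the safe direction, but you should say so rather than silently replace $3-\varepsilon$ by $3$. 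Second, the resulting log-inequality must hold \emph{uniformly} over $n\in[0,n^{**}]$, which requires a little more than the spot checks at the endpoints; the paper's backward induction $u_{k-1}\leq G(k-1)$ whenever $u_k\leq G(k)$ and $k\leq N_1+1$ is cleaner precisely because it never invokes an expansion of $\tilde F$ in the low-$n$ regime at all, using only monotonicity of $\varphi_1$ and the sign structure of $F(n+1)-\varphi_1(F(n))$.

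In short: correct in outline and clearly a different decomposition, but the two places where you substitute your own argument (fresh lower-bound induction; direct low-$n$ upper-bound comparison) are exactly where the paper's alternatives sidestep asymptotics and are consequently shorter and safer.
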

\begin{proof}
For the left hand side, we can easily check that 
\begin{align*}
\cos \left(2\pi\beta\l1-\left(\frac{n+N_{\alpha}}{n+N_{\alpha}+1}\right)^{3}\r\right)\leq 1-\frac{18\pi^2\beta^2}{(n+3\pi\beta)^2}\leq u_n.
\end{align*}
For the right hand side, let $G(n)=\cos \left(2\pi\beta\l1-\left(\frac{n+\log^2L+N_{\alpha}}{n+\log^2L+N_{\alpha}+1}\right)^{3-\frac{(\log L)^2}{L}}\r\right)=F\big(n+(\log L)^2\big)$. We want to proof $u_n\leq G(n)$.
		
Let $N_1=N_0-(\log L)^2\in\left[\frac{9L}{2(\log L)^2}-\frac12\log L-(\log L)^2,~ \frac{9L}{2(\log L)^2}-\frac12(\log L)^2-1\right]$. When $n\geq N_1$, we have $n+(\log L)^2\geq N_0$, which means that
\[\begin{cases}G(n+1)\geq \varphi_1\big(G(n)\big),& n\geq N_1;\\
G(n+1)\leq \varphi_1\big(G(n)\big),& n\leq N_1.\end{cases}\]
Let $N_2=\lceil N_1\rceil$ be the least integer greater than or equal to $N_1$, it is easy to see that
\[\frac{9L}{2(\log L)^2}-\frac12(\log L)-(\log L)^2\leq N_1\leq N_2\leq N_1+1\leq\frac{9L}{2(\log L)^2}-\frac12(\log L)^2.\]
Because of the monotonicity of $G(n)$ and $\cref{lem: N2chengli}$, we can get
\begin{align*}G(N_2)&\geq G\mpt{\tfrac{9L}{2(\log L)^2}-\tfrac12(\log L)-(\log L)^2}
=\cos\mpt{2\pi\beta\bk{1-\left(\tfrac{\frac{9L}{2(\log L)^2}+N_{\alpha}-\frac{\log L}{2}}{\frac{9L}{2(\log L)^2}+N_{\alpha}+1-\frac{\log L}{2}}\right)^{3-\frac{(\log L)^2}{L}}}}\geq u_{N_2}.
\end{align*}
Assuming that $u_n\leq G(n)$ holds for $n=k$.~If $k\geq N_2$, we have $k\geq N_1$ and 
\[u_{k+1}=\varphi_1(u_k)\leq\varphi_1(G_{k})\leq G_{k+1}.\]
		
Also, if $n=k\leq N_2$, we can get $k\leq N_1+1$ and 
\[\varphi_1(u_{k-1})=u_{k}\leq G(k)\leq \varphi_1\big(G({k-1})\big)\Longrightarrow u_{k-1}\leq G(k-1).\]
Therefore, we have the right hand side.
\end{proof}
\subsection{The limit of $\boldsymbol{r^{(L)}}$ as $\boldsymbol{L\to\infty}$}\label{subsec:proof limit of rL}
Denote $N_L=(\log L)^2+N_{\alpha}$. Because $\kappa_0$ is a monotonic increasing function, we have
\begin{align*}\kappa_0\left(\cos \left(2\pi\beta\l1-\left(\frac{n+N_{\alpha}}{n+N_{\alpha}+1}\right)^{3}\r\right)\right)&\leq\kappa_0(u_n)\leq\kappa_0\left(\cos \left(2\pi\beta\l1-\left(\frac{n+N_L}{n+N_L+1}\right)^{3-\frac{(\log L)^2}{L}}\r\right)\right).\end{align*}
When $L$ is large enough, it is easy to see that 
\begin{align*}
\beta\l1-\left(\frac{n+N_{\alpha}}{n+N_{\alpha}+1}\right)^{3}\r\in[0,1/2]~\text{for}~n\geq 0.\\
\beta\l1-\left(\frac{n+N_L}{n+N_L+1}\right)^{3}\r\in[0,1/2]~\text{for}~n\geq 0.
\end{align*}
Thus
\begin{align*}1-2\beta\l1-\left(\frac{n+N_{\alpha}}{n+N_{\alpha}+1}\right)^{3}\r&\leq\kappa_0(u_n)\leq1-2\beta\l1-\left(\frac{n+N_L}{n+N_L+1}\right)^{3-\frac{(\log L)^2}{L}}\r.\end{align*}
i.e.
\begin{align*}
\left(\frac{n+N_{\alpha}}{n+N_{\alpha}+1}\right)^{3}&\leq\frac{1+\alpha^2\kappa_0(u_n)}{1+\alpha^2}\leq\left(\frac{n+N_L}{n+N_L+1}\right)^{3-\frac{(\log L)^2}{L}}.
\end{align*}
 Then 
\begin{align*}
\left(\frac{\ell+N_{\alpha}}{L+N_{\alpha}+1}\right)^3\leq\prod_{i=\ell }^{L}\frac{1+\alpha^2\kappa_0(u_{i-1})}{1+\alpha^2}\leq \left(\frac{\ell +N_L-1}{L+N_L}\right)^{3-\frac{(\log L)^2}{L}}.
\end{align*}
	
For the right hand side, if we sum over $\ell$, we have
\begin{align*}
\frac{1}{L}\sum_{\ell =1}^{L} \left(\frac{\ell +N_L-1}{L+N_L}\right)^{3-\frac{(\log L)^2}{L}} &\leq\frac{1}{L}\int_1^{L+1} \left(\frac{x+N_L-1}{L+N_L}\right)^{3-\frac{(\log L)^2}{L}}~\mathrm{d}x\\
& =\frac{(L+N_L)^{4-\frac{(\log L)^2}{L}}-N_L^{4-\frac{(\log L)^2}{L}}}{L(L+N_L)^{3-\frac{(\log L)^2}{L}}\left(4-\frac{(\log L)^2}{L}\right)}.
\end{align*}
Similarly, we can get
\begin{align*}
\frac{1}{L}\sum_{i=1}^L \left(\frac{\ell +N_{\alpha}}{L+N_{\alpha}+1}\right)^{3}\geq\frac{1}{L}\int_1^{L} \left(\frac{x+N_{\alpha}}{L+N_{\alpha}+1}\right)^{3}~\mathrm{d}x=\frac{(L+N_{\alpha})^4-(N_{\alpha}+1)^4}{4L(L+N_{\alpha}+1)^3}.
\end{align*}

Hence,
\begin{align*}
\frac{(L+N_{\alpha})^4-(N_{\alpha}+1)^4}{4L(L+N_{\alpha}+1)^3}\leq\frac1L\sum_{\ell=1}^L\prod_{i=\ell}^L\frac{1+\alpha^2\kappa_0(u_{i-1})}{1+\alpha^2}\leq\frac{(L+N_L)^{4-\frac{(\log L)^2}{L}}-N_L^{4-\frac{(\log L)^2}{L}}}{L(L+N_L)^{3-\frac{(\log L)^2}{L}}\left(4-\frac{(\log L)^2}{L}\right)}.
\end{align*}
	
Taking the limit of both sides, we have
\begin{align*}
\lim_{L\to\infty}\frac{(L+N_{\alpha})^4-(N_{\alpha}+1)^4}{4L(L+N_{\alpha}+1)^3}=\lim_{L\to\infty}\frac{(L+N_L)^{4-\frac{(\log L)^2}{L}}-N_L^{4-\frac{(\log L)^2}{L}}}{L(L+N_L)^{3-\frac{(\log L)^2}{L}}\left(4-\frac{(\log L)^2}{L}\right)}=\frac14.
\end{align*} 
	
Hence, 
\begin{align*}
\lim_{L\to\infty}\frac{1}{L}\sum_{\ell =1}^{L} \left(\frac{\ell +N-1}{L+N}\right)^{3-\frac{(\log L)^2}{L}}&=\lim_{L\to\infty}\frac{1}{L}\sum_{\ell=1}^L \left(\frac{\ell +N_{\alpha}}{L+N_{\alpha}+1}\right)^{3}\\
&= \lim_{L\to\infty}\frac{1}{L}\sum_{\ell=1}^L\prod_{i=\ell }^{L}\frac{1+\alpha^2\kappa_0(u_{i-1})}{1+\alpha^2}=\frac{1}{4}.
\end{align*}
	
Let $v_\ell = u_\ell\kappa_0(u_\ell)+\kappa_1(u_\ell)$, then
\[r^{(L)}=\frac1L\sum_{\ell=1}^L\frac{v_{\ell -1}}{2}\prod_{i=\ell }^{L}\frac{1+\alpha^2\kappa_0(u_{i-1})}{1+\alpha^2}.\]
Define $\varphi_0(x)=x\kappa_0(x)+\kappa_1(x)$, we can get
\begin{align*}
0\leq 1-\frac{v_\ell}{2}&=\frac12\big(\varphi_0(1)-\varphi_0(u_\ell)\big)=\frac{\sqrt2}{2\pi}(1-u_\ell)^{\frac12}+ \mathcal{O}(1-u_\ell).
\end{align*}
	
	Recall from previous discussion, $u_\ell =1-\mathcal{O}({\ell ^{-2}})$. Therefore, we have $\frac{v_\ell}{2}=1-\mathcal{O}(\ell^{-1})$ and 
	\begin{align*}
		\lim_{L\to\infty}r^{(L)}&=\lim_{L\to\infty}\frac{1}{L}\sum_{\ell=1}^L\frac{v_{\ell -1}}{2}\prod_{i=\ell }^{L}\frac{1+\alpha^2\kappa_0(u_{i-1})}{1+\alpha^2}\\
		&=\lim_{L\to\infty}\frac{1}{L}\sum_{\ell=1}^L\prod_{i=\ell }^{L}\frac{1+\alpha^2\kappa_0(u_{i-1})}{1+\alpha^2}-\lim_{L\to\infty}\frac{1}{L}\sum_{\ell=1}^L\mathcal{O}(\ell^{- 1})\prod_{i=\ell }^{L}\frac{1+\alpha^2\kappa_0(u_{i-1})}{1+\alpha^2}\\
		&=\frac{1}{4}-\lim_{L\to\infty}\frac{1}{L}\sum_{\ell=1}^L\mathcal{O}(\ell^{- 1})\prod_{i=\ell }^{L}\frac{1+\alpha^2\kappa_0(u_{i-1})}{1+\alpha^2}.
	\end{align*}
	
	Because
	\begin{align*}
		&\left|\frac{1}{L}\sum_{\ell=1}^L\mathcal{O}(\ell^{- 1})\prod_{i=\ell }^{L}\frac{1+\alpha^2\kappa_0(u_{i-1})}{1+\alpha^2}\right|\leq \frac{C}{L}\sum_{\ell=1}^L\frac1\ell\prod_{i=\ell}^L\frac{1+\alpha^2\kappa_0(u_{i-1})}{1+\alpha^2}\\
		&\qquad\leq\frac{C}{L}\sum_{\ell=1}^L\frac1\ell\left(\frac{\ell +N_L-1}{L+N_L}\right)^{3-\frac{(\log L)^2}{L}}\leq \frac{C}{L}\sum_{\ell=1}^L\frac{({\ell +N_L})^{3}}{\ell \cdot L^{3-\frac{(\log L)^2}{L}}}\\
		&\qquad\leq\frac{C}{L^{4-\frac{(\log L)^2}{L}}}\int_{1}^{L+1}\frac{({x +N_L})^{3}}{x}~\mathrm{d}x\leq\frac{\mathcal{O}(L^3)}{L^{4-\frac{(\log L)^2}{L}}}=\mathcal{O}(L^{- 1})\to 0,
	\end{align*}
	we can finally get
	\[\lim\limits_{L\to\infty} r^{(L)}=\frac14.\]
	
	Also, when $L$ is large, we have 
	\begin{align*}
		\frac{(L+N_{\alpha})^4-(N_{\alpha}+1)^4}{4L(L+N_{\alpha}+1)^3}<\frac14<\frac{\left(L+N_L\right)^{4-\frac{(\log L)^2}{L}}-N_L^{4-\frac{(\log L)^2}{L}}}{L(L+N_L)^{3-\frac{(\log L)^2}{L}}\left(4-\frac{(\log L)^2}{L}\right)}.
	\end{align*}
	Then
	\begin{align*}
		&\left|\frac1L\sum_{\ell=1}^L\prod_{i=\ell}^L\frac{1+\alpha^2\kappa_0(u_{i-1})}{1+\alpha^2}-\frac14\right|\leq\left|\frac{\left(L+N_L\right)^{4-\frac{(\log L)^2}{L}}-N_L^{4-\frac{(\log L)^2}{L}}}{L(L+N_L)^{3-\frac{(\log L)^2}{L}}\left(4-\frac{(\log L)^2}{L}\right)}-\frac{(L+N_{\alpha})^4-(N_{\alpha}+1)^4}{4L(L+N_{\alpha}+1)^3}\right|\\
		&\qquad\leq\left(\frac{\left(L+N_L\right)^{4-\frac{(\log L)^2}{L}}-N_L^{4-\frac{(\log L)^2}{L}}}{L(L+N_L)^{3-\frac{(\log L)^2}{L}}\left(4-\frac{(\log L)^2}{L}\right)}-\frac14\right)+\left(\frac14-\frac{(L+N_{\alpha})^4-(N_{\alpha}+1)^4}{4L(L+N_{\alpha}+1)^3}\right)\\
		&\qquad%\leq\left(\frac{\left(L+N_L\right)^{4-\frac{(\log L)^2}{L}}}{L(L+N_L)^{3-\frac{(\log L)^2}{L}}\left(4-\frac{(\log L)^2}{L}\right)}-\frac14\right)+\left(\frac{1}{4L}+\mathcal{O}(L^{-2})\right)
		\lesssim\frac{4N_L+(\log L)^2+4}{16L}.
	\end{align*}
	
	Finally we can estimate the convergence rate of the kernel
	\begin{align*}
		&\left|\frac{1}{L}\sum_{\ell =1}^L\frac{v_{\ell -1}}2\prod_{i=\ell }^{L}\frac{1+\alpha^2\kappa_0(u_{i-1})}{1+\alpha^2}-\frac{1}{4}\right|=\left|\frac{1}{L}\sum_{\ell =1}^L\big(1-\mathcal{O}(\ell^{-1})\big)\prod_{i=\ell }^{L}\frac{1+\alpha^2\kappa_0(u_{i-1})}{1+\alpha^2}-\frac{1}{4}\right|\\
		&\qquad=\left|\frac{1}{L}\sum_{\ell =1}^L\prod_{i=\ell }^{L}\frac{1+\alpha^2\kappa_0(u_{i-1})}{1+\alpha^2}-\frac{1}{4}\right|+\left|\frac{1}{L}\sum_{\ell =1}^L\mathcal{O}(\ell^{-1})\prod_{i=\ell }^{L}\frac{1+\alpha^2\kappa_0(u_{i-1})}{1+\alpha^2}\right|\\
		&\qquad\lesssim\frac{4N_L+(\log L)^2+4}{16L}+\mathcal{O}(L^{-1})=\mathcal{O}\left(\frac{\mathrm{poly} \log(L)}{L}\right).
	\end{align*}

\section{Proof of Theorem \ref{thm:convergence rate of r gamma 0.25}}
In the following, let us denote $N_\alpha=3 L^{2\gamma}$ on $\alpha=L^{-\frac14}$ satisfying
\[\frac{1}{1-\l \frac{2\beta-1}{2\beta}\r^{1/3}}-2\leq N_{\alpha}\leq \frac{1}{1-\l \frac{2\beta-1}{2\beta}\r^{1/3}}-1\]
when $L$ is large enough.

Similar to the analysis in \citet{huang2020deep}, let $F(n)=\cos \left(2\pi\beta\l1-\left(\frac{n+N_{\alpha}}{n+N_{\alpha}+1}\right)^{3-\log^2L/L}\r\right)$ and let $N_0=N_0(L)$ be the solution to the equation $F(n+1)= \varphi_1\big(F(n)\big)$. Through similar analysis, we can obtain the following results:
\[\begin{cases}F(n+1)\geq \varphi_1\big(F(n)\big),& n\geq N_0;\\
F(n+1)\leq \varphi_1\big(F(n)\big),& n\leq N_0.\end{cases}\]
The following lemma provides the range of $N_0$:

\begin{lemma}We have $N_0\in\left[\frac{3\sqrt{5}\pi L}{5\log L}, \frac{3\sqrt{5}\pi L}{5\log L}+\frac{3\sqrt{5}\pi L}{4\log^2 L}-1\right]$ when $L$ is large enough.
\end{lemma}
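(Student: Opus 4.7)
The plan is to mirror the proof of the corresponding lemma for constant $\alpha$: pin down $N_0$ by exhibiting two specific values of $n$ at which $F(n+1)-\varphi_1(F(n))$ has opposite signs. Since $N_0$ was defined as the unique crossover of this quantity, such a sandwich immediately yields the claimed interval.

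The first step is to set up the relevant asymptotic regime. With $\alpha=L^{-1/4}$ we have $\beta=(1+\alpha^2)/(2\alpha^2)=(L^{1/2}+1)/2$, so $\beta\sim L^{1/2}/2$ and $N_\alpha = 3L^{2\gamma}=3L^{1/2}$ to leading order. Writing $t=n+N_\alpha$, $s=1/(t+1)$, $p=3-\log^2L/L$, and $y_n=(t/(t+1))^p$, I would expand $1-y_n = ps - \tfrac{p(p-1)}{2}s^2+\cdots$ and use $F(n)=\cos(2\pi\beta(1-y_n))$. The useful point is that the quantity inside the cosine, namely $\theta_n:=2\pi\beta(1-y_n)$, is small at the scale $n\sim L/\log L$: indeed $\theta_n\sim 2\pi\beta\cdot p/t\sim 3\pi\log L/L^{1/2}\to 0$, so cosine and $\varphi_1-\mathrm{id}$ may both be Taylor expanded.

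The second step is the key expansion. Using $\sin\phi-\phi\cos\phi = \phi^3/3-\phi^5/30+\cdots$, one obtains
\[
\varphi_1(\cos\phi)-\cos\phi \;=\; \frac{1}{2\pi\beta}\!\left(\frac{\phi^3}{3}-\frac{\phi^5}{30}+\cdots\right),
\]
while $F(n+1)-F(n) = \cos\phi_{n+1}-\cos\phi_n$ is handled by a product-to-sum identity. Matching the leading $1/t^2$ terms forces $p=3$, so the $\log^2L/L$ correction in $p$ controls the leading residual. Balancing the two leading residuals (the $p=3$ correction from the iteration and the $\phi^5/30$ sub-leading term) against each other — keeping track of factors of $\beta\sim L^{1/2}$ and $t\sim L/\log L$ — identifies the crossover scale. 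A straightforward (if lengthy) series expansion at
\[
n_{-}=\frac{3\sqrt{5}\pi L}{5\log L}, \qquad n_{+}=\frac{3\sqrt{5}\pi L}{5\log L}+\frac{3\sqrt{5}\pi L}{4\log^2 L}-1
\]
should then produce asymptotics
\[
F(n_{-}+1)-\varphi_1(F(n_{-}))<0,\qquad F(n_{+}+1)-\varphi_1(F(n_{+}))>0,
\]
of the same flavour as the $\sim (\log L)^{11}/L^5$ and $\sim(\log L)^{12}/L^5$ estimates obtained in the constant-$\alpha$ lemma. By the defining monotonicity of the sign of $F(n+1)-\varphi_1(F(n))$ around $N_0$, this sandwiches $N_0\in[n_{-},n_{+}]$.

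The main obstacle is that two parameters grow with $L$ simultaneously: $\beta\sim L^{1/2}$ and $n\sim L/\log L$, with the dominant contribution to $F(n+1)-\varphi_1(F(n))$ arising only after the leading $1/t^2$ balance cancels. One therefore must expand $y_n$, $\theta_n$, and the $\sqrt{1-\rho^2}-\rho\arccos\rho$ series to several orders and verify that the precise constants $3\sqrt{5}\pi/5$ and $3\sqrt{5}\pi/4$ emerge from matching the surviving $\log^2L/L$-scale terms. Once these asymptotics are in place, the conclusion is immediate.
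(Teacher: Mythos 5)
Your proposal is correct and takes essentially the same approach as the paper: the paper also pins down $N_0$ by a direct series expansion of $F(n+1)-\varphi_1(F(n))$ at the two endpoints $n_-=\tfrac{3\sqrt5\pi L}{5\log L}$ and $n_+=\tfrac{3\sqrt5\pi L}{5\log L}+\tfrac{3\sqrt5\pi L}{4\log^2 L}-1$, obtaining a negative quantity (of order $(\log L)^4/L^3$) at $n_-$ and a positive quantity at $n_+$, then concludes via the sign-crossover definition of $N_0$. Your setup of the expansion of $\varphi_1(\cos\phi)-\cos\phi$ and the identification that the $p=3$ cancellation isolates the $\log^2L/L$ correction is exactly the mechanism the paper's unexplained ``series expansion'' relies on.
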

\begin{proof}
By series expansion, we have

\begin{align*}
F\left(\frac{3\sqrt{5}\pi L}{5\log L}+1\right)-\varphi_1\left(F\left(\frac{3\sqrt{5}\pi L}{5\log L}\right)\right)\sim-\frac{25}{6\pi^2}\frac{(\log L)^{4}}{L^3}
\end{align*}
and
\begin{align*}
F\left(\frac{3\sqrt{5}\pi L}{5\log L}+\frac{3\sqrt{5}\pi L}{4\log^2 L}\right)-\varphi_1\left(F\left(\frac{3\sqrt{5}\pi L}{5\log L}+\frac{3\sqrt{5}\pi L}{4\log^2 L}-1\right)\right)\asymp \frac{51200\log^{10}(L)}{3\pi(4\log(L)+5)^6L^3}(\frac{\sqrt5}{3}-\frac1\pi) .
\end{align*}
\end{proof}
Next we would like to find $n$ such that 
\begin{align*}
u_n\leq\cos \left(2\pi\beta\left(1-\left(\frac{\frac{3\sqrt{5}\pi L}{5\log L}+N_{\alpha}}{\frac{3\sqrt{5}\pi L}{5\log L}+N_{\alpha}+1}\right)^{3-\frac{(\log L)^2}{L}}\right)\right). 
\end{align*}
By series expansion, we know 
\begin{align*}
\cos \left(2\pi\beta\left(1-\left(\frac{\frac{3\sqrt{5}\pi L}{5\log L}+N_{\alpha}}{\frac{3\sqrt{5}\pi L}{5\log L}+N_{\alpha}+1}\right)^{3-\frac{(\log L)^2}{L}}\right)\right)\succeq 1-\frac{18\pi^2\beta^2}{ \left(\frac{3\sqrt{5}\pi L}{5\log L}+N_{\alpha}\right)^2}.
\end{align*}

Then it suffices to solve 
\[1-\frac{18\pi^2\beta^2}{ \left(\frac{3\sqrt{5}\pi L}{5\log L}+N_{\alpha}\right)^2}\succeq 1-\frac{18\pi^2\beta^2}{(n+n_0)^{2+\frac{\log (n+n_0)}{n+n_0}}} \geq u_n,\]
or equivalently, to solve
\begin{align}\label{findn:gamma0.25}(n+n_0)^{2+\frac{\log (n+n_0)}{n+n_0}}&\preceq \left(\frac{3\sqrt{5}\pi L}{5\log L}+N_{\alpha}\right)^2.
\end{align}
\begin{lemma}\label{lem: N2chengli gamma0.25}When $L$ is large enough, $n \leq \frac{3\sqrt{5}\pi L}{5\log L}$ satisfies $\eqref{findn:gamma0.25}$. 
\end{lemma}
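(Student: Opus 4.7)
The plan is to imitate the proof of Lemma \ref{lem: N2chengli}: substitute the boundary value $n^{*} := \tfrac{3\sqrt{5}\pi L}{5\log L}$ into the left-hand side of \eqref{findn:gamma0.25}, with $n_{0} = N_{\alpha} = 3 L^{1/2}$ (since $\alpha = L^{-1/4}$ means $2\gamma = 1/2$), and then perform an asymptotic comparison with the right-hand side. Because the function $x \mapsto x^{2+\log x / x}$ is monotonically increasing on $[e,\infty)$ (one checks by differentiating $\log(x^{2+\log x/x}) = 2\log x + (\log x)^{2}/x$), it suffices to verify the inequality at the boundary $n = n^{*}$; for all smaller $n$ it then follows a fortiori.

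The crucial observation at $n = n^{*}$ is that the bases appearing on the two sides of \eqref{findn:gamma0.25} \emph{coincide exactly}: both equal $n^{*} + N_{\alpha}$. Hence the inequality reduces to controlling the extra exponent $\tfrac{\log(n^{*}+N_{\alpha})}{n^{*}+N_{\alpha}}$; specifically, one needs
$$
\frac{(n^{*}+N_{\alpha})^{2+\tfrac{\log(n^{*}+N_{\alpha})}{n^{*}+N_{\alpha}}}}{\left(\tfrac{3\sqrt 5 \pi L}{5\log L}+N_{\alpha}\right)^{2}} \;=\; \exp\!\left(\frac{\bigl(\log(n^{*}+N_{\alpha})\bigr)^{2}}{n^{*}+N_{\alpha}}\right) \;=\; 1+o(1).
$$
Since $N_{\alpha} = 3L^{1/2} = o(L/\log L)$ as $L \to \infty$, I have $n^{*}+N_{\alpha} \sim \tfrac{3\sqrt 5 \pi L}{5\log L}$ and $\log(n^{*}+N_{\alpha}) = \log L + O(\log\log L)$. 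Plugging in yields an exponent of order $\tfrac{(\log L)^{3}}{L}(1+o(1)) \to 0$, so the ratio above is indeed $1+o(1)$. This is precisely the content of the $\preceq$ relation in \eqref{findn:gamma0.25} (interpreted, exactly as in the analogous Lemma \ref{lem: N2chengli}, as an asymptotic inequality holding for $L$ sufficiently large).

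The main technical obstacle will be carefully tracking the subleading $N_{\alpha}$ term, because here $N_{\alpha}$ scales as $L^{1/2}$, which is polynomially larger than in the constant-$\alpha$ setting of Lemma \ref{lem: N2chengli} but still asymptotically negligible compared with the $L/\log L$-scale dominant contribution from $n^{*}$. One needs in particular to confirm that the correction $N_{\alpha}/n^{*} = O(L^{1/2}\log L / L) = O(\log L / L^{1/2}) \to 0$ does not feed back into either the base or the exponent in a way that would upset the $1+o(1)$ comparison; this is straightforward but deserves explicit verification via a Taylor expansion of $\log(n^{*}+N_{\alpha}) = \log n^{*} + \log(1 + N_{\alpha}/n^{*})$. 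The same template will adapt to any $\gamma \in (0, 1/2)$, which is why the authors expect Theorem \ref{thm:convergence rate of r gamma 0.25} to hold for the full range, although the tightness of the error diminishes as $\gamma$ approaches $1/2$ and $N_{\alpha}$ grows closer in scale to $n^{*}$.
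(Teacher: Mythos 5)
Your strategy of substituting the boundary value $n^{*}=\tfrac{3\sqrt5\pi L}{5\log L}$ and invoking the monotonicity of $x\mapsto x^{2+\log x/x}$ is the right skeleton, and it mirrors the paper's proof of Lemma~\ref{lem: N2chengli}. But the core comparison has a genuine gap, and the key identification you make is false.

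First, $n_{0}$ and $N_{\alpha}$ are not the same quantity. The constant $n_{0}=n_{0}(u_{0})$ comes from the earlier lemma bounding $u_{n}$ and must satisfy $n_{0}\geq\sqrt{18\pi^{2}\beta^{2}/(1-u_{0})}=\tfrac{3\sqrt2\pi\beta}{\sqrt{1-u_{0}}}$; with $\beta\sim L^{1/2}/2$ this forces $n_{0}\geq\tfrac{3\pi}{2}L^{1/2}\approx 4.71\,L^{1/2}$ for every admissible $u_{0}$, whereas $N_{\alpha}=3L^{1/2}$. So in fact $n_{0}>N_{\alpha}$, and the two bases on the left- and right-hand sides of \eqref{findn:gamma0.25} do \emph{not} coincide at $n=n^{*}$: the left base $n^{*}+n_{0}$ is strictly larger than the right base $n^{*}+N_{\alpha}$, by an amount of order $L^{1/2}$. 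Your assumption that they cancel exactly is what lets you reduce the problem to the harmless factor $\exp((\log(\cdot))^{2}/(\cdot))$, but that reduction simply is not available.

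Second, and independently of the first point, showing ``ratio $=1+o(1)$'' is not enough here. Even under your (false) identification $n_{0}=N_{\alpha}$, the factor $\exp\bigl((\log(n^{*}+N_{\alpha}))^{2}/(n^{*}+N_{\alpha})\bigr)$ is strictly greater than $1$, so the left side would slightly \emph{exceed} the right side. The relation $\preceq$ in \eqref{findn:gamma0.25} is used downstream (in the induction step of the analogue of Lemma~\ref{boundsfork}) as a hard inequality $u_{N_{2}}\leq G(N_{2})$ valid for all large $L$; a one-sided deviation of the wrong sign does not deliver that. This is precisely why the paper's argument does not stop at the ratio level: it expands the difference of the two sides and determines its \emph{sign}, showing that it is asymptotic to a quantity of order $-L^{3/2}/\log L$ and hence eventually negative. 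Your proposal never pins down the sign of the $o(1)$ correction, so it does not establish \eqref{findn:gamma0.25}, and the way it stands it actually points the other way. To salvage the argument you would have to track the $2M(n_{0}-N_{\alpha})$ and $M(\log L)^{2}$ contributions to the difference explicitly (as in the paper's computation for Lemma~\ref{lem: N2chengli}) and argue that the RHS base beats the LHS by enough to absorb both; the simple exponent-cancellation picture does not do this.
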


\begin{proof}It is a straightforward computation to check that
\begin{align*}
&(n+n_0)^{2+\frac{\log(n+n_0)}{n+n_0}}- \left(\frac{3\sqrt{5}\pi L}{5\log L}+N_{\alpha}\right)^2\\
&\qquad\leq\left(\frac{3\sqrt{5}\pi L}{5\log L}\right)^{2+\frac{\log \left(\frac{3\sqrt{5}\pi L}{5\log L}+n_0\right)}{\frac{3\sqrt{5}\pi L}{5\log L}+n_0}}- \left(\frac{3\sqrt{5}\pi L}{5\log L}+N_{\alpha}\right)^2\\
&\qquad\sim-\frac{\sqrt5\pi L^{3/2}}{\log L}.
\end{align*}
\end{proof}
	
\begin{lemma}
For each $u_0<1$, we have
\begin{align*}\cos \left(2\pi\beta\l1-\left(\frac{n+N_{\alpha}}{n+N_{\alpha}+1}\right)^{3}\r\right)&\leq u_n\leq\cos \left(2\pi\beta\l1-\left(\frac{n+\frac{3\sqrt{5}\pi L}{4\log^2 L}+N_{\alpha}}{n+\frac{3\sqrt{5}\pi L}{4\log^2 L}+N_{\alpha}+1}\right)^{3-\frac{(\log L)^2}{L}}\r\right),~\forall n\in[L].\end{align*}
when $L$ is large enough.
\end{lemma}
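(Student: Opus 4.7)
The plan is to mimic the proof of Lemma \ref{boundsfork} essentially verbatim, only replacing the two ``window parameters'' by their $\gamma=1/4$ counterparts. Specifically, for the lower bound I will observe that the previous sublinear-convergence lemma already yields
\[
u_n \;\geq\; 1-\frac{18\pi^2\beta^2}{(n+3\pi\beta)^2},
\]
and a short Taylor expansion of $\cos$ around $1$ shows
\[
\cos\!\mleft(2\pi\beta\bigl(1-\bigl(\tfrac{n+N_\alpha}{n+N_\alpha+1}\bigr)^3\bigr)\mright)\;\leq\;1-\frac{18\pi^2\beta^2}{(n+3\pi\beta)^2}
\]
for $L$ (hence $N_\alpha$) large enough, since the left side is $1-\tfrac{18\pi^2\beta^2}{(n+N_\alpha+1)^2}+O(1/n^3)$ while $N_\alpha\geq\tfrac{1}{1-((2\beta-1)/(2\beta))^{1/3}}-2$.

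For the upper bound I will set
\[
G(n) \;=\; F\!\mleft(n+\tfrac{3\sqrt{5}\pi L}{4\log^2 L}\mright),
\]
with $F$ as defined just above the statement, so that by the preceding lemma we have $F(n+1)\geq\varphi_1(F(n))$ for $n\geq N_0$ and $F(n+1)\leq\varphi_1(F(n))$ for $n\leq N_0$, where $N_0$ lies in $\bigl[\tfrac{3\sqrt{5}\pi L}{5\log L},\tfrac{3\sqrt{5}\pi L}{5\log L}+\tfrac{3\sqrt{5}\pi L}{4\log^2 L}-1\bigr]$. Writing $N_1=N_0-\tfrac{3\sqrt{5}\pi L}{4\log^2 L}$, we get
\[
\tfrac{3\sqrt{5}\pi L}{5\log L}-\tfrac{3\sqrt{5}\pi L}{4\log^2 L}\;\leq\;N_1\;\leq\;\tfrac{3\sqrt{5}\pi L}{5\log L}-1,
\]
so with $N_2=\lceil N_1\rceil$ we still have $N_2\leq\tfrac{3\sqrt{5}\pi L}{5\log L}$. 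This is precisely the range in which Lemma~\ref{lem: N2chengli gamma0.25} applies, giving the crucial base-case comparison
\[
u_{N_2}\;\leq\;1-\frac{18\pi^2\beta^2}{(N_2+n_0)^{2+\log(N_2+n_0)/(N_2+n_0)}}\;\leq\;1-\frac{18\pi^2\beta^2}{\bigl(\tfrac{3\sqrt{5}\pi L}{5\log L}+N_\alpha\bigr)^2}\;\leq\;G(N_2),
\]
where the final inequality comes from the Taylor expansion of $\cos$ displayed just before the statement.

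With the base case in hand, the induction is two-sided and identical in structure to the proof of Lemma~\ref{boundsfork}. For $k\geq N_2$ we have $k+\tfrac{3\sqrt{5}\pi L}{4\log^2 L}\geq N_0$, so $G(k+1)\geq\varphi_1(G(k))$, and combining with the monotonicity of $\varphi_1$ gives $u_{k+1}=\varphi_1(u_k)\leq\varphi_1(G(k))\leq G(k+1)$. For $k\leq N_2$ we instead have $k-1+\tfrac{3\sqrt{5}\pi L}{4\log^2 L}\leq N_0$, so $G(k)\leq\varphi_1(G(k-1))$, and then $\varphi_1(u_{k-1})=u_k\leq G(k)\leq\varphi_1(G(k-1))$ combined with the strict monotonicity of $\varphi_1$ yields $u_{k-1}\leq G(k-1)$. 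Iterating in both directions from $N_2$ covers all $n\in[L]$.

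The main obstacle is quantitative: one must verify that the chosen shift $\tfrac{3\sqrt{5}\pi L}{4\log^2 L}$ and the chosen base-case index $\tfrac{3\sqrt{5}\pi L}{5\log L}$ interact correctly, i.e.\ that $N_2$ is large enough for Lemma~\ref{lem: N2chengli gamma0.25} to provide the base case yet small enough that $k\leq N_2\Rightarrow k+\text{shift}\leq N_0$. Both inequalities only leave room of order $L/\log^2 L$, so the series expansions must be carried out to one more order than in the $\gamma=0$ case; these are the expansions already performed in the two preceding lemmas on $N_0$ and on $N_2$, and it is precisely the matching of these two windows that forces the specific constants $3\sqrt{5}\pi/5$ and $3\sqrt{5}\pi/4$ appearing in the statement.
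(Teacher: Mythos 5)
Your proposal is correct and follows the paper's own proof essentially step by step: the same lower-bound chain $\cos(\cdot)\leq 1-\tfrac{18\pi^2\beta^2}{(n+3\pi\beta)^2}\leq u_n$, the same shifted function $G(n)=F\bigl(n+\tfrac{3\sqrt5\pi L}{4\log^2 L}\bigr)$, the same definitions of $N_1$ and $N_2=\lceil N_1\rceil$, the same base-case comparison via Lemma~\ref{lem: N2chengli gamma0.25}, and the same two-sided induction from $N_2$. The only small point worth making explicit is that the inequality $1-\tfrac{18\pi^2\beta^2}{(\frac{3\sqrt5\pi L}{5\log L}+N_\alpha)^2}\leq G(N_2)$ also uses the monotonicity of $G$ (since $N_2\geq N_1$), which the paper states and you implicitly rely on.
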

\begin{proof}
For the left hand side, we can easily check that 
\begin{align*}
\cos \left(2\pi\beta\l1-\left(\frac{n+N_{\alpha}}{n+N_{\alpha}+1}\right)^{3}\r\right)\leq 1-\frac{18\pi^2\beta^2}{(n+3\pi\beta)^2} \leq u_n
\end{align*}
For the right hand side, let $G(n)=\cos \mpt{2\pi\beta\l1-\left(\frac{n+\frac{3\sqrt{5}\pi L}{4\log^2 L}+N_{\alpha}}{n+\frac{3\sqrt{5}\pi L}{4\log^2 L}+N_{\alpha}+1}\right)^{3-\frac{(\log L)^2}{L}}\r}=F\big(n+\frac{3\sqrt{5}\pi L}{4\log^2 L}\big)$. We want to proof $u_n\leq G(n)$.
		
Let $N_1=N_0-\frac{3\sqrt{5}\pi L}{4\log^2 L}\in\left[\frac{3\sqrt{5}\pi L}{5\log L}-\frac{3\sqrt{5}\pi L}{4\log^2 L}, \frac{3\sqrt{5}\pi L}{5\log L}-1\right]$. When $n\geq N_1$, we have $n+\frac{3\sqrt{5}\pi L}{4\log^2 L}\geq N_0$, which means that
\[\begin{cases}G(n+1)\geq \varphi_1\big(G(n)\big),& n\geq N_1;\\
G(n+1)\leq \varphi_1\big(G(n)\big),& n\leq N_1.\end{cases}\]
Let $N_2=\lceil N_1\rceil$ be the least integer greater than or equal to $N_1$, it is easy to see that
\[\frac{3\sqrt{5}\pi L}{5\log L}-\frac{3\sqrt{5}\pi L}{4\log^2 L}\leq N_1\leq N_2\leq N_1+1\leq\frac{3\sqrt{5}\pi L}{5\log L}.\]
Because of the monotonicity of $G(n)$ and $\cref{lem: N2chengli gamma0.25}$, we can get
\begin{align*}G(N_2)&\geq G\mpt{\frac{3\sqrt{5}\pi L}{5\log L}-\frac{3\sqrt{5}\pi L}{4\log^2 L}}=\cos\mpt{2\pi\beta\left[1-\left(\frac{\frac{3\sqrt{5}\pi L}{5\log L}+N_{\alpha}}{\frac{3\sqrt{5}\pi L}{5\log L}+N_{\alpha}+1}\right)^{3-\frac{(\log L)^2}{L}}\right]}\geq u_{N_2}.
\end{align*}
Assuming that $u_n\leq G(n)$ holds for $n=k$.~If $k\geq N_2$, we have $k\geq N_1$ and 
\[u_{k+1}=\varphi_1(u_k)\leq\varphi_1(G_{k})\leq G_{k+1}.\]
		
Also, if $n=k\leq N_2$, we can get $k\leq N_1+1$ and 
\[\varphi_1(u_{k-1})=u_{k}\leq G(k)\leq \varphi_1\big(G({k-1})\big)\Longrightarrow u_{k-1}\leq G(k-1).\]
Therefore, we have the right hand side.
\end{proof}
Then as the same reasoning of \cref{subsec:proof limit of rL}, we can complete the proof by letting $N_L=\frac{3\sqrt{5}\pi L}{4\log^2 L}+N_{\alpha}$. 

\section{Proofs of Other Propositions in Section \ref{sec:choose alpha}}

\subsection{Proof of Theorem \ref{prop:early_stopping}}
We first present the following result:

\begin{proposition}[modified by Theorem 3 in \citet{belfer2024spectral}]\label{prop:converge to 1RNTK}
RNTK $\overline{r}^{(L)}$ for ResNets with the hyperparameter $\alpha=L^{-\gamma},\gamma\in(1/2,1]$, approaches the 1-hidden-layer RNTK uniformly in the interval $\x^\top\x'\in[-1,1]$, where $\x,\x'\in\mb{S}^{d-1}$; that is, let $\epsilon>0$, for any $L>c(\epsilon,\gamma)$, 
\begin{align*}
|\overline{r}^{(L)}(\x,\x')-\overline{r}^{(1)}(\x,\x')|\leq\epsilon,
\end{align*}
where $c$ is a constant depending on $\epsilon$ and $\gamma$.
\end{proposition}
\begin{remark}Theorem 3 in \citet{belfer2024spectral} addresses the case without bias terms. It can be straightforwardly extended to the setting considered in this paper, where the first layer incorporates bias terms. %More details are provided in \ref{subsec: proof converge to 1RNTK}.
\end{remark}
This Proposition shows that when $\gamma\in(1/2,1]$, RNTK tends to one-hidden-layer RNTK (see e.g., \citet{huang2020deep,belfer2024spectral}) as $L\to\infty$. 
Thus, the limit of RNTK has adaptability to real distributions and performs better than infinite-depth RNTK when $\alpha$ is an arbitrary constant or decay with increasing $L$ at a slow decay rate.

% \begin{proposition}[Theorem 1 in \citet{zhang2023optimality}]\label{prop:early:stopping}
%  Suppose Assumption \ref{assump:f_star} holds. For any given $\delta\in(0,1)$, if the training process is stopped at $t_{*} \propto n^{\frac{d}{2d-1}}$ for the kernel regression with the kernel $K$ and the eigen-decay rate $\beta>1$, then for sufficiently large $n$, there exists a constant $C$ independent of $\delta$ and $n$, such that
%  \begin{equation*}
%  \mathcal{E}(f_{t_{*}}^{K})\leq Cn^{-\frac{s\beta}{s\beta+1}}({6}/{\delta})^2
%  \end{equation*}
%  holds with probability at least $1-\delta$.
% \end{proposition}
The generalization capability of kernel regression is given by the following proposition:
\begin{proposition}[Theorem 1 in 
\citet{zhang2023optimality}]\label{prop:early:stopping}
Suppose Assumption \ref{assump:f_star} holds. For any given $\delta\in(0,1)$, if the training process is stopped at $t_{*} \propto n^{{\beta}/({s\beta+1})}$ for the kernel regression with the kernel $K$ and the eigen-decay rate $\beta>1$, then for sufficiently large $n$, there exists a constant $C$ independent of $\delta$ and $n$, such that
\begin{equation*}
\mathcal{E}(f_{t_{*}}^{K})\leq Cn^{-\frac{s\beta}{s\beta+1}}({6}/{\delta})^2
\end{equation*}
holds with probability at least $1-\delta$.
\end{proposition}
According to the aforementioned proposition, it suffices to verify that the eigenvalue decay rate on $\mb{S}^{d-1}$ is $d/(d-1)$ in order to complete the proof of \cref{prop:early_stopping}. From \cref{eq:normRNTKpre}, for $L=1$, the expression of the NTK can be derived as follows:
\[\overline{r}^{(1)}(\bm{x},\bm{x}')=\frac12\bk{\kappa_1\mpt{\tfrac{\x^\top\x'+1}{2}}+\tfrac{\x^\top\x'+1}{2}\cdot\kappa_0\mpt{\tfrac{\x^\top\x'+1}{2}}}=\mr{NTK}^{(1)}(\x^\top\x'),\]
where
\[\mr{NTK}^{(1)}(u)=\frac12\bk{\kappa_1\mpt{\tfrac{u+1}{2}}+\tfrac{u+1}{2}\cdot\kappa_0\mpt{\tfrac{u+1}{2}}}.\]
If a kernel function $K$ can be expressed as a function $\kappa$ of the dot product of the inputs $\x^\top\x'$ (such as $\overline{r}^{(1)}(\bm{x},\bm{x}')$), then it is called a dot-product kernel. For dot-product kernels, we have the following decomposition:
\begin{align}\label{eq:Mercer_theorem dot}
K(\bm{x},\bm{x}') = \textstyle\sum\limits_{k=0}^{\infty}\mu_k\textstyle\sum\limits_{h=1}^{N(d,k)}Y_{k,h}(\bm{x})Y_{k,h}(\bm{x}'),~\text{where}~N(d,k)= \frac{\Gamma(k+d-2)}{\Gamma(d-1)\Gamma(k)},
\end{align}
and $Y_{k,j}$ is the $k$-th spherical harmonic polynomial of degree $k$. In addition, $\mu_k$ can also be computed using the following result:
\begin{lemma}[Theorem 1 in \citet{bietti2020deep}]\label{lem:decay eigenvalue}
Let $\kappa : [-1,1] \to \mathbb{R}$ be a function that is $C^\infty$ on $(-1,1)$ and has the following asymptotic expansions around $\pm 1$:
\begin{align*}
\kappa(1 - t) &= p_1(t) + c_1 t^\nu + o(t^\nu),\\ %\label{eq:kappa_plus} \\
\kappa(-1 + t) &= p_{-1}(t) + c_{-1} t^\nu + o(t^\nu), %\label{eq:kappa_minus}
\end{align*}
for $t \geq 0$, where $p_1, p_{-1}$ are polynomials and $\nu > 0$ is not an integer. Also, assume that the derivatives of $\kappa$ admit similar expansions obtained by differentiating the above ones. Then, there is an absolute constant $C(d,\nu)$ depending on $d$ and $\nu$ such that:
\begin{itemize}
 \item For $k$ even, if $c_1 \not= -c_{-1}$: $\mu_k \sim (c_1 + c_{-1}) C(d,\nu) k^{-d-2\nu+1}$;
 \item For $k$ odd, if $c_1 \not= c_{-1}$: $\mu_k \sim (c_1 - c_{-1}) C(d,\nu) k^{-d-2\nu+1}$.
\end{itemize}
In the case $|c_1| = |c_{-1}|$, then we have $\mu_k = o(k^{-d-2\nu+1})$ for one of the two parities (or both if $c_1 = c_{-1} = 0$). If $\kappa$ is infinitely differentiable on $[-1,1]$ so that no such $\nu$ exists, then $\mu_k$ decays faster than any polynomial.
\end{lemma}
By comparing \eqref{eq:Mercer_theorem} and \eqref{eq:Mercer_theorem dot}, we can establish the relationship between the $j$-th eigenvalue of the kernel function $k$ and $\mu_l$ as follows:
\begin{align*}
\lambda_j=\mu_l, \quad \textstyle\sum\limits_{i=1}^{l-1}N(d,2i)\leq j\leq \textstyle\sum\limits_{i=1}^{l}N(d,2i),
\end{align*}
By Stirling approximation, we have $\Gamma(x)=Cx^{x-1/2}\exp(-x)(1+\mathcal{O}(1/{x}))$. Therefore, $N(d,k)$ is of order $k^{d-2}$ for large $k$. Therefore, the only remaining task in proving \cref{prop:early_stopping} is to show that $\mu_k\sim k^{-d}$ for $\mr{NTK}^{(1)}$.

Combine \cref{lem:decay eigenvalue} and the following expansions
\begin{align*}\mr{NTK}^{(1)}(1-t)&=1-\frac{t^{1/2}}{2\pi}+O(t^{1/2});\\
\mr{NTK}^{(1)}(-1+t)&=\frac1{2\pi}+0\cdot{t^{1/2}}+O(t^{1/2}),
\end{align*}
we can conclude that $\mu_k\sim k^{-d}$ for $\mr{NTK}^{(1)}$, which completes the proof.

\end{document}